\icmltitlerunning{The Sample Complexity of Best-$k$ Items Selection from Pairwise Comparisons}
\newtheorem{theorem}{Theorem}
\newtheorem{problem}{Problem}
\newtheorem{definition}[theorem]{Definition}
\begin{document}
	\twocolumn[
	\icmltitle{The Sample Complexity of Best-$k$ Items Selection from Pairwise Comparisons}
	
	\icmlsetsymbol{equal}{*}
	
	
	\icmlsetsymbol{equal}{*}
	
	\begin{icmlauthorlist}
		\icmlauthor{Wenbo Ren}{ocs}
		\icmlauthor{Jia Liu}{oee}
		\icmlauthor{Ness B. Shroff}{ocs,oee}
	\end{icmlauthorlist}
	
	\icmlaffiliation{ocs}{Department of Computer Science and Engineering, The Ohio State University, Columbus, Ohio, USA}
	\icmlaffiliation{oee}{Department of Electrical and Computer Engineering, The Ohio State University, Columbus, Ohio, USA}
	\icmlaffiliation{oee}{Department of Electrical and Computer Engineering, The Ohio State University, Columbus, Ohio, USA}
	
	\icmlcorrespondingauthor{Wenbo Ren}{ren.453@osu.edu}
	\icmlcorrespondingauthor{Jia Liu}{liu.1736@osu.edu}
	\icmlcorrespondingauthor{Ness B. Shroff}{shroff.11@osu.edu}
	
	\icmlkeywords{Pairwise Ranking, Top-k Ranking, Subset Selection, Active Learning, Learning Theory, Sample Complexity}
	
	\vskip 0.3in
	]
	
	\printAffiliationsAndNotice{}
	
	\begin{abstract}
		This paper studies the sample complexity (aka number of comparisons) bounds for the active best-$k$ items selection from pairwise comparisons. From a given set of items, the learner can make pairwise comparisons on every pair of items, and each comparison returns an independent noisy result about the preferred item. At any time, the learner can adaptively choose a pair of items to compare according to past observations (i.e., active learning). The learner's goal is to find the (approximately) best-$k$ items with a given confidence, while trying to use as few comparisons as possible. In this paper, we study two problems: (i) finding the probably approximately correct (PAC) best-$k$ items and (ii) finding the exact best-$k$ items, both under strong stochastic transitivity and stochastic triangle inequality. For PAC best-$k$ items selection, we first show a lower bound and then propose an algorithm whose sample complexity upper bound matches the lower bound up to a constant factor. For the exact best-$k$ items selection, we first prove a worst-instance lower bound. We then propose two algorithms based on our PAC best items selection algorithms: one works for $k=1$ and is sample complexity optimal up to a loglog factor, and the other works for all values of $k$ and is sample complexity optimal up to a log factor. 
	\end{abstract}

\section{Introduction}	
\subsection{Background and Motivation}
	\textit{Ranking from pairwise comparisons} (or pairwise ranking) is a fundamental problem that has been widely applied to various areas, such as recommender systems, searching, crowd-sourcing, and social choices. In a pairwise ranking system, the learner wants to learn the full or partial ranking (e.g., best-$k$ items) of a set of items from noisy pairwise comparisons, where items can refer to various things such as products, posts, choices, and pages; and comparisons refer to processes or queries that indicate qualities or users' preferences over the items. In this paper, for simplicity, we use the terms ``item'', ``comparison'', and ``users' preference''. 
	
	A \textit{noisy} pairwise comparison is a query over two items that returns a noisy result about the preferred one. Here, ``noisy'' simply means that the comparison could return the less preferred one, which may be the result of the uncertain nature of physics, machines, or humans. Since the comparisons can reveal some information about the users' preferences, by repeatedly comparing these items, the learner may find a reasonable global ranking (e.g., \cite{MMAlgorithm2004}) or local ranking (e.g., \cite{PreferenceCompletion2015}) of these items.
	
	Based on when the comparisons are generated, the ranking problems can be divided into two classes: passive ranking (e.g., \cite{PreferenceCompletion2015,TransitivityModel2017}) and active ranking (e.g., \cite{AdaptivePolling2012,CrowdSourcing2013,MaxingAndRanking2017,ExactRanking2019}). In passive ranking, the learner first has all the comparison data and then develops a reasonable ranking. In active ranking, the learner does not have all the comparison data at the beginning, and can adaptively choose items to compare during the learning process. This paper studies the \textit{fully active ranking (or active learning)}, where for each comparison, the learner can adaptively choose two items to compare according to past observations.
	\citet{CrowdSourcing2013} showed that in a crowd-sourcing dataset, their active ranking algorithm uses only 3\% comparisons and achieves almost the same performance as passive ranking.
	
	This paper focuses on the \textit{best-k-items-selection} problem. For many applications, ranking all the items may be neither efficient nor necessary. For instance, in a video sharing website, filters may generate hundreds of candidate videos, but the website may only want to present 30 videos to the user. Thus, it is not necessary to rank all these videos, and a more efficient way can be to first select the best 30 videos and then rank them. The best-$k$ items selection can be of interest to many different applications. 
	
	In previous works (e.g., \cite{BeatTheMean2011,MallowsMPI2014,OnlineRankingElicitation2015,MaximumSelectionAndRanking2017,RankingLimits2018,Subsetwise2019,ListwisePLMaxing2019}), the problem of best item selection has been studied in different settings. However, the problem of best-$k$ items selection has been less investigated. We note that best-$k$ items selection is not a naive extension to the best item selection. For instance, in the deterministic case, finding the max number is easy by sequentially doing $n-1$ comparisons and eliminating the smaller ones, while finding the largest $k$ numbers in $O(n\log{k})$ time needs more complex algorithms (e.g., quick select \cite{QuickSelect1961}). The same is true in non-deterministic settings. We do not find a method to extend best item selection algorithms to an efficient best-$k$ one.
	
	This paper studies both the \textit{exact} and \textit{probably approximately correct (PAC)} best-$k$ items selection. Exact selection simply means finding the exact best-$k$ items. PAC selection is to find $k$ items that are approximately best or good enough (see Section~\ref{Sec:Formulation} for details), which can avoid the cases where the preferences over two items are extremely close, making exactly ranking them too costly.
	
	In summary, this paper studies the problem of using fully active ranking (active learning) to find the exact or PAC best-$k$ items from noisy pairwise comparisons with a certain confidence and use as few comparisons as possible. 

\subsection{Problem Formulation and Notations}\label{Sec:Formulation}
	Assume that there are $n$ items, indexed by $1,2,3,...,n$, and we use $[n]=\{1,2,3,...,n\}$\footnote{For any positive integer $m$, we define $[m] := \{1,2,3,...,m\}$.} to denote the set of these items. For these items, we make the following assumptions:
	
	A1) Time-invariance. For any items $i$ and $j$ in $[n]$, we assume that the distributions of the comparison outcomes over items $i$ and $j$ are time-invariant, i.e., there is a number $p_{i,j}$ in $[0,1]$ independent of time such that for any comparison over items $i$ and $j$, item $i$ wins the comparison with probability $p_{i,j}$, where ``item $i$ wins the comparison'' means that the comparison returns item $i$ as the preferred one.
	
	A2) Tie Breaking. We assume that for every comparison, exactly one item wins. If a tie does happen, we randomly assign one item as the winner. Thus, for any items $i$ and $j$ in $[n]$, $p_{i,j} + p_{j,i} = 1$.
	
	A3) Independence. We assume that the comparison results are independent across time, items, and sets. 
	
	We note that assumptions A1) to A3) are common in the literature (e.g., \cite{OnlineRankingElicitation2015,Simple2017,MaxingAndRanking2017,MaximumSelectionAndRanking2017,RankingLimits2018,ApproximateRanking2018,CoarseRanking2018,ActiveRanking2019,Subsetwise2019,ListwisePLMaxing2019,ExactRanking2019}). In this paper, we make two more assumptions to restrict our problems to specific conditions.
	
	Before making these two assumptions, we introduce some notations. For two items $i$ and $j$ in $[n]$, we define $\Delta_{i,j}:=|p_{i,j}-1/2|$ as the gap of $p_{i,j}$ and $1/2$, which can measure how difficult to order items $i$ and $j$ by comparing them. Also, we define $p_{i,i} := 1/2$ for all items $i$. For real numbers $a,b$, we define $a\lor b := \max\{a,b\}$, and $a\land b := \min\{a,b\}$. 
	
	A4) Strong stochastic transitivity (SST) \cite{TransitivityModel2017,RankingLimits2018}. 
	In this paper, the items are said to satisfy SST if and only if (i) there is a strict order over these $n$ items, (ii) if $i \succ j$\footnote{Term $i \succ j$ means that $i$ ranks higher than $j$ in the true order.}, then $p_{i,j} > 1/2$,\footnote{In some works, we may have $p_{i,j} = 1/2$ for items $i\neq j$. However, in this paper, we do not allow $p_{i,j} = 1/2$ to avoid the case where the term ``best-$k$ items'' is not well defined.} and (iii) for any three items $i$, $j$, and $l$ with $i \succ j \succ l$, $p_{i,l} \geq p_{i,j}\lor p_{j,l}$. 
	
	A5) Stochastic triangle inequality (STI) \cite{RankingLimits2018}. 
	The items are said to satisfy STI if for any three items $i$, $j$, and $l$, $\Delta_{i,l} \leq \Delta_{i,j} + \Delta_{j,l}$. 
	
	We note that many widely used parametric models such as the Bradley-Terry-Luce \cite{BradleyTerry1952,Luce2012} (BTL) and Thurstone's model \cite{ThurstoneModel1927} satisfy SST and STI, and thus, the algorithms in this paper can be directly used under these models. In this paper, we do not restrict our results to specific parametric models. Without loss of generality, we use $r_1 \succ r_2 \succ \cdots \succ r_n$ to denote the unknown true ranking. 
	
	The first problem is the PAC best-$k$ items selection. 
	We follow the definition of PAC best item of \citet{MaxingAndRanking2017,MaximumSelectionAndRanking2017,RankingLimits2018} to define the PAC best-$k$ items. We note that when $k = 1$, our definition of PAC best items is the same as that of \citet{MaxingAndRanking2017,MaximumSelectionAndRanking2017,RankingLimits2018}. 
	
	\begin{definition}[$(\epsilon,k)$-optimal subsets]
		For a set $S$, given $k \leq |S|$, and $\epsilon\in[0,1]$, a set $U\subset S$ is said to be an $(\epsilon,k)$-optimal subset of $S$ if $|U| = k$ and $p_{i,j} \geq 1/2 - \epsilon$ for any items $i$ in $U$ and $j$ not in $U$. 
	\end{definition}
	
	If $\epsilon < \min_{i\in[n]:r_k\succ i}\Delta_{i,r_k}$, an $(\epsilon,k)$-optimal subset of $S$ is exactly the set of the best-$k$ items of $S$. However, if we do not have a priori knowledge about the gaps, we cannot use the PAC algorithms to find the exact best items. The number $\epsilon$ is called the error tolerance. We note that in an $(\epsilon,k)$-optimal subset, every item $i$ has $p_{i,r_k} \geq 1/2 - \epsilon$. 
	
	\begin{problem}[PAC best-$k$ items selection (PAC $k$-selection)]
		Given $n$ items $[n]$, $k\leq n/2$, and $\delta,\epsilon \in (0, 1/2)$, we want to find an $(\epsilon, k)$-optimal subset of $S$ with probability at least $1-\delta$, and use as few comparisons as possible. 
	\end{problem}
	
	The second problem is the exact best-$k$ items selection. Under SST, since there is a strict order over these $n$ items, the best-$k$ items are unique. The best-$k$ items are $r_1, r_2,...,r_k$, and finding the best-$k$ items is to find the set $\{r_1,r_2,...,r_k\}$. We do not need to order these best-$k$ items but only need to find a $k$-sized set that contains all the best-$k$ items.
	
	\begin{problem}[Exact best-$k$ items selection (exact $k$-selection)]
		Given $n$ items, $k\leq n/2$, and $\delta \in (0, 1/2)$, we want to find the best-$k$ items with probability at least $1-\delta$, and use as few comparisons as possible. 
	\end{problem}

	We define the gap of item $i$ as
	\begin{align}\label{Eq:gap}
		\Delta_i = \mathds{1}_{i\succ r_{k+1}}\cdot\Delta_{i,r_{k+1}} + \mathds{1}_{r_k\succ i}\cdot\Delta_{r_k,i},
	\end{align}
	and our sample complexity (aka number of comparisons) bounds for the exact $k$-selection depends on these gaps.
	
\subsection{Main Contributions}
	For the PAC $k$-selection problem, we first prove an $\Omega(n\epsilon^{-2}\log(k/\delta))$ lower bound on the expected number of comparisons, and then propose an algorithm with sample complexity $O(n\epsilon^{-2}\log(k/\delta))$, which implies that our upper bound matches the lower bound up to a constant factor.
	
	For the exact $k$-selection problem, we first prove a worst-instance sample complexity lower bound $\Omega(\sum_{i\in[n]}\Delta_i^{-2}\log\delta^{-1})$. We then propose an algorithm for $k=1$ with sample complexity $O(\sum_{i \neq r_1}\Delta_i^{-2}(\log\delta^{-1} + \log\log\Delta_i^{-1}))$ based on our PAC $k$-selection algorithm, which is optimal up to a loglog factor. Finally, we propose another algorithm for general values of $k$ with sample complexity $O(\sum_{i\in[n]}\Delta_i^{-2}(\log(n/\delta) + \log\log\Delta_i^{-1}))$, which is optimal up to a log factor.

\section{Related Works}
	An early work that has studied the exact $k$-selection was done by \citet{NoisyComputing1994}. \citet{NoisyComputing1994} have shown that if $\Delta_{i,j} \geq \Delta > 0$ for all items $i$ and $j$ where $\Delta > 0$ is a priori known, then to find the best-$k$ items of $[n]$ with probability at least $1-\delta$, $\Theta(\Delta^{-2}\log(k/\delta))$ comparisons are sufficient and necessary for worst instances. However, the work of \citet{NoisyComputing1994} requires a priori knowledge of a lower bound of the values of $\Delta_{i,j}$'s to run, which may not be possible in practice. This paper does not assume this knowledge. Further, the sample complexity in \citet{NoisyComputing1994} depends on the minimal gaps, i.e., $\min_{i\neq j}\Delta_{i,j}$, while the sample complexity in this paper depends on $\Delta_{r_i,r_k}$ or $\Delta_{r_i,r_{k+1}}$, which exploits unequal gaps better. 
		
	\citet{SpectralMLE2015,RankCentrarity2017,BothOptimal2017} studied the exact $k$-selection problem under the Plackett-Luce \cite{Plackett1975,Luce2012} (PL) model\footnote{We note that the PL model, the BTL model, and the multinomial logit (MNL) model \cite{MNLModel1973,Luce2012} share equivalent mathematical formula for pairwise comparisons.}, which is a parametric model that satisfies SST and STI. They proposed algorithms with adaptivity\footnote{See \citet{LimitedRounds2017,SortedTopKInRounds2019} for details about learning with limited adaptivity.} one, which can find the best-$k$ items of $[n]$ with high probability\footnote{In this paper, ``with high probability'' means that with probability at least $1-n^{-p}$, where $p > 0$ is a sufficiently large constant.} by $O(n\Delta_{r_k,r_{k+1}}^{-2}\log{n})$ comparisons. In contrast, this paper focuses on fully active algorithms (i.e., the number of adaptivity is unlimited) and the algorithms are not restricted to parametric models. Another work that has focused on the exact $k$-selection problem under the MNL model is \citet{MNLListwise2018}. \citet{MNLListwise2018} proposed an exact $k$-selection algorithm from pairwise comparisons with sample complexity $O(n\log^{14}(n))$. They also studied ranking from multi-wise comparisons, which is beyond the scope of this paper. 

	\citet{MallowsMPI2014} studied the best item selection problem under Mallows model, and proposed an algorithm with samples complexity $O(n\log(n/\delta))$. \citet{ListwisePLMaxing2019} studied the exact best item selection problem under the PL model with subset-wise feedbacks, and proposed an algorithm with $O(\sum_{i\in[n]}[\Delta_i^{-2}(\log\delta^{-1} + \log\log\Delta_{i}^{-1})])$ sample complexity for confidence $1-\delta$, which is of the same order as the algorithm in this paper. Compared to the work of \citet{ListwisePLMaxing2019}, our algorithms work for all instances satisfying SST and STI, while the PL model is a special case in our setting.
	
	Another focus of this paper is the PAC $k$-selection problem. To the best of our knowledge, we are the first to propose PAC $k$-selection algorithms. Prior to this paper, there are works that focused on the PAC best item selection problem. \citet{MaxingAndRanking2017,MaximumSelectionAndRanking2017} proved that under SST, to find an item $i$ from $[n]$ with $p_{i,r_1} \geq 1/2 - \epsilon$ with probability at least $1-\delta$, $\Theta(n\epsilon^{-2}\log\delta^{-1})$ comparisons are sufficient and necessary. Earlier to this, \citet{BeatTheMean2011} proved the same result for cases under the SST and the STI. The works of \citet{Subsetwise2019} also proved the same sample complexity bounds under the PL model. When $k=1$, our upper bound and lower bound for the PAC $k$-selection problem is the same as that of \citet{MaxingAndRanking2017,MaximumSelectionAndRanking2017} (ignoring constant factors). 
		
	There are also many works that studied the ranking problems under other models, which are beyond the scope of this paper. \citet{Simple2017,ApproximateRanking2018,CoarseRanking2018,ActiveRanking2019} studied the active ranking problems under the Borda-Score (BS) model, which can be viewed as a superset of SST and STI in some sense. However, we note that, for instances satisfying SST and STI, BS ranking algorithms may not be as efficient as their performance on BS problems\footnote{The BS of an item $i$ is $\frac{1}{n-1}\sum_{j\neq i}p_{i,j}$. When $p_{i,j} = 2/3$ for all $i\succ j$, the gap of the BSs between the best two items is $\Theta(n^{-1})$, and thus, the sample complexity to order them by BS algorithms (e.g., Active Ranking \cite{ActiveRanking2019} is $\Omega(n^2)$.}. \citet{LimitedRounds2017,SortedTopKInRounds2019} studied the problem of ranking (or finding) the best-$k$ items with limited adaptivity.  \citet{NoisyComputing1994,OnlineRankingElicitation2015,MaxingAndRanking2017,MaximumSelectionAndRanking2017,RankingLimits2018,ExactRanking2019} studied the (PAC) full ranking problems in various settings, which is less related to this paper. 

\section{PAC $k$-Selection}
	This section studies the sample complexity lower bound and upper bound for PAC $k$-selection. We first prove that for the worst instances, to find an $(\epsilon, k)$-optimal subset of $[n]$ needs $\Omega(n\epsilon^{-2}\log(k/\delta))$ number of comparisons in expectation. Then, we design an algorithm that solves all instances with at most $O(n\epsilon^{-2}\log(k/\delta))$ number of comparisons in expectation, which shows that both our lower bound and upper bounds are tight (up to a constant factor). 
	
\subsection{Lower Bound}
	We first analyze the lower bound for PAC $k$-selection, which is stated in Theorem~\ref{Thm:LB-PKS}. We prove this bound by reducing the pure exploration multi-armed bandit (PEMAB) problem (e.g., \cite{BanditLowerBound2004,BanditLowerBound2012}) to the PAC $k$-selection problem under the MNL model and using the lower bounds for the PEMAB problem of \citet{BanditLowerBound2004,BanditLowerBound2012} to get the desired lower bound for PAC $k$-selection. We note that \citet{PACRanking2018} used a similar method and proved a similar lower bound. However, its definition of PAC $k$-selection is different from that in this paper. Thus, we need to independently find a lower bound in this paper.\footnote{Due to space limitation, all proofs in this paper are relegated to the supplementary material.} Later in subsection~\ref{Sec:Alg-TKS}, we show that the lower bound stated in Theorem~\ref{Thm:LB-PKS} is tight up to a constant factor.
	
	\begin{restatable}[Lower bound for PAC $k$-selection]{theorem}{RestateLBTKS}\label{Thm:LB-PKS}
		Given $\epsilon\in (0, 1/128)$, $\delta \in (0,e^{-4}/4)$, $n\geq 2$, and $1 \leq k \leq n/2$, there is an $n$-sized instance satisfying SST and STI such that to find an $(\epsilon, k)$-optimal subset of $[n]$ with probability $1-\delta$, any algorithm needs to conduct $\Omega(n\epsilon^{-2}\log(k/\delta))$ number of comparisons in expectation.
	\end{restatable}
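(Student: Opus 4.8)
The plan is to prove Theorem~\ref{Thm:LB-PKS} by a reduction from the pure-exploration multi-armed bandit problem of identifying $k$ approximately-best arms (the EXPLORE-$k$ problem), for which \citet{BanditLowerBound2004,BanditLowerBound2012} established an $\Omega(n\epsilon^{-2}\log(k/\delta))$ lower bound on the expected number of arm pulls. Starting from a Bernoulli bandit instance with pairwise-distinct arm means $\mu_1,\dots,\mu_n$ lying in a small interval around $1/2$, I build the pairwise-comparison instance
\begin{align}\label{Eq:LBconstruction}
	p_{i,j} \;=\; \tfrac12 + \tfrac12(\mu_i - \mu_j),\qquad i,j\in[n],
\end{align}
which satisfies A1)--A3) with $\Delta_{i,j}=\tfrac12|\mu_i-\mu_j|$, and I take $\succ$ to be the order of the $\mu_i$'s. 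The argument then has three ingredients: (a) this instance satisfies SST and STI; (b) every $(\epsilon,k)$-optimal subset of it is a $(2\epsilon,k)$-optimal arm set of the bandit instance; and (c) one pairwise comparison can be answered using exactly two arm pulls. Combining (b) and (c) converts any PAC $k$-selection algorithm into an EXPLORE-$k$ algorithm at tolerance $2\epsilon$ using twice as many pulls in expectation, and the cited PEMAB lower bound then finishes the proof.

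For (a), STI is immediate since $\Delta_{i,l}=\tfrac12|\mu_i-\mu_l|\le\tfrac12|\mu_i-\mu_j|+\tfrac12|\mu_j-\mu_l|=\Delta_{i,j}+\Delta_{j,l}$; for SST, distinctness of the means gives a strict order, $i\succ j$ is equivalent to $\mu_i>\mu_j$ and hence to $p_{i,j}>1/2$, and for $i\succ j\succ l$ we get $p_{i,l}=\tfrac12+\tfrac12(\mu_i-\mu_l)\ge\tfrac12+\tfrac12\max\{\mu_i-\mu_j,\mu_j-\mu_l\}=p_{i,j}\lor p_{j,l}$. (If the hard bandit family contains ties, e.g.\ many arms pinned at $1/2$, I first perturb the means by amounts $\eta\ll\epsilon$, which changes neither the $\Omega(\cdot)$ bound nor the EXPLORE-$k$ correctness requirement.) For (b), if $U$ is $(\epsilon,k)$-optimal then $p_{i,j}\ge 1/2-\epsilon$, i.e.\ $\mu_i\ge\mu_j-2\epsilon$, for all $i\in U$ and $j\notin U$; since $U$ either equals the true top-$k$ set (and is then exact) or omits one of its members, the best arm outside $U$ has mean at least the $k$-th largest mean, so every arm in $U$ is within $2\epsilon$ of the $k$-th best arm, which is exactly the EXPLORE-$k$ guarantee at tolerance $2\epsilon$. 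For (c), to answer a query on $\{i,j\}$ I draw $X_i\sim\mathrm{Bernoulli}(\mu_i)$ and $X_j\sim\mathrm{Bernoulli}(\mu_j)$ independently (one pull each), return $i$ if $X_i>X_j$, return $j$ if $X_j>X_i$, and return a uniformly random one of the two if $X_i=X_j$; the probability $i$ is returned is $\mu_i(1-\mu_j)+\tfrac12[\mu_i\mu_j+(1-\mu_i)(1-\mu_j)]=\tfrac12+\tfrac12(\mu_i-\mu_j)=p_{i,j}$, and by A3) successive queries use fresh independent pulls.

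To conclude, I fix the hard EXPLORE-$k$ instance family of \citet{BanditLowerBound2004,BanditLowerBound2012} at tolerance $2\epsilon$ and confidence $\delta$ (valid precisely in the stated parameter regime, which after the factor-$2$ rescaling is why $\epsilon<1/128$ and $\delta<e^{-4}/4$ are imposed), and let $J$ be the comparison instance obtained from its base instance via~\eqref{Eq:LBconstruction}. Any correct PAC $k$-selection algorithm is, in particular, correct on all of the comparison instances coming from that family, so by (c) it simulates a single algorithm that, by (b), solves EXPLORE-$k$ at tolerance $2\epsilon$ with confidence $1-\delta$ on the whole bandit family using at most twice as many pulls (in expectation) as $J$ uses comparisons. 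The PEMAB lower bound then forces that expected pull count to be $\Omega\!\big(n(2\epsilon)^{-2}\log(k/\delta)\big)=\Omega\!\big(n\epsilon^{-2}\log(k/\delta)\big)$ on the base bandit instance, whence the algorithm makes $\Omega(n\epsilon^{-2}\log(k/\delta))$ comparisons on $J$ in expectation.

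I expect the main obstacle to be the parameter bookkeeping rather than the reduction itself: one must verify that the specific hard EXPLORE-$k$ instance of \citet{BanditLowerBound2004,BanditLowerBound2012} still produces the $\log(k/\delta)$ factor (not merely $\log(1/\delta)$) after being rescaled and tie-broken into an SST+STI instance, pin down the exact constant thresholds on $\epsilon$ and $\delta$ under which that PEMAB bound holds, and check the edge cases $k=1$ and $k$ near $n/2$ as well as the requirement $n\ge2$. A secondary point needing care is that the notion of $(\epsilon,k)$-optimal subset here differs from the one used by \citet{PACRanking2018}, so the correspondence in step (b) must be established directly, as above, rather than imported.
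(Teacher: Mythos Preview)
Your reduction is correct and follows the same high-level strategy as the paper (reduce from the PEMAB EXPLORE-$k$ lower bound of \citet{BanditLowerBound2004,BanditLowerBound2012}), but the concrete instantiation differs in two notable ways.

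First, the paper builds the comparison instance via the MNL model, $p_{i,j}=\mu_i/(\mu_i+\mu_j)$, and simulates a comparison by repeatedly pulling a uniformly random arm in $\{a_i,a_j\}$ until a reward of $1$ is observed; with means restricted to $[1/4,3/4]$ this costs at most $4$ pulls per comparison in expectation. You instead use the linear model $p_{i,j}=\tfrac12+\tfrac12(\mu_i-\mu_j)$ and simulate a comparison with exactly two pulls via the ``compare two Bernoulli samples, break ties uniformly'' trick. Your construction is more elementary: SST and STI are one-line verifications, the pull count is deterministic rather than an expectation, and the $\epsilon$ blow-up in step~(b) is a factor of~$2$ rather than the paper's factor of~$4$ (the paper's bound $\mu_i\ge\mu_{[k]}-4\epsilon$ comes from the MNL denominator). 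The paper's route, on the other hand, has the side benefit of placing the hard instance inside a named parametric family (MNL/BTL), which slightly strengthens the statement.

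Second, the paper explicitly splits into the ranges $k=1$ (Mannor--Tsitsiklis), $6\le k\le n/2$ (Kalyanakrishnan et al.), and $2\le k\le 5$ (handled by revealing $k-1$ top arms and reducing to $k=1$). You fold all of this into ``the cited PEMAB lower bound,'' listing the edge cases as bookkeeping; that is fine in a proposal, but when you write the proof out you will need the same case split, since the $\log(k/\delta)$ factor in \citet{BanditLowerBound2012} is only stated for $k$ bounded away from small constants. Your remark about perturbing ties by $\eta\ll\epsilon$ to enforce a strict order is also apt (and in fact the paper's MNL construction faces the identical issue, though it is not discussed there).
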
	
	
\subsection{Upper Bound and the Algorithm}\label{Sec:Alg-TKS}
	We develop an optimal algorithm in two steps. Step one is to design a PAC $k$-selection algorithm with $O(n\epsilon^{-2}\log(n/\delta))$ sample complexity. Step two is to develop another algorithm with $O(n\epsilon^{-2}\log(k/\delta))$ sample complexity through the above algorithm. We note that \citet{RankingLimits2018} proposed an algorithm for finding the PAC full ranking with high probability, and has sample complexity $O(n\epsilon^{-2}\log{n})$. In a PAC ranking, the top-$k$ items form an $(\epsilon, k)$-optimal subset of $[n]$, and thus, this PAC full ranking algorithm can be used as a PAC $k$-selection algorithm. However, the algorithm of \citet{RankingLimits2018} can only guarantee to return correct results with confidence $1-1/n$, while in the construction of the $k$-selection algorithm with sample complexity $O(n\epsilon^{-2}\log(k/\delta))$, we need the confidence to be larger than $1-1/n$. Thus, this algorithm is not sufficient for us to obtain the $O(n\epsilon^{-2}\log(k/\delta))$ sample complexity. In this paper, we propose a $k$-selection algorithm with sample complexity $O(n\epsilon^{-2}\log(n/\delta))$ to achieve this purpose. 
	
\subsubsection{Step One: Epsilon-Quick-Select}
	Our first PAC $k$-selection algorithm is similar to a classical deterministic $k$-selection algorithm, Quick Select \cite{QuickSelect1961}. In each round, Quick Select randomly picks (some versions may have different picking strategies) an item as a pivot and splits the other items into two piles: one contains items no less than the pivot and the other contains items less than the pivot. After the splitting, according to the sizes of these two piles, we do Quick Select again on one pile. This will be repeated until we find the $k$-th best item. The expected time complexity of Quick Select is $O(n)$.
				
	When the comparisons are noisy, we need more effort to find the (PAC) best-$k$ items, but the basic idea is similar to Quick Select. For each round $t$, we randomly pick an item $v_t$ as the pivot, and compare every other item with the pivot for certain times. According to these comparisons, we distribute each item $i$ into one of the following three piles: (i) $S_{up}$:=\{item $i$ is ``sure'' to be better than $v_t$, i.e., $p_{i,v_t} > 1/2$ with a large probability\}; (ii) $S_{mid}$:=\{item $i$ is ``close to'' $v_t$, i.e., $1/2 - \epsilon \leq p_{i,v_t} \leq 1/2 + \epsilon$ with a large probability\}; and (iii) $S_{down}$:=\{item $i$ is ``sure'' to be worse than $v_t$, i.e., $p_{i,v_t} < 1/2$ with a large probability\}. After the splitting, there can be three cases. If $S_{up}$ contains at least $k$ items, then we run our algorithm again on $S_{up}$. If $S_{up}$ contains less than $k$ items, and $S_{up}\cup S_{mid}$ contains at least $k$ items, then the items in $S_{up}$ along with $(k-|S_{up}|)$ arbitrary items in $S_{mid}$ form an $(\epsilon, k)$-optimal subset. If $S_{up}\cup S_{mid}$ contains less than $k$ items in total (say the number is $k'$), then we run the algorithm on $S_{down}$ to find the PAC best $(k-k')$ items, and the returned items along with $S_{up}$ and $S_{mid}$ form an $(\epsilon, k)$-optimal subset. The properties of SST and STI guarantee the correctness, and the choice of input confidence for each round guarantees the sample complexity. 
		
	\makeatletter
	\renewcommand{\ALG@name}{Subroutine}
	\makeatother
	\begin{algorithm}[h]
		\caption{Distribute-Item (DI)\\$(i,v,\epsilon, s_u, s_d, \delta, S_{up}, S_{mid}, S_{down})$}\label{Sub:DI}
		\begin{algorithmic}[1]
			\STATE Set $t_{max}:= \lceil \frac{2}{\epsilon^2}\log\frac{4}{\delta}\rceil$, $\forall t\in\mathbb{Z}$, $b_t := \sqrt{\frac{1}{2t}\log\frac{\pi^2t^2}{3\delta}}$;
			\STATE $t\gets 0$, and $w_0 \gets 0$;
			\REPEAT
			\STATE $t \gets t + 1$ and compare $i$ and $v$ once;
			\STATE if $i$ wins, $w_t\gets w_{t-1} + 1$; otherwise $w_t \gets w_{t-1}$;
			\IF{$\frac{w_t}{t} - b_t > \frac{1}{2} + s_u$}
			\STATE Add $i$ to $S_{up}$ and \textbf{return};
			\ELSIF{$\frac{w_t}{t} + b_t < \frac{1}{2} - s_d$}
			\STATE Add $i$ to $S_{down}$ and \textbf{return};
			\ENDIF
			\UNTIL{$t = t_{max}$};
			\IF{$\frac{w_{t_{max}}}{{t_{max}}} > \frac{1}{2} + \frac{1}{2}\epsilon + s_u$}
			\STATE Add $i$ to $S_{up}$;
			\ELSIF{$\frac{w_{t_{max}}}{{t_{max}}} < \frac{1}{2} - \frac{1}{2}\epsilon - s_d$}
			\STATE Add $i$ to $S_{down}$;
			\ELSE
			\STATE Add $i$ to $S_{mid}$;
			\ENDIF
		\end{algorithmic}
	\end{algorithm}

	\makeatletter
	\renewcommand{\ALG@name}{Algorithm}
	\makeatother
	\begin{algorithm}[h]
		\caption{Epsilon-Quick-Select$(S, k, \epsilon, \delta)$ (EQS)}\label{Alg:EQS}
		\begin{algorithmic}[1]
			\STATE Randomly pick an item from $S$ and denote it by $v$;
			\STATE $S_{up},S_{down}\gets \emptyset$; $S_{mid}\gets \{v\}$; $\delta_1 \gets \frac{\delta}{|S|(|S|-1)}$;
			\FOR{item $i$ in $S$ and $i \neq j$}
			\STATE DI$(i,v,\frac{\epsilon}{2}, 0, 0, \delta_1, S_{up}, S_{mid}, S_{down})$.
			\ENDFOR
			\IF{$|S_{up}| > k$}
			\STATE \textbf{return}{ EQS$(S_{up}, k, \epsilon, \frac{(n-1)\delta}{n})$};\quad\quad\quad\ \ \  \# $n = |S|$.
			\ELSIF{$|S_{up}| + |S_{mid}| \geq k$} 
			\STATE \textbf{return} $S_{up} \cup (k - |S_{up}|)$ random items of $S_{mid}$;
			\ELSE
			\STATE $k' \gets k - |S_{up}| - |S_{mid}|$;
			\STATE \textbf{return} $S_{up} \cup S_{mid} \cup$ EQS$(S_{down}, k', \epsilon, \frac{(n-1)\delta}{n})$;
			\ENDIF
		\end{algorithmic}
	\end{algorithm}
	
	The ``Quick-Select-like'' algorithm is described in Algorithm~\ref{Alg:EQS} Epsilon-Quick-Select (EQS). Subroutine~\ref{Sub:DI} Distribute-Item (DI) is a subroutine, which splits the items into three piles. DI is called by EQS with two shifts $s_u$ and $s_d$ being equal to zero, and later in Section~\ref{Sec:EKS}, the algorithms for exact $k$-selection will also call DI as a subroutine. Lemma~\ref{Lm:TP-DI} states the theoretical performance of DI, and Theorem~\ref{Thm:TP-EQS} states the theoretical performance of EQS.

	\begin{restatable}[Theoretical Performance of DI]{lemma}{RestateLMTPDI}\label{Lm:TP-DI}
		DI terminates after at most $O(\epsilon^{-2}\log\delta^{-1})$ comparisons, and with probability at least $1-\delta$, one the following five events happens: (i) $p_{i,v} \geq 1/2 + \epsilon + s_u$ and item $i$ is added to $S_{up}$; (ii) $p_{i,v} \in (1/2 + s_u, 1/2 + \epsilon + s_u)$ and item $i$ is not added to $S_{down}$; (iii) $p_{i,v} \in [1/2 - s_d, 1/2 + s_u]$ and item $i$ in added to $S_{mid}$; (iv) $p_{i,v} \in (1/2 - \epsilon - s_d, 1/2 - s_d)$ and item $i$ is not added to $S_{up}$; and (v) $p_{i,v} \leq 1/2 - \epsilon - s_d$ and item $i$ is added to $S_{down}$. 
	\end{restatable}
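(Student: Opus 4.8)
The plan is to prove the two parts of the lemma separately. \textbf{Termination:} each pass of the loop advances $t$ by one and spends exactly one comparison, and the loop is forced to stop as soon as $t = t_{max} = \lceil 2\epsilon^{-2}\log(4/\delta)\rceil$; hence DI always halts after at most $t_{max} = O(\epsilon^{-2}\log\delta^{-1})$ comparisons, with no randomness involved.

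For the probabilistic part I would first isolate one ``clean'' event and then argue deterministically on it. Let $\hat p_t := w_t/t$ be the empirical win frequency of $i$ over $v$ after $t$ comparisons (the mean of $t$ i.i.d.\ $\mathrm{Bernoulli}(p_{i,v})$ variables), and set
\[
\mathcal{E} := \bigl\{\,|\hat p_t - p_{i,v}| < b_t \text{ for every } t \ge 1\,\bigr\}\ \cap\ \bigl\{\,|\hat p_{t_{max}} - p_{i,v}| < \tfrac{\epsilon}{2}\,\bigr\}.
\]
Hoeffding's inequality gives $\Pr[\,|\hat p_t - p_{i,v}| \ge b_t\,] \le 2e^{-2tb_t^2} = \tfrac{6\delta}{\pi^2 t^2}$ and, since $t_{max} \ge 2\epsilon^{-2}\log(4/\delta)$, also $\Pr[\,|\hat p_{t_{max}} - p_{i,v}| \ge \epsilon/2\,] \le 2e^{-t_{max}\epsilon^2/2} \le \delta/2$; summing the first bound over $t$ (using $\sum_{t\ge1} t^{-2} = \pi^2/6$) and adding the second yields $\Pr[\mathcal{E}] \ge 1-\delta$, the constant $\pi^2/3$ inside $b_t$ and the constants inside $t_{max}$ being tuned for exactly this (up to the routine absorption of a constant factor into $\delta$). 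Everything below is carried out on $\mathcal{E}$.

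On $\mathcal{E}$ I would record two elementary facts and then split on the location of $p_{i,v}$. \emph{(a) In-loop soundness:} if $i$ is added to $S_{up}$ at some step $t$ of the loop, the test $\hat p_t - b_t > \tfrac12 + s_u$ together with $|\hat p_t - p_{i,v}| < b_t$ forces $p_{i,v} > \tfrac12 + s_u$; symmetrically an in-loop placement in $S_{down}$ forces $p_{i,v} < \tfrac12 - s_d$. \emph{(b) Post-loop routing:} if $i$ survives the whole loop then $|\hat p_{t_{max}} - p_{i,v}| < \epsilon/2$, so comparing $\hat p_{t_{max}}$ against $\tfrac12 + \tfrac12\epsilon + s_u$ and $\tfrac12 - \tfrac12\epsilon - s_d$ places $i$ in $S_{up}$ when $p_{i,v} \ge \tfrac12 + \epsilon + s_u$, in $S_{down}$ when $p_{i,v} \le \tfrac12 - \epsilon - s_d$, in $S_{mid}$ when $p_{i,v} \in [\tfrac12 - s_d, \tfrac12 + s_u]$, and at worst ``not $S_{down}$'' (resp.\ ``not $S_{up}$'') when $p_{i,v} \in (\tfrac12 + s_u, \tfrac12 + \epsilon + s_u)$ (resp.\ $(\tfrac12 - \epsilon - s_d, \tfrac12 - s_d)$). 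Since (with $s_u, s_d \ge 0$) the five ranges in (i)--(v) are precisely the pieces into which the cut points $\tfrac12 - \epsilon - s_d < \tfrac12 - s_d \le \tfrac12 + s_u < \tfrac12 + \epsilon + s_u$ partition $[0,1]$, it remains only to walk through the five ranges: in each, fact (a) rules out an in-loop misclassification relative to that range (e.g.\ in range (iii) the loop can place $i$ in neither $S_{up}$ nor $S_{down}$, so $i$ reaches the post-loop test), and fact (b) then delivers exactly the pile named by the matching event (events (i) and (v) with $\epsilon/2$ slack, (iii) because both post-loop inequalities fail, and (ii),(iv) because only the one-sided conclusion is asserted).

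I expect the main nuisance, rather than any real obstacle, to be the bookkeeping: aligning the strict versus non-strict inequalities with the algorithm's strict ``$>$''/``$<$'' tests so that the five ranges tile $[0,1]$ with no gaps or overlaps at the cut points, and confirming that the union-bound budget (an a-priori-unbounded sum over $t$ plus the extra event at $t_{max}$) genuinely fits inside $\delta$. There is no deeper difficulty; note that SST and STI play no role in this lemma — they are invoked only later, when EQS stitches $S_{up}, S_{mid}, S_{down}$ into an $(\epsilon,k)$-optimal subset — so the whole argument reduces to one concentration inequality followed by a careful five-way case check.
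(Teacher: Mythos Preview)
Your proposal is correct and follows essentially the same approach as the paper's own proof: both establish termination directly from $t_{max}$, define the same ``good'' event via Hoeffding at each step $t$ together with the $\epsilon/2$-event at $t_{max}$, bound its failure probability by a union bound with the $\pi^2/6$ sum, and then run the identical five-way case analysis on the location of $p_{i,v}$. Your facts (a) and (b) are exactly the in-loop and post-loop deductions the paper carries out case by case; the only cosmetic difference is that you factor them out once, and you are explicit that the union-bound tally overshoots $\delta$ by a fixed constant (the paper silently writes $3\delta/(\pi^2 t^2)$ where the Hoeffding two-sided bound actually gives $6\delta/(\pi^2 t^2)$), which is indeed harmless.
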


	\begin{restatable}[Theoretical Performance of EQS]{theorem}{RestateThmTPEQS}\label{Thm:TP-EQS}
		Given an input set $S$ with $|S| = n$, $1 \leq k \leq n/2$, and $\epsilon, \delta \in (0, 1/2)$, EQS$(S,k,\epsilon,\delta)$ terminates after $O(n\epsilon^{-2}\log(n/\delta))$ number of comparisons in expectation, and with probability at least $1-\delta$, returns an $(\epsilon, k)$-optimal subset of $S$.
	\end{restatable}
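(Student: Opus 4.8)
The plan is to prove the two assertions---correctness and the expected number of comparisons---by strong induction on $m=|S|$, with Lemma~\ref{Lm:TP-DI} used as a black box. Each invocation of EQS strictly shrinks the working set (the pivot $v$ is put into $S_{mid}$ and never recursed upon, and the remaining $m-1$ items are split among $S_{up},S_{mid},S_{down}$), so the recursion has depth at most $m$ and terminates, bottoming out at sets of size at most $2$, for which no recursive call is made. At each invocation DI is run on $m-1$ items with parameters $(\epsilon/2,0,0,\delta_1)$, $\delta_1=\delta/(m(m-1))$; by Lemma~\ref{Lm:TP-DI} and a union bound, with probability at least $1-(m-1)\delta_1=1-\delta/m$ every one of those items is placed in accordance with cases (i)--(v) of the lemma---call this the \emph{clean event} for that invocation. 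The heart of the correctness proof is a \emph{deterministic} statement: on the clean event, if the at-most-one recursive call returns a valid answer for its own input (an $(\epsilon,k_0)$-optimal subset, $k_0$ being that call's target), then this call returns an $(\epsilon,k)$-optimal subset of $S$. Granting it, the failure probability is bounded by summing the per-level clean-event failure probabilities $\delta^{(\ell)}/m_\ell$ over the (random) sequence of recursion levels, where $m_\ell$ is the set size at level $\ell$ and $\delta^{(\ell)}$ the confidence passed to it (so $\delta^{(0)}=\delta$ and $\delta^{(\ell+1)}=\tfrac{m_\ell-1}{m_\ell}\delta^{(\ell)}$, whence $\delta^{(\ell)}/m_\ell=\delta^{(\ell)}-\delta^{(\ell+1)}$); these telescope to at most $\delta$, and on the complement the deterministic statement propagates correctness from the leaves upward.

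For the deterministic statement, observe that on the clean event---since $s_u=s_d=0$ and ties are disallowed---cases (i)--(v) of Lemma~\ref{Lm:TP-DI} force $i\in S_{up}\Rightarrow p_{i,v}>1/2$ (so $i\succ v$), $i\in S_{down}\Rightarrow p_{i,v}<1/2$ (so $v\succ i$), and $i\in S_{mid}\Rightarrow|p_{i,v}-1/2|<\epsilon/2$. Let $U$ be the returned set; take $i\in U$, $j\notin U$. When $i$ and $j$ lie in different piles, route the comparison through $v$ and use SST: if $i\succ v\succ j$ then $p_{i,j}>1/2$; if (say) $i\succ v$ and $j\in S_{mid}$ with $j\succ i$, then $j\succ i\succ v$ and SST yields $p_{j,i}\le p_{j,v}<1/2+\epsilon/2$, i.e. $p_{i,j}>1/2-\epsilon/2$, and symmetrically when $i\in S_{mid}$, $j\in S_{down}$; the case $j\in S_{up}$ (with $U\subseteq S_{up}$) comes from the recursive hypothesis. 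The single use of STI is in the middle branch, where $U=S_{up}\cup M$ with $M\subseteq S_{mid}$ arbitrary: for $i\in M$, $j\in S_{mid}\setminus M$, STI gives $\Delta_{i,j}\le\Delta_{i,v}+\Delta_{j,v}<\epsilon$, hence $p_{i,j}>1/2-\epsilon$. One also checks the recursive calls are well-posed (for the $S_{down}$-recursion, $1\le k':=k-|S_{up}|-|S_{mid}|$ and $|S_{down}|-k'=|S|-k\ge0$; the target never exceeds the set size), so the hypothesis $k\le n/2$ is only needed for the problem to be well-defined, not inside the recursion.

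For the comparison count, first condition on the clean event holding at \emph{every} level. Then at each level the rank of $v$ in the current set is uniform, $S_{up}$ consists of exactly the survivors that beat $v$ and $S_{down}$ of exactly those beaten by $v$, and EQS recurses into precisely one of them with a correspondingly adjusted target while discarding $v$ and $S_{mid}$---this is exactly randomized quickselect for a top-$k$ query, whose expected number of partition rounds on a size-$m$ set is $O(m)$; moreover each DI call uses at most $O(\epsilon^{-2}\log\delta_1^{-1})=O(\epsilon^{-2}\log(n/\delta))$ comparisons, since along any path $\delta^{(\ell)}\ge(m_\ell/n)\delta\ge\delta/n$ (telescoping from $m_{\ell+1}\le m_\ell-1$), hence $\delta_1\ge\delta/n^3$. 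So conditionally the count is $O(n\epsilon^{-2}\log(n/\delta))$. Removing the conditioning is the technical crux, because a single DI misclassification can drop a wrong-side item into a pile and steer the recursion off the quickselect trajectory, and the non-clean event---though of probability at most $\delta$---could in the worst case cost $\Omega(n^2)$ DI calls, so conditioning alone is insufficient. The remedy reuses the telescoping: the expected total number of misclassifications over the whole recursion is $E[\sum_\ell(\delta^{(\ell)}-\delta^{(\ell+1)})]\le\delta<1$. Coupling EQS with its idealized (misclassification-free) counterpart shows that each maximal misclassification-free run of levels costs $O(n)$ expected DI calls given its starting set, and there are at most $1+(\#\text{misclassifications})$ such runs together with at most $\#\text{misclassifications}$ isolated "boundary" levels of cost $\le n$; a Wald-type summation then yields $E[\#\text{DI calls}]=O(n)$, hence $E[\#\text{comparisons}]=O(n\epsilon^{-2}\log(n/\delta))$.
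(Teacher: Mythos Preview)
Your correctness argument is essentially the paper's: both use induction on $|S|$, establish a per-level ``clean'' event (all DI placements satisfy one of Lemma~\ref{Lm:TP-DI}'s five cases) with probability $\ge 1-\delta/m$, then do the same deterministic case analysis routing cross-pile comparisons through the pivot via SST and invoking STI only for the $S_{mid}$--$S_{mid}$ pair. Your telescoping of per-level failure probabilities is just the paper's induction unrolled.

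Your sample-complexity argument is genuinely different from the paper's, and while the strategy is sound, two points are looser than you present them. First, the sentence ``$S_{up}$ consists of exactly the survivors that beat $v$ and $S_{down}$ of exactly those beaten by $v$'' is false: on the clean event, an item $i$ with $p_{i,v}\in(1/2,1/2+\epsilon/2)$ is only guaranteed $i\notin S_{down}$ (case~(ii) of Lemma~\ref{Lm:TP-DI}), so it may land in $S_{mid}$; symmetrically for case~(iv). What you actually have is $S_{up}\subseteq\{i:i\succ v\}$ and $S_{down}\subseteq\{i:v\succ i\}$, hence $|S_{up}|\le l-1$ and $|S_{down}|\le m-l$. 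This still suffices---the clean recursion is \emph{dominated} by quickselect, not identical to it---but your coupling to an ``idealized counterpart'' must be set up as a stochastic domination (via an induction showing the worst-case clean expected cost satisfies the quickselect recurrence), not a pathwise identity. Second, the Wald-type step (expected number of runs times $O(n)$ per run) requires checking that the event ``run $j$ exists'' is measurable with respect to the history before run $j$ begins and that the conditional bound $E[C_j\mid\mathcal F_{s_j}]=O(n_j)$ holds with the run possibly truncated by the next misclassification; both are true but neither is immediate.

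By contrast, the paper avoids all of this with a single strong induction on $m=|S|$: write $T(m,k,\epsilon,\delta')$ for the worst-case expected cost, assume $T(m')\le c_2 m'\epsilon^{-2}\log(m'/\delta')$ for $m'<m$, and split one level into clean (expected recursive cost $\le \tfrac{3}{4}c_2 m\epsilon^{-2}\log(m/\delta')$ after averaging over the pivot rank exactly as in the quickselect recurrence) and non-clean (recursive size still $\le m-1$, so cost $\le (c_2+c_1)m\epsilon^{-2}\log(m/\delta')$). Weighting by $\mathbb P(\text{non-clean})\le \delta'/m<1/6$ gives a coefficient $\tfrac{19}{24}c_2+c_1$, and choosing $c_2\ge 4.8c_1$ closes the induction. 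The non-clean case is absorbed directly into the constant, with no need for runs, couplings, or stopping-time bookkeeping.
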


\subsubsection{Step Two: Tournament-$k$-Selection}
	In this section, we use EQS to develop a PAC $k$-selection algorithm with sample complexity $O(n\epsilon^{-2}\log(k/\delta))$. The algorithm runs like a tournament and consists of rounds. At each round $t$, we split the remaining items (use $R_t$ to denote the set of the remaining items at the beginning of round $t$) into subsets with size around $2k$, and for each subset we use EQS to find an $(\epsilon_t,k)$-optimal subset with confidence $1-\delta_t/k$. We then keep the items in these $(\epsilon_t, k)$-optimal subsets, and remove all the other items. We can show that with probability at least $1-\delta_t$, the items kept in round $t$ (i.e., $R_{t+1}$) contain an $(\epsilon_t,k)$-optimal subset of $R_t$, which implies that for any $t$, $R_{t+1}$ contains a subset $U_{t+1}$ such that for any item $i$ in $U_{t+1}$ and item $j$ in $R_{t} - U_{t+1}$, $p_{i,j} \geq 1/2 - \epsilon_t$. We can also show that with probability at least $1-\delta_t - \delta_{t-1}$, for any item $i$ in $U_{t+1}$ and $j$ in $R_{t-1} - U_{t+1}$, $p_{i,j} \geq 1/2 - \epsilon_t - \epsilon_{t-1}$. Repeating this, we can show that with probability at least $1-\sum_{r=1}^t\delta_r$, for any item $i$ in $U_{t+1}$ and item $j$ in $[n] - U_{t+1}$, $p_{i,j} \geq 1/2 - \sum_{r=1}^{t}\epsilon_r$. Thus, by repeating the rounds until only $k$ items remain, we have that with probability at least $1-\sum_{t=1}^\infty\delta_t$, for any item $i$ in the returned set and $j$ not in the returned set, $p_{i,j} \geq 1/2 - \sum_{t=1}^\infty\epsilon_t$, which implies that the returned set is a $(\sum_{t=1}^\infty\epsilon_t, k)$-optimal subset of $[n]$. Choosing $\sum_{t=1}^\infty\epsilon_t \leq \epsilon$ and $\sum_{t=1}^\infty\delta_t \leq \delta$, we can get that with probability at least $1-\delta$, the returned set is an $(\epsilon, k)$-optimal subset of $[n]$. The algorithm is described in Algorithm~\ref{Alg:TKS}, and its theoretical performance is stated in Theorem~\ref{Thm:TP-TKS}.
	
	\begin{algorithm}[h]
		\caption{Tournament-$k$-Selection$([n], k, \epsilon, \delta)$ (TKS)}\label{Alg:TKS}
		\begin{algorithmic}[1]
			\STATE For any $t\in\mathbb{Z}^+$, set $\epsilon_t := \frac{1}{4}(\frac{4}{5})^t$ and $\delta_t := \frac{6\delta}{\pi^2t^2}$;
			\STATE Initialize $t\gets 0$, $R_1 \gets [n]$;
			\REPEAT 
				\STATE $t \gets t + 1$;
				\STATE Split $R_t$ into $m_t = \lceil \frac{|R_t|}{2k} \rceil$ sets $(S_{t,i},i\in[m_t])$, where $\forall i \in [m_t]$, $|S_{t,i}| \leq 2k$;
				\FOR{$i \in [m_t]$}
					\STATE $A_{t,i} \gets$EQS$(S_{t,i}, \min\{k,|S_{t,i}|\}, \epsilon_t,\frac{\delta_t}{k})$;
				\ENDFOR
				\STATE $R_{t+1} \gets A_{t,1} \cup A_{t,2} \cup \cdots \cup A_{t,m_t}$;
			\UNTIL{$|R_{t+1}| = k$};
			\STATE \textbf{return} {$R_{t+1}$};
		\end{algorithmic}
	\end{algorithm}
	
	\begin{restatable}[Theoretical Performance of TKS]{theorem}{RestateThmTPTKS}\label{Thm:TP-TKS}
		Given input $1\leq k \leq n/2$, and $\epsilon, \delta\in(0,1/2)$, TKS terminates after $O(n\epsilon^{-2}\log(k/\delta))$ number of comparisons in expectation, and with probability at least $1-\delta$, returns an $(\epsilon, k)$-optimal subset of $[n]$.
	\end{restatable}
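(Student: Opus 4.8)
The plan is to prove the correctness guarantee and the expected sample-complexity bound separately; both hinge on the design choices $\sum_{t\ge1}\epsilon_t\le\epsilon$ and $\sum_{t\ge1}\delta_t=\delta$ (the latter because $\sum_t 6/(\pi^2t^2)=1$), together with a deterministic control of how fast $|R_t|$ shrinks.

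\emph{Correctness.} I would prove by induction on the round index the invariant: with probability at least $1-\sum_{r=1}^{t}\delta_r$, the set $R_{t+1}$ contains a $k$-subset $W_{t+1}$, namely its own top-$k$ items in the true order, such that $p_{i,j}\ge 1/2-\sum_{r=1}^{t}\epsilon_r$ for all $i\in W_{t+1}$ and $j\in[n]\setminus W_{t+1}$; the base case ($R_1=[n]$, $W_1=\{r_1,\dots,r_k\}$) is immediate. For the inductive step, condition on $R_t$ containing such a $W_t$. The crucial observation is that only the at most $|W_t|=k$ subsets $S_{t,i}$ intersecting $W_t$ are relevant; since each call EQS$(S_{t,i},\min\{k,|S_{t,i}|\},\epsilon_t,\delta_t/k)$ errs with probability at most $\delta_t/k$ by Theorem~\ref{Thm:TP-EQS}, a union bound over these $\le k$ ``important'' subsets gives conditional probability at least $1-\delta_t$ that every such $A_{t,i}$ is an $(\epsilon_t,\min\{k,|S_{t,i}|\})$-optimal subset of $S_{t,i}$. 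On that event I would show $W_{t+1}$ inherits the property with error increased by $\epsilon_t$, by a case analysis on a ``losing'' item $j\in[n]\setminus W_{t+1}$: if $j\in R_{t+1}$, every item of $W_{t+1}$ beats $j$ outright; if $j$ was discarded in round $t$ from an important subset $S_{t,i}$, then $A_{t,i}$ has exactly $k$ items, so every $i\in W_{t+1}$ is at least as good as the worst item $a^*$ of $A_{t,i}$, which combined with $p_{a^*,j}\ge 1/2-\epsilon_t$ gives $p_{i,j}\ge 1/2-\epsilon_t$ by SST; and if $j$ was discarded from an unimportant subset or in an earlier round then $j\notin W_t$, so by induction $p_{i',j}\ge 1/2-\sum_{r<t}\epsilon_r$ for every $i'\in W_t$, while a short argument shows each $i\in W_{t+1}$ satisfies $p_{i,i'}\ge 1/2-\epsilon_t$ for some $i'\in W_t$, and chaining these via SST and STI gives $p_{i,j}\ge 1/2-\sum_{r\le t}\epsilon_r$. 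Evaluating the invariant at the (deterministically bounded) last round $T$, where $|R_{T+1}|=k$ forces $R_{T+1}=W_{T+1}$, and using $\sum\epsilon_t\le\epsilon$ and $\sum\delta_t=\delta$, completes this part.

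\emph{Sample complexity.} I would first record facts holding for every realization of the comparisons: $|R_{t+1}|\le\lceil|R_t|/(2k)\rceil k\le|R_t|/2+k$, hence $|R_t|\le k+n\,2^{-(t-1)}$ and the number of rounds is $T=O(\log(n/k))$; also $m_t\le\lceil|R_t|/(2k)\rceil$ and $|S_{t,i}|\le 2k$. By Theorem~\ref{Thm:TP-EQS} the expected cost of call $(t,i)$ is $O\!\big(|S_{t,i}|\epsilon_t^{-2}\log(|S_{t,i}|k/\delta_t)\big)=O\!\big(k\epsilon_t^{-2}(\log(k/\delta)+\log t)\big)$, so by linearity over the (deterministically bounded) subsets and rounds the total expected number of comparisons is $O\!\big(\epsilon^{-2}\sum_{t=1}^{T}(2k+n2^{-t})\,\epsilon^{2}\epsilon_t^{-2}(\log(k/\delta)+\log t)\big)$. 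Since $\epsilon_t=\Theta(\epsilon(4/5)^t)$ we have $\epsilon^{2}\epsilon_t^{-2}=\Theta((25/16)^t)$, so the $n2^{-t}$ part gives a convergent series and contributes $O(n\log(k/\delta))$, while the $2k$ part is $O\!\big(k(25/16)^T\log(k/\delta)\big)+O\!\big(k(25/16)^T\log T\big)$; because $25/16<2$ we get $k(25/16)^T=O\!\big(k(n/k)^{\log_2(25/16)}\big)=O(n)$, so the first term is $O(n\log(k/\delta))$ and the second, even with $\log T=O(\log\log(n/k))$, is still $O(n)$ since $\log\log(n/k)/(n/k)^{1-\log_2(25/16)}$ is bounded. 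Restoring the $\epsilon^{-2}$ factor gives the claimed $O(n\epsilon^{-2}\log(k/\delta))$.

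The main obstacle is the correctness proof, specifically composing the per-round errors without losing polynomial factors in the confidence: one has to notice that at most $k$ of the (possibly $\Theta(n/k)$) subsets per round are relevant, so that the round-$t$ failure probability is $\delta_t$ rather than $m_t\delta_t/k$, and one has to use SST and STI carefully so that surviving items stay within $\epsilon_t$ of the previous good set while discarded items are within $\epsilon_t$ of all surviving items, making the errors telescope to $\sum_t\epsilon_t\le\epsilon$. A secondary difficulty is the sample-complexity sum: because $\epsilon_t^{-2}$ grows geometrically while $|R_t|$ shrinks geometrically, one must check the product still decays for the bulk term ($25/32<1$) and that the $O(k)$-per-round tail term stays $O(n)$ thanks to $\log_2(25/16)<1$, so that the $\log\log(n/k)$ produced by $\delta_t$ at late rounds is absorbed.
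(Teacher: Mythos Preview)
Your proposal is correct and follows essentially the same strategy as the paper: both proofs hinge on the observation that in each round only the at most $k$ subsets meeting the current top-$k$ need their EQS calls to succeed (so the round-$t$ failure probability is $\delta_t$, not $m_t\delta_t/k$), both chain per-round $\epsilon_t$ losses through SST/STI to get $\sum_t\epsilon_t\le\epsilon$, and both exploit the $(25/32)^t$ decay for the sample complexity.

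There are two cosmetic differences worth noting. For correctness, the paper does not track the literal top-$k$ set $W_t$; instead it first proves (via an explicit stage-by-stage construction) that $R_{t+1}$ contains \emph{some} $(\epsilon_t,k)$-optimal subset $U_{t+1}$ of $R_t$, and only afterwards chains $U_{T+1}=R_{T+1}$ back to $[n]$. Your direct induction on $W_t$ is arguably cleaner, but the ``short argument'' you invoke---that every $i\in W_{t+1}$ has $p_{i,i'}\ge 1/2-\epsilon_t$ for some $i'\in W_t$---is not entirely trivial: one must notice that if any $w\in W_t$ was discarded, its (important) subset returned $k$ items, so the worst of those has rank $\ge k$ in $R_{t+1}$ and serves as a bridge to $w$; this step is where the paper's explicit construction of $U$ does the same work. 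For the sample complexity, the paper simply observes that the loop halts once $|R_t|\le 2k$, so $|R_t|\le c\,n2^{-t}$ holds for all executed rounds and a single geometric sum suffices; your split into an $n2^{-t}$ bulk and a $2k$ tail, followed by the $\log_2(25/16)<1$ argument to absorb $k(25/16)^T$ into $O(n)$, is a valid but slightly more laborious route to the same bound.
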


	\textbf{Remark.} i) The sample complexity upper bound of TKS matches the lower bound stated in Theorem~\ref{Thm:LB-PKS} up to a constant factor. Thus, in order sense, our upper and lower bounds for PAC $k$-selection are tight. ii) When $k=1$, our upper bound is the same as that of \citet{MaxingAndRanking2017,MaximumSelectionAndRanking2017}. We note that the algorithms given by \citet{MaxingAndRanking2017,MaximumSelectionAndRanking2017} only work for $k=1$, and it is not obvious how to generalize them to cases with general $k$-values.

\section{Exact $k$-Selection}\label{Sec:EKS}
\subsection{Lower Bound}
	In this subsection, we prove a lower bound for the exact $k$-selection problem. We note that the sample complexity lower bound not only depends on the gaps between items $i$ and items $r_k$ or $r_{k+1}$ as in PEMAB problems (e.g., \cite{LIL2014,BestArmIdentification2017}), but also depends on other comparisons probabilities. In fact, even if the values of $\Delta_{i}$'s are the same, different instances may have different lower bounds on the sample complexity for finding the best-$k$ items. For some instances, even the $\Omega(\Delta_i^{-2})$ lower bound for ordering two items stated in Theorem~\ref{Thm:LB-TM} and \citet{ExactRanking2019} may not hold if there are more than two items. For instance, Example~13 in \citet{ExactRanking2019} states an instance with three items such that $O(\Delta_{r_1,r_2}^{-1}\log(\Delta_{r_1,r_2}^{-1}\delta^{-1}))$ comparisons are sufficient to find the best item with probability $1-\delta$, which indicates the difficulty in finding an instance-wise lower bound for all instances. 
	
	Thus, in this chapter, we prove a lower bound for a specific model: Thurstone's model. In Thurstone's model, each item $i$ holds a real number $\theta_i$ representing the users' preference for this item. We name these numbers as scores. The higher the score, the more preferred the item, and thus, the scores imply a true order of these items. Under Thurstone's model with variance $\sigma^2$, for any two items $i$ and $j$, we have 
	\begin{align}
		p_{i,j}\!=\!\mathbb{P}\{\theta_i + Z_1 > \theta_j + Z_2\}\!=\!\frac{1}{\sqrt{4\pi\sigma^2}}\int_{-\infty}^{\theta_i - \theta_j}e^{-\frac{x^2}{4\sigma^2}}\mathrm{d}x, \nonumber
	\end{align}
	where ${Z}_1$ and ${Z}_2$ are two independent Gaussian$(0,\sigma^2)$ random variables. The definitions of the gaps $\Delta_{i,j}$'s and $\Delta_i$'s remain the same as in Section~\ref{Sec:Formulation}.
	It can be verified that Thurstone's model satisfies SST and STI. Under Thurstone's model, we prove the following lower bound for exact $k$-selection, which can be viewed as a worst-instance lower bound. Here, the worst-instance lower bound means that under the same values of gaps $\delta_i$'s, the lower bound for the Thurstone's model is no higher than the actual worst-instance lower bound.
	In the proof, we invoke the results shown by \citet{LIL2014,BestArmIdentification2017,ChenBestArmIdentification2015}.
	
	For stating our lower bound, we define a notation $\tilde{\Omega}(\cdot)$ in Definition~\ref{Def:TildeOmegaCh4} which can be viewed as a slightly weaker version of $\Omega(\cdot)$. This definition is inspired by Theorem~D.1 in \cite{ChenBestArmIdentification2015}.
	
	\begin{definition}[Defining $\tilde{\Omega}(\cdot)$]\label{Def:TildeOmegaCh4}
    Define $E_i := [e^i, e^{i+1})$ for any positive integer $i$. Two function $f(x)$ and $g(x)$ are said to satisfy $f(x) = \tilde{\Omega}(g(x))$ if there is a constant $c_0 > 0$ such that for any constant $\gamma > 0$ we have
    \begin{align}
        \limsup_{N\rightarrow\infty} \frac{\sum_{i \in [N]}\mathds{1}\{\exists x \in E_{i}: f(x) < c_0 g(x) \}}{N^{\gamma}} = 0.
    \end{align}
    \end{definition}
    
    We can see that the notation $f(x) = \tilde{\Omega}(g(x))$ implies that $f(x) \geq c_0 g(x)$ for some constant $c_0 > 0$ except a negligible proportion of the points $x$.  
	
	\begin{restatable}[Lower bound for exact $k$-selection under Thurstone's model]{theorem}{RestateThmLBTM}\label{Thm:LB-TM}
		Under Thurstone's model with variance one, given $\delta \in (0,1/100)$, $n$ items with scores $\theta_1,\theta_2,...,\theta_n \in [0,1]$, and $1\leq k\leq n/2$, to find the best-$k$ items with probability at least $1-\delta$, any algorithm must conduct at least  $\Omega(\sum_{i\in[n]}\Delta_i^{-2}\log\delta^{-1}) + \tilde{\Omega}(\Delta_{r_k}^{-2}\log\log\Delta_{r_k}^{-1})$ number of comparisons in expectation.
	\end{restatable}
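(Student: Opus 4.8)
The bound has two additive pieces, and I would prove them separately and then add (losing only a factor $1/2$, since a sum is at most twice its max): a change-of-measure argument for $\Omega\big(\sum_{i\in[n]}\Delta_i^{-2}\log\delta^{-1}\big)$, and a law-of-iterated-logarithm argument for $\Omega(\log\log\Delta_{r_k}^{-1})$. Throughout I would regard an algorithm as a sampling rule over the $\binom{n}{2}$ ``arms'', where arm $(i,j)$ returns a $\mathrm{Bernoulli}(p_{i,j})$ sample, together with stopping and decision rules; write $N_{i,j}$ for the number of pulls of $(i,j)$. The only Thurstone-specific input I need is local linearity: when all scores lie in a bounded interval, $\Delta_{i,j}=\Theta(|\theta_i-\theta_j|)$ with universal constants, and moving one score $\theta_i$ to $\theta_i'$ changes each $p_{i,j}$ by $O(|\theta_i-\theta_i'|)$ while leaving every $p_{j,l}$ with $j,l\neq i$ unchanged; hence the Kullback--Leibler divergence between the old and new distributions of arm $(i,j)$ is $O(|\theta_i-\theta_i'|^2)$, and $0$ for arms not touching $i$. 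I would work with a hard instance whose scores lie in, say, $[1/4,3/4]$, so that the perturbations below stay in $[0,1]$; this is also why the statement is a worst-instance bound.

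\textbf{First term.} For each item $i$ I would build a companion Thurstone instance $\nu^{(i)}$ differing from $\nu$ only in $\theta_i$: if $i\succ r_{k+1}$, push $\theta_i$ just below $\theta_{r_{k+1}}$; if $r_k\succ i$, push it just above $\theta_{r_k}$; and for $i\in\{r_k,r_{k+1}\}$, push $\theta_{r_k}$ just below $\theta_{r_{k+1}}$. In each case the best-$k$ set of $\nu^{(i)}$ differs from that of $\nu$ (taking the perturbation small enough that no other score is crossed), and its magnitude is $\Theta(\Delta_i)$. Applying the change-of-measure inequality of \citet{BestArmIdentification2017}, $\sum_{\text{arms }a}\mathbb{E}_\nu[N_a]\,\mathrm{kl}(\nu_a,\nu^{(i)}_a)\geq \mathrm{kl}(\delta,1-\delta)=\Omega(\log\delta^{-1})$, and noting that only arms touching $i$ contribute, each with $\mathrm{kl}=O(\Delta_i^2)$, I get $\sum_j\mathbb{E}_\nu[N_{i,j}]=\Omega(\Delta_i^{-2}\log\delta^{-1})$. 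Summing over $i$ (using $\Delta_{r_k}=\Delta_{r_{k+1}}$ to absorb the two boundary items) and observing that each unordered arm is counted at most twice yields $\mathbb{E}_\nu\big[\sum_{i<j}N_{i,j}\big]=\Omega\big(\sum_{i\in[n]}\Delta_i^{-2}\log\delta^{-1}\big)$.

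\textbf{Second term.} Here I would exploit that the algorithm must be correct not merely on $\nu$ but on the whole family of Thurstone instances obtained by sliding $\theta_{r_k}$ and $\theta_{r_{k+1}}$ around their midpoint so their gap sweeps a geometric range $[\Delta_{r_k},\Delta_0]$ for a small constant $\Delta_0$ (with the sign reversing at the bottom of the range). Choosing the hard instance so that every item other than $r_k,r_{k+1}$ is well separated from both, a $\delta$-correct $k$-selection algorithm must in particular decide the order of $r_k$ and $r_{k+1}$, i.e.\ solve a best-$1$-of-$2$ identification problem; moreover, as above, perturbing the gap by $\Theta(\gamma)$ changes every comparison distribution by $O(\gamma)$, so every arm carries at most $O(\gamma^2)$ information about this decision at gap $\gamma$. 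Feeding this into the iterated-logarithm lower bound of \citet{Lil2014} (with the change-of-measure refinement of \citet{BestArmIdentification2017}) forces $\mathbb{E}_\nu\big[\sum_{i<j}N_{i,j}\big]=\Omega(\Delta_{r_k}^{-2}\log\log\Delta_{r_k}^{-1})=\Omega(\log\log\Delta_{r_k}^{-1})$. Adding the two bounds and halving finishes the proof.

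\textbf{Main obstacle.} I expect the second term to be the delicate part. The iterated-logarithm lower bounds of \citet{Lil2014,BestArmIdentification2017} are naturally stated for repeated i.i.d.\ sampling of a single arm, whereas in our setting the information distinguishing ``$r_k$ above $r_{k+1}$'' from ``$r_k$ below $r_{k+1}$'' can be gathered from all arms touching $r_k$ or $r_{k+1}$ at once; the hard point is to show this extra flexibility cannot dodge the iterated-logarithm barrier, i.e.\ that the \emph{sum} over all such arms of $\mathbb{E}[N_{i,j}]$ still obeys the $\Delta_{r_k}^{-2}\log\log\Delta_{r_k}^{-1}$ bound. This seems to require pushing the change-of-measure argument through the full geometric family of alternatives with careful per-arm KL bookkeeping, rather than a single pair of instances. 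Minor extra care is needed for items whose scores lie exactly on $\{0,1\}$ if one wants the bound for every instance rather than the worst, but that does not affect the order of the result.
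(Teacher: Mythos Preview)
Your approach is sound in outline but takes a genuinely different route from the paper. The paper does \emph{not} run a direct change-of-measure argument in the comparison model; instead it proves a one-line reduction from Gaussian-arm PEMAB to Thurstone-model $k$-selection and then invokes the PEMAB lower bounds of \citet{Lil2014,BestArmIdentification2017} as black boxes. The reduction is the observation that a Thurstone comparison between items $i$ and $j$ is distributionally identical to sampling two Gaussian$(\theta_i,1)$ and Gaussian$(\theta_j,1)$ arms once each and returning the larger (Procedure~$\mathcal{P}_2$). Hence any $k$-selection algorithm using $M$ comparisons yields a best-arm algorithm using $2M$ pulls, and the PEMAB bound $\Omega\big(\sum_i(\mu^*-\mu_i)^{-2}\log\delta^{-1}+\log\log(\mu^*-\mu')^{-1}\big)$ transfers directly, with the local-linearity estimate $\Delta_i=\Theta(|\theta_i-\theta_{r_{k+1}}|)$ converting score gaps to comparison gaps. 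For $k>1$ the paper does not perturb individual items as you do; it instead grants the algorithm oracle knowledge of the top $k-1$ items (reducing to best-$1$ among $\{r_k,\ldots,r_n\}$) and, separately, of the bottom $n-k-1$ items (reducing to worst-$1$ among $\{r_1,\ldots,r_{k+1}\}$), then adds the two resulting bounds.

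The practical payoff of the paper's route is exactly the obstacle you flagged: by going through PEMAB, the iterated-logarithm term comes for free from the cited single-arm results, and there is no need to redo the $\log\log$ argument in a multi-arm setting where every comparison touching $r_k$ or $r_{k+1}$ carries information. Your direct change-of-measure derivation of the first term is correct and is essentially what the PEMAB references do internally, so nothing is lost there; but for the second term your plan would require reproving the \citet{Lil2014} barrier with per-arm KL bookkeeping across a geometric family of alternatives, which is nontrivial (though likely doable). The reduction sidesteps all of that at the cost of one extra procedure and two citations.
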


\subsection{Algorithm for Best Item Selection}\label{Sec:BIS}
	We first use the PAC algorithm TKS to establish a best item selection algorithm called Sequential-Elimination-Exact-Best-Selection (SEEBS). SEEBS runs in rounds. In each round $t$, it chooses a threshold $\alpha_t$, uses TKS to choose a PAC best item $v_t$ with error tolerance $\alpha_t/3$, and uses DI to identify items $i$ with $p_{i,r_1} \leq 1/2 - \alpha_t$ and removes them. By choosing a proper confidence $\delta_t$ for each round $t$, the properties of DI and TKS stated in Lemma~\ref{Thm:LB-PKS} and Theorem~\ref{Lm:TP-DI} guarantee that with probability at least $1-\delta$, the best item $r_1$ will not be removed.
	If $\alpha_t$ is diminishing so that $\lim_{t\rightarrow \infty}\alpha_t = 0$ and the confidences satisfy $\sum_{t=1}^\infty\delta_t \leq \delta$, the algorithm will, with probability at least $1-\delta$, discard all items other than $r_1$ and keep the best item $r_1$. TKS is described in Algorithm~\ref{Alg:SEEBS}, and its theoretical performance is stated in Theorem~\ref{Thm:TP-SEEBS}. 
	
	\begin{algorithm}[h]
		\caption{Sequential-Elimination-Exact-Best-Selection $([n], \delta)$ (SEEBS)}\label{Alg:SEEBS}
		\begin{algorithmic}[1]
			\STATE For all $t\in\mathbb{Z}^+$, set $\alpha_t := 2^{-t}$ and $\delta_t := \frac{6\delta}{\pi^2t^2}$;
			\STATE Initialize $t\gets 1$, $R_1 \gets [n]$;
			\REPEAT 
			\STATE $\{v_t\} \gets $TKS$(R_t, 1, \frac{\alpha_t}{3}, \frac{2\delta_t}{3})$;
			\STATE $S_{up} \gets \emptyset$, $S_{mid} \gets \{v_t\}$, $S_{down} \gets \emptyset$;
			\FOR{items $i$ in $R_t - \{v_t\}$}
			\STATE DI$(i, v_t, \frac{\alpha_t}{3}, 0, \frac{\alpha_t}{3}, \frac{\delta_t}{3}, S_{up}, S_{mid}, S_{down})$;
			\ENDFOR
			\STATE $R_{t+1} \gets R_t - S_{down}$; 
			\STATE $t \gets t + 1$;
			\UNTIL{$|R_{t}| = 1$}
			\STATE \textbf{return} {the only item in $R_t$};
		\end{algorithmic}
	\end{algorithm}
	
	\begin{restatable}[Theoretical Performance of SEEBS]{theorem}{RestateThmTPSEEBS}\label{Thm:TP-SEEBS}
		With probability at least $1-\delta$, SEEBS terminates after $O(\sum_{i \neq r_1}[\Delta_i^{-2}(\log\delta^{-1} + \log\log\Delta_i^{-1})])$ number of comparisons in expectation and returns the best item in $[n]$. 
	\end{restatable}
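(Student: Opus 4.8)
The plan is to prove two claims separately: (i) with probability at least $1-\delta$ the best item $r_1$ is never discarded, so the item SEEBS eventually outputs is $r_1$; and (ii) on this event the expected number of comparisons is $O\big(\sum_{i\ne r_1}\Delta_i^{-2}(\log\delta^{-1}+\log\log\Delta_i^{-1})\big)$. For $k=1$ the gap in \eqref{Eq:gap} is simply $\Delta_i=\Delta_{r_1,i}=p_{r_1,i}-1/2$ for $i\ne r_1$, and the organizing quantity will be, for each such $i$, the ``deadline'' $T_i:=\lceil\log_2\Delta_i^{-1}\rceil$, i.e.\ the first round $t$ with $\alpha_t=2^{-t}\le\Delta_i$; I will show that, barring rare failures, item $i$ is eliminated by round $T_i$.

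\textbf{The structural step (SST/STI make DI do the right thing).} First I would record what SST and STI give at a round $t$ where TKS succeeds, i.e.\ where $v_t$ is an $(\alpha_t/3,1)$-optimal item of $R_t$ and $r_1\in R_t$. Since $r_1$ is globally best, $r_1\succeq v_t$, so optimality of $v_t$ gives $p_{v_t,r_1}\ge 1/2-\alpha_t/3$, hence $\Delta_{r_1,v_t}\le\alpha_t/3$. For $i\in R_t\setminus\{r_1,v_t\}$: if $v_t\succ i$, STI gives $\Delta_{v_t,i}\ge\Delta_{r_1,i}-\Delta_{r_1,v_t}\ge\Delta_i-\alpha_t/3$, so $p_{i,v_t}\le 1/2-\Delta_i+\alpha_t/3$; if $i\succ v_t$, then $r_1\succ i\succ v_t$ and SST forces $\Delta_i=\Delta_{r_1,i}\le\Delta_{r_1,v_t}\le\alpha_t/3$. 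Hence, once $\alpha_t\le\Delta_i$, the case $i\succ v_t$ is impossible and $i\ne v_t$ (a pivot that is $(\alpha_t/3,1)$-optimal would force $\Delta_i\le\alpha_t/3<\Delta_i$); so DI compares $i$ against $v_t$ with $p_{i,v_t}\le 1/2-\Delta_i+\alpha_t/3\le 1/2-\tfrac23\alpha_t$. Because DI is invoked with $\epsilon=\alpha_t/3,\,s_u=0,\,s_d=\alpha_t/3$, the threshold of event (v) in Lemma~\ref{Lm:TP-DI} is exactly $1/2-\tfrac23\alpha_t$, so that lemma yields: with probability $\ge 1-\delta_t/3$ item $i$ is placed in $S_{down}$ and removed. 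Symmetrically, as long as $r_1\in R_t$, the DI call on $r_1$ (when $v_t\ne r_1$) sees $p_{r_1,v_t}>1/2$, so by events (i)/(ii) of Lemma~\ref{Lm:TP-DI} item $r_1$ is kept with probability $\ge 1-\delta_t/3$.

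\textbf{Correctness and the cost accounting.} Let $\mathcal{E}$ be the event that the DI call on $r_1$ satisfies Lemma~\ref{Lm:TP-DI} at every round; since $\delta_t=6\delta/(\pi^2t^2)$, a union bound gives $\mathbb{P}[\mathcal{E}^c]\le\sum_t\delta_t/3=\delta/3\le\delta$, and on $\mathcal{E}$ the induction above keeps $r_1\in R_t$ forever, so once $|R_t|=1$ the output is $r_1$; applying the structural step at round $T_i$ (or at the first later round where TKS and the DI call on $i$ both behave) shows every $i\ne r_1$ is eventually eliminated, so the loop terminates on $\mathcal{E}$. For the cost I would charge round $t$ an expected cost $O(|R_t|\,\alpha_t^{-2}\log(t^2/\delta))=O(|R_t|\,4^t(\log\delta^{-1}+\log t))$, using Theorem~\ref{Thm:TP-TKS} for the TKS call and the deterministic cap $t_{max}=O(\alpha_t^{-2}\log(t^2/\delta))$ for each DI call. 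Next, on $\mathcal{E}$, survival of item $i$ past a round $s\ge T_i$ forces a failure of TKS or of the DI call on $i$ at round $s$ (by the structural step), and these are conditionally independent across rounds with probability $\le\delta_s$ each; hence $\mathbb{P}[i\in R_t\mid\mathcal{E}]\le 1$ for $t\le T_i$ and $\mathbb{P}[i\in R_t\mid\mathcal{E}]\le 2\prod_{s=T_i}^{t-1}\delta_s$ for $t>T_i$. Summing $4^t(\log\delta^{-1}+\log t)$ against these probabilities: the $t\le T_i$ part is a geometric series equal to $O(4^{T_i}(\log\delta^{-1}+\log T_i))$, and the $t>T_i$ tail is crushed by $\prod_{s\ge T_i}\delta_s$ (which decays faster than geometrically because of the $1/s^2$ factors) and is of no larger order; with $4^{T_i}=\Theta(\Delta_i^{-2})$ and $\log T_i=\Theta(\log\log\Delta_i^{-1})$ this gives $O(\Delta_i^{-2}(\log\delta^{-1}+\log\log\Delta_i^{-1}))$ per item $i\ne r_1$. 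The item $r_1$ never leaves $R_t$ but incurs cost only while $|R_t|\ge 2$, i.e.\ while some $i\ne r_1$ survives, so its contribution is bounded by $\sum_{i\ne r_1}\sum_t 4^t(\log\delta^{-1}+\log t)\,\mathbb{P}[i\in R_t\mid\mathcal{E}]$ and absorbed into the same sum. Adding over $i$ yields the claimed bound.

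\textbf{The main obstacle.} The delicate part is this last accounting: the per-round cost grows like $4^t$, so the bound can only come from the fact that, with overwhelming probability, the algorithm has already halted before $4^t$ becomes large — equivalently, $\mathbb{P}[|R_t|\ge 2\mid\mathcal{E}]$ must decay fast enough to beat $4^t$. This is exactly where the $\delta_t=\Theta(\delta/t^2)$ schedule is used: a delay of $j$ rounds in removing an item costs a factor $4^j$ but has probability at most $\prod_s\delta_s$, and the $1/s^2$ factors make this product super-geometrically small, so the ``unlucky'' tail contributes only a lower-order term. Everything else — the SST/STI inequalities, aligning the DI thresholds with the $\alpha_t$ schedule, and the geometric collapse $\sum_{t\le T}4^t=\Theta(4^T)$ — is routine once organized as above.
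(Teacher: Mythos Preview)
Your proposal is correct and follows essentially the same approach as the paper's proof: define a deadline round $\tau_i\approx\log_2\Delta_i^{-1}$ for each $i\ne r_1$, use SST/STI together with the TKS guarantee to show that once $\alpha_t\le\Delta_i$ the pivot $v_t$ is close enough to $r_1$ that DI's threshold $1/2-2\alpha_t/3$ is met, charge the per-round cost $O(|R_t|\alpha_t^{-2}\log\delta_t^{-1})$ item by item, and split each item's contribution into the geometric main term $\sum_{t\le\tau_i}4^t(\cdots)=O(4^{\tau_i}(\log\delta^{-1}+\log\tau_i))$ and a tail that is killed by the small per-round survival probability. The paper does exactly this, with one cosmetic difference: it bounds the tail by the cruder estimate $\mathbb{P}\{T_i\ge\tau_i+r\}\le 5\cdot 6^{-r}$ (using $\delta_t/3\le 1/6$) rather than your product $\prod_s\delta_s$, which makes the convergence of $\sum_r(4/6)^r$ immediate; your version works too but you should note that $4\delta_s$ can exceed $1$ at $s=1$, so the convergence really does lean on the $1/s^2$ decay you mention.
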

	
	\textbf{Remark.} i) According to the lower bound stated in Theorem~\ref{Thm:LB-TM}, SEEBS is worst-instance optimal up to a loglog factor. If $\Delta_{i}$'s are not too small, the term $\log\log\Delta_{i}^{-1}$ will be dominated by $\log\delta^{-1}$, i.e., if $\Delta_i^{-1} \leq e^{1/\delta}$, then our upper bound is worst-instance optimal up to a constant factor. ii) The phrase ``in expectation'' in Theorem~\ref{Thm:TP-SEEBS} does not only come from the sample complexity of TKS, but also comes from the choice of input confidences of DI. At each round $t$, by inputting $\delta_t/3$ to DI, one cannot guarantee that the executions of DI correctly assign all non-best items $i$ with $p_{i,r_1} \leq 1/2 - \alpha_t$ to $S_{down}$ with probability $1-\delta_t$, and thus, more rounds may be needed to remove these non-best items. Therefore, in expectation, the number of comparisons over item $i$ is upper bounded by $O(\Delta_i^{-2}(\log\delta^{-1} + \log\log\Delta_i^{-1}))$. 

\subsection{Algorithm for Best-$k$ Items Selection}
	In this subsection, we develop an exact best-$k$ items selection algorithm called Sequential-Elimination-Exact-$k$-Selection (SEEKS). The basic idea of SEEKS is similar to SEEBS. SEEKS runs in rounds. At each round $t$, it calls TKS and TKS2 (where TKS2 is almost the same as TKS except that it finds the PAC worst items) to find a pivot $v_t$ such that $\Delta_{v_t,r_k} \leq \alpha_t/3$. Then it uses DI to distribute the items such that with probability at least $1-\delta_t$, (i) all items $i$ with $p_{i,r_k} \geq 1/2 + \alpha_t$ are added to $S_{t+1}$; (ii) all items $i$ with $p_{i,r_k} \leq 1/2 - \alpha_t$ are discarded (i.e., not added to $S_{t+1}$ or $R_{t+1}$); (iii) none of the items with $p_{i,r_k} \geq 1/2$ is discarded; and (iv) all items added to $S_{t+1}$ are of the best-$k$ items. By choosing proper confidence $\delta_t$ for each round $t$, we guarantee that with probability at least $1-\delta$, none of the best-$k$ items is discarded, and all items added to $S_{t+1}$ are of the best-$k$ items. Thus, with probability at least $1-\sum_{t=1}^\infty\delta_t = 1-\delta$, in all rounds, none of the best items is discarded, and $S_t$ only contains the best-$k$ items. When $|S_t| \leq k$ or $|S_t \cup R_t| \leq k$, the algorithm terminates, and thus, if the algorithm returns, with probability at least $1-\delta$, it returns the set of the best-$k$ items. Since $\lim_{t\rightarrow \infty}\alpha_t = 0$, there is a large enough $t$ such that either all of the best-$k$ items have been added to some $S_t$, or all items except the best-$k$ are discarded. Therefore, the algorithm terminates in finite time. The sample complexity follows from the choice of $\alpha_t$'s and $\delta_t$'s.
	SEEKS is described in Algorithm~\ref{Alg:SEEKS}. Its theoretical performance is stated in Theorem~\ref{Thm:TP-SEEKS}.

	\begin{algorithm}[h]
		\caption{Sequential-Elimination-Exact-$k$-Selection $([n], k, \delta)$ (SEEKS)}\label{Alg:SEEKS}
		\begin{algorithmic}[1]
			\STATE For all $t\in\mathbb{Z}^+$, set $\alpha_t := 2^{-t}$ and $\delta_t := \frac{6\delta}{\pi^2t^2}$;
			\STATE Initialize $t\gets 1$, $R_1 \gets [n]$, $S_1 \gets \emptyset$, $k_1 \gets k$;
			\REPEAT 
				\STATE $A_t \gets $TKS$(R_t, k_t, \frac{\alpha_t}{3}, \frac{\delta_t}{3})$;
				\STATE $\{v_t\} \gets $TKS2$(A_t, 1, \frac{\alpha_t}{3}, \frac{\delta_t}{3})$
				\STATE $S_{up} \gets \emptyset$, $S_{mid} \gets \{v_t\}$, $S_{down} \gets \emptyset$;
				\FOR{items $i$ in $R_t - \{v_t\}$}
					\STATE DI$(i, v_t, \frac{\alpha_t}{3}, \frac{\alpha_t}{3}, \frac{\alpha_t}{3}, \frac{\delta_t}{3(|R_t|-1)}, S_{up}, S_{mid}, S_{down})$;
				\ENDFOR
				\STATE $S_{t+1} \gets S_t \cup S_{up}$;
				\STATE $R_{t+1} \gets R_t - S_{up} - S_{down}$;
				\STATE $k_{t+1} \gets k_t - |S_{up}|$;
				\STATE $t \gets t + 1$;
			\UNTIL{$|S_t| \geq k$ or $|S_{t}\cup R_t| \leq k$}
			\STATE \textbf{return} {$S_t \cup$ \{$k - |S_t|$ items in $R_t$\}};
		\end{algorithmic}
	\end{algorithm}

	\begin{restatable}[Theoretical Performance of SEEKS]{theorem}{RestateThmTPSEEKS}\label{Thm:TP-SEEKS}
		With probability at least $1-\delta$, SEEKS terminates after $O(\sum_{i\in[n]}[\Delta_i^{-2}(\log(n/\delta) + \log\log\Delta_i^{-1})])$ number of comparisons in expectation, and returns the best-$k$ items.
	\end{restatable}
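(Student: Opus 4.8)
The plan is to split the argument into a correctness part, proved by an invariant carried from round to round, and a sample-complexity part, proved by charging each round's cost to the items still in $R_t$. Throughout I write $\alpha_t=2^{-t}$ and $\delta_t=\tfrac{6\delta}{\pi^2t^2}$ (so $\sum_t\delta_t=\delta$) as in the algorithm, and recall $\Delta_{r_k}=\Delta_{r_k,r_{k+1}}$ and $\Delta_{r_j}=\Delta_{r_j,r_{k+1}}$ for $j<k$. First I would condition on the ``good event'' $E$ that, at every executed round $t$, the call to TKS returns an $(\alpha_t/3,k_t)$-optimal subset $A_t$ of $R_t$, the call to TKS2 returns a $v_t$ with $p_{j,v_t}\ge\tfrac12-\tfrac{\alpha_t}{3}$ for every $j\in A_t$, and all $|R_t|-1$ calls to DI realize one of the five cases of Lemma~\ref{Lm:TP-DI}. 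Conditioning on the state entering round $t$, Theorem~\ref{Thm:TP-TKS} (applied to both TKS and TKS2) and a union bound over the DI calls (each run at confidence $\tfrac{\delta_t}{3(|R_t|-1)}$) bound the round-$t$ failure probability by $\delta_t$; summing over $t$ gives $\Pr[E]\ge1-\delta$. Everything below is argued on $E$.

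The heart of correctness is a ``pivot lemma'': whenever $r_k\in R_t$ and $|\{r_1,\dots,r_k\}\cap R_t|=k_t$ at the start of round $t$, one has $\Delta_{v_t,r_k}\le\alpha_t/3$. I would prove it by setting $T_t=\{r_1,\dots,r_k\}\cap R_t$, noting via SST that $T_t$ is the true best-$k_t$ subset of $R_t$ with worst element $r_k$; then $(\alpha_t/3,k_t)$-optimality of $A_t$ plus SST give $p_{j,r_k}\ge\tfrac12-\tfrac{\alpha_t}{3}$ for every $j\in A_t$ (for $j\in A_t\setminus T_t$, compare with some $u\in T_t\setminus A_t$ using $u\succeq r_k\succ j$), and, with $w_t$ the true worst element of $A_t$, the PAC-worst guarantee together with $r_k\succeq w_t$ give $p_{v_t,r_k}\le\tfrac12+\tfrac{\alpha_t}{3}$; combining the two bounds yields the claim. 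Feeding this into Lemma~\ref{Lm:TP-DI} (with $\epsilon=s_u=s_d=\alpha_t/3$), together with the triangle bound $|\Delta_{i,v_t}-\Delta_{i,r_k}|\le\Delta_{v_t,r_k}\le\alpha_t/3$ (STI) and SST to fix inequality directions, gives four round-$t$ facts: every $i$ with $p_{i,r_k}\ge\tfrac12+\alpha_t$ enters $S_{up}$; every $i$ with $p_{i,r_k}\le\tfrac12-\alpha_t$ is discarded; every $i$ with $p_{i,r_k}\ge\tfrac12$ is not discarded; every $i$ entering $S_{up}$ satisfies $i\succ r_k$. An induction on $t$ using these facts establishes, for all executed rounds, $S_t\subseteq\{r_1,\dots,r_{k-1}\}$ (hence $k_t\ge1$ and $|S_t|\le k-1$), $r_k\in R_t$, and $\{r_1,\dots,r_k\}=S_t\cup(\{r_1,\dots,r_k\}\cap R_t)$ with the latter of size $k_t$. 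Since $|S_t|\le k-1$ always, termination can occur only via $|S_t\cup R_t|\le k$, which forces $S_t\cup R_t=\{r_1,\dots,r_k\}$, so the returned set equals $\{r_1,\dots,r_k\}$. Moreover, since by SST the below-$r_k$ items have non-decreasing gaps $\Delta_{r_k,r_{k+1}}\le\Delta_{r_k,r_{k+2}}\le\cdots$, once $\alpha_t\le\Delta_{r_k}$ the third fact discards every remaining below-$r_k$ item, so $R_{t+1}\subseteq\{r_1,\dots,r_k\}$ and the loop stops; hence on $E$ the algorithm runs at most $\lceil\log_2(1/\Delta_{r_k})\rceil+1$ rounds.

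For the sample complexity, by Theorem~\ref{Thm:TP-TKS} and Lemma~\ref{Lm:TP-DI} the expected cost of round $t$, given the state entering it, is $O(|R_t|\,\alpha_t^{-2}(\log(n/\delta)+\log t))$ (TKS and TKS2 give the $|R_t|\,\alpha_t^{-2}\log(k/\delta_t)$ piece, the $|R_t|-1$ DI calls the $|R_t|\,\alpha_t^{-2}\log(|R_t|/\delta_t)$ piece), so I would charge $O(4^t(\log(n/\delta)+\log t))$ to each item of $R_t$. An item $i$ is charged only while it lies in $R_t$, and on $E$: a below-$r_k$ item $i$ is discarded by round $\log_2(1/\Delta_i)+O(1)$ (second fact); $r_k$ stays only until termination, i.e.\ round $\log_2(1/\Delta_{r_k})+O(1)$; and a top item $r_j$ ($j<k$) enters $S_{up}$ once $\alpha_t\le\Delta_{r_j,r_k}$ (first fact) and is in any case gone by the termination round, so it survives at most $\log_2\!\big(1/\max\{\Delta_{r_j,r_k},\Delta_{r_k,r_{k+1}}\}\big)+O(1)$ rounds, which by the SST/STI sandwich $\max\{\Delta_{r_j,r_k},\Delta_{r_k,r_{k+1}}\}\le\Delta_{r_j,r_{k+1}}\le\Delta_{r_j,r_k}+\Delta_{r_k,r_{k+1}}$ equals $\log_2(1/\Delta_{r_j})+O(1)$. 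In every case item $i$ occupies $R_t$ only for $t\le\log_2(1/\Delta_i)+O(1)$, so the geometric sum $\sum_{t\le\log_2(1/\Delta_i)+O(1)}4^t(\log(n/\delta)+\log t)$ is $O\!\big(\Delta_i^{-2}(\log(n/\delta)+\log\log\Delta_i^{-1})\big)$; summing over $i\in[n]$ gives the claimed expected bound. The ``in expectation'' statement needs a little care since TKS is itself only an expected-cost routine and $E$ has probability only $1-\delta$; I would do the accounting via $\mathbb{E}[\text{total cost}]=\sum_t\mathbb{E}\!\big[\mathbf{1}\{\text{round }t\text{ run}\}\cdot\mathbb{E}[\text{cost of round }t\mid\text{state entering }t]\big]$ and bound $\mathbb{E}[|R_t|\,\mathbf{1}\{\text{round }t\text{ run}\}]$ by $\sum_i\Pr[i\in R_t]$.

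The step I expect to be the main obstacle is the complexity accounting for the top items $r_1,\dots,r_{k-1}$. Naively each $r_j$ is only guaranteed to leave $R_t$ at the round where $\alpha_t\approx\Delta_{r_j,r_k}$, giving a bound $\Delta_{r_j,r_k}^{-2}$, which can be far larger than the target $\Delta_{r_j}^{-2}=\Delta_{r_j,r_{k+1}}^{-2}$. The resolution is the interaction between two facts: the algorithm \emph{terminates} as soon as all below-$r_k$ items are gone---after only $O(\log(1/\Delta_{r_k}))$ rounds, so no item can linger longer---and the triangle inequalities tying $\Delta_{r_j,r_k}$, $\Delta_{r_k,r_{k+1}}$, $\Delta_{r_j,r_{k+1}}$ together; getting these to combine cleanly (and likewise handling the charge to $r_k$, whose survival is governed by the smallest below-$r_k$ gap $\Delta_{r_k,r_{k+1}}=\Delta_{r_k}$) is the delicate point. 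Checking the edge cases of the pivot lemma ($A_t\ne T_t$; a top item being the pivot) and making the expectation bookkeeping fully rigorous are the remaining technical chores.
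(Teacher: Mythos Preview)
Your proposal is correct and follows essentially the same approach as the paper: both condition on a global good event that all TKS/TKS2/DI calls succeed (with total failure probability $\le\sum_t\delta_t=\delta$), establish the pivot property $\Delta_{v_t,r_k}\le\alpha_t/3$ from the PAC guarantees of $A_t$ and TKS2, maintain the invariant $S_t\subseteq U\subseteq S_t\cup R_t$ by induction using Lemma~\ref{Lm:TP-DI}, and then charge the $O(|R_t|\alpha_t^{-2}\log(n/\delta_t))$ per-round cost to the items still in $R_t$; the key step for the top items---that each $r_j$ with $j\le k$ survives at most $\log_2(1/\max\{\Delta_{r_j,r_k},\Delta_{r_k,r_{k+1}}\})+O(1)$ rounds and that STI gives $\max\{\Delta_{r_j,r_k},\Delta_{r_k,r_{k+1}}\}\ge\Delta_{r_j}/2$---is exactly the paper's ``$\min\{\Delta_{i,r_k}^{-1},\Delta_{r_k,r_{k+1}}^{-1}\}\le 2\Delta_i^{-1}$'' observation. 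One small slip: where you say ``the third fact discards every remaining below-$r_k$ item'' you mean the second fact.
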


	\textbf{Remark.} i) According to the lower bound stated in Theorem~\ref{Thm:LB-TM}, SEEKS is worst-instance optimal up to a log factor. We conjecture that the true lower bound and upper bound of the exact $k$-selection depend on $\log(k/\delta)$, just as that of the PAC $k$-selection, but it remains an open problem for future studies. ii) Different from Theorem~\ref{Alg:SEEBS}, the phrase ``in expectation'' in Theorem~\ref{Thm:TP-SEEKS} comes from the sample complexity of TKS (stated in Theorem~\ref{Thm:TP-TKS}). If one can find a PAC $k$-selection algorithm that uses no more than $O(n\epsilon^{-2}\log(n/\delta))$ comparisons with probability $1-\delta$, then by replacing TKS and TKS2 with this algorithm, we can remove ``in expectation'' in Theorem~\ref{Thm:TP-SEEKS}. 

\section{Numerical Results}

	In this section, we perform experiments on the synthetic dataset with equal noise-levels (i.e., $\Delta_{i,j}$ is a constant) and public election datasets provided by PrefLib \cite{PrefLib13}. In the supplementary material, we present the results of the synthetic dataset with unequal noise-levels and the numerical illustrations of the growth rates of the exact best-$k$ items selection bounds. The codes can be found in our GitHub page.\footnote{https://github.com/WenboRen/Topk-Ranking-from-Pairwise-Comparisons.git}
	
	\begin{figure*}[!bht]\centering
		\subfigure[PAC best one selection with $\epsilon = 0.08$ and $\delta = 0.01$.]{\includegraphics[width=0.23\textwidth]
		{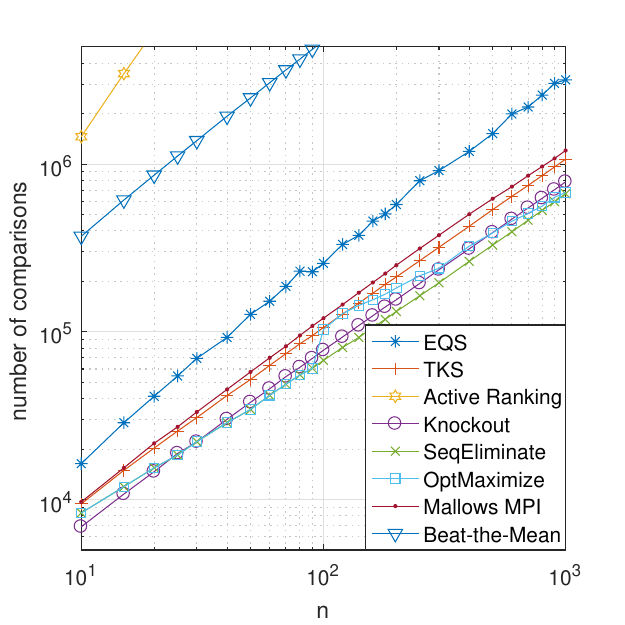}}\ \ \ 
		\subfigure[PAC best one selection with $\epsilon = 0.001$ and $\delta = 0.01$.]{\includegraphics[width=0.23\textwidth]{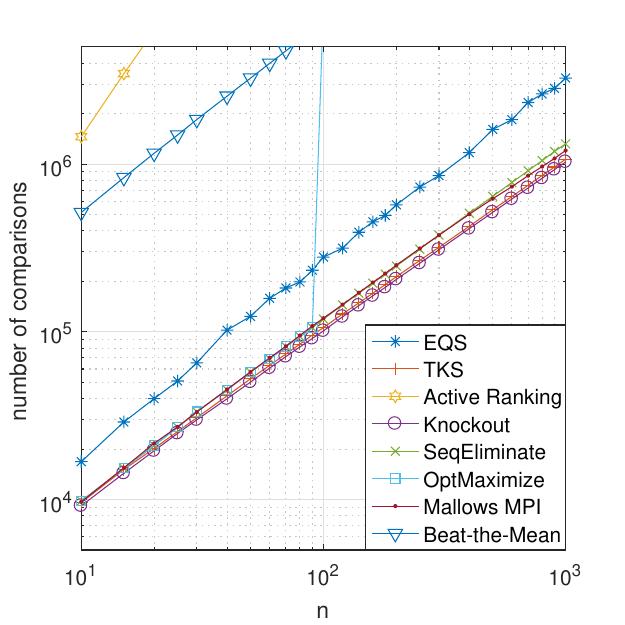}}\ \ \ 
		\subfigure[PAC $k$-selection with $k = 2$, $\epsilon = 0.08$, and $\delta = 0.01$.]{\includegraphics[width=0.23\textwidth]{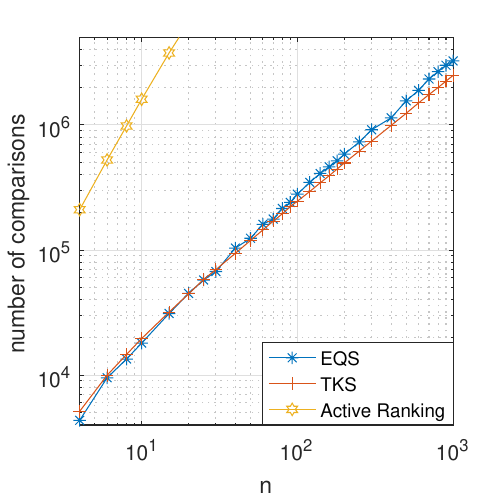}}\ \ \ 
		\subfigure[PAC $k$-Selection with $k = 4$, $\epsilon = 0.08$, and $\delta = 0.01$.]{\includegraphics[width=0.23\textwidth]{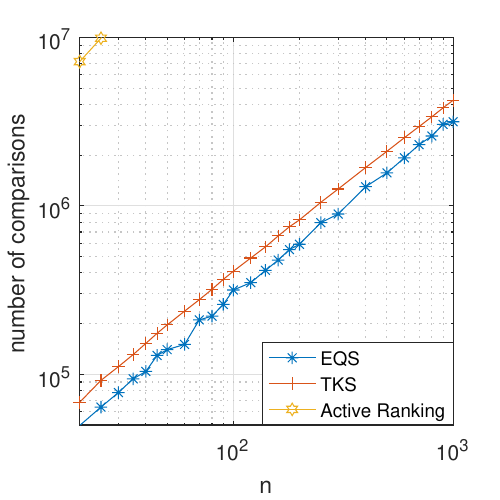}}\ \ \ 
		
		\subfigure[PAC $k$-selection with $n=1000$, $\epsilon = 0.08$, and $\delta = 0.01$.]{\includegraphics[width=0.23\textwidth]{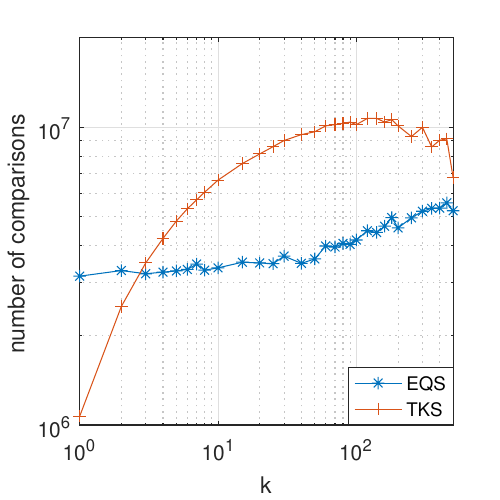}}\ \ \ 
		\subfigure[Exact $k$-selection with $k = 1$ and $\delta = 0.01$.]{\includegraphics[width=0.23\textwidth]{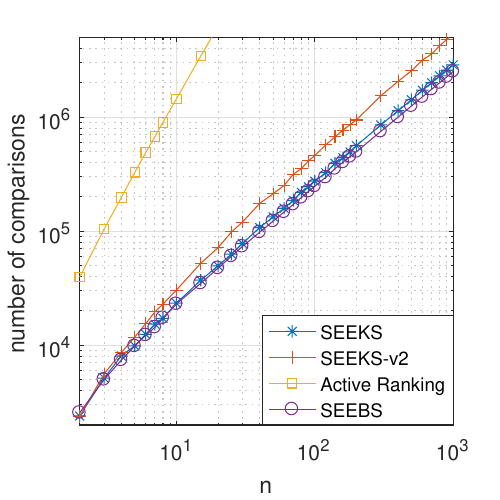}}\ \ \ 
		\subfigure[Exact $k$-selection with $k = 50$ and $\delta = 0.01$.]{\includegraphics[width=0.23\textwidth]{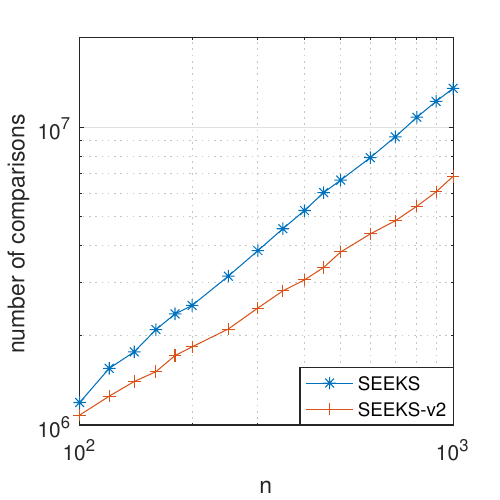}}\ \ \ 
		\subfigure[Exact $k$-selection with $n = 1000$ and $\delta = 0.01$.]{\includegraphics[width=0.23\textwidth]{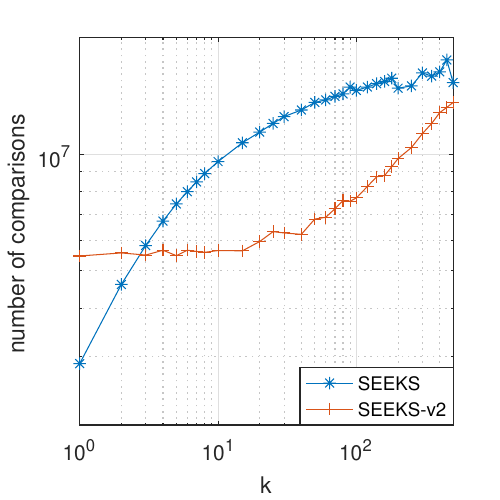}}
		\caption{Numerical results on the equal noise-level dataset, i.e., $p_{i,j} = 0.6$ for any items $i \succ j$.}\label{Fig:Homo}
	\end{figure*}

\subsection{Numerical Results on Synthetic Data}\label{Sec:NR-Homo}
	In this subsection, we provide numerical simulations for our algorithms and those in related works under equal noise levels, i.e., we set $p_{i,j} = 0.6$ for all items $i$ and $j$ with $i \succ j$. This dataset has also been used in previous works \cite{BeatTheMean2011,MallowsMPI2014,MaxingAndRanking2017,MaximumSelectionAndRanking2017,RankingLimits2018}. The results are presented in Figure~\ref{Fig:Homo}, and every data point of it is averaged over 100 independent trials.

\subsubsection{PAC Best Item Selection}
	For PAC best item selection, the algorithms we compare with our EQS and TKS algorithms are: i) Knockout \cite{MaximumSelectionAndRanking2017}, ii) Seq-Eliminate \cite{MaxingAndRanking2017}, iii) Opt-Maximize \cite{MaxingAndRanking2017}, iv) Active Ranking \cite{ActiveRanking2019}, v) Beat-the-Mean \cite{BeatTheMean2011}, and vi) MallowsMPI \cite{MallowsMPI2014}. Knockout and Opt-Maximize are two PAC best item selection algorithms, and their sample complexities are upper bounded by $O(n\epsilon^{-2}\log\delta^{-1})$, which is of the same order as TKS. Seq-Eliminate and Beat-the-Mean are also PAC best item selection algorithms, but their sample complexities are $O(n\epsilon^{-2}\log(n/\delta))$, higher than that of TKS by a log factor. Active Ranking \cite{ActiveRanking2019} and MallowsMPI are exact selection algorithms with sample complexity $O(n\log(n/\delta))$. 
	
	The numerical results are summarized in Figure~\ref{Fig:Homo} (a) (b). We set $\delta = 0.01$, and examine how the number of comparisons conducted increases with $n$. In Figure~\ref{Fig:Homo}~(a), we set $\epsilon = 0.08$, and in Figure~\ref{Fig:Homo}~(b), we set $\epsilon = 0.001$. 
	
	According to the illustrated results, we can see that when $\epsilon$ is small (i.e., $\epsilon = 0.001$), the performance of our algorithm TKS is almost the same as those of Knockout and MallowsMPI, the best of previous works. We note that Knockout and MallowsMPI are only designed for best item selection and it is not obvious how to extend them to cases with $k > 1$. Thus, although our TKS works for all values of $k$, its performance is close to the best of the state-of-the-art when $k=1$. 
	
	\begin{figure*}[!bht]\centering
		\subfigure[Irish election, $k=1$.]{\includegraphics[width=0.23\textwidth]
			{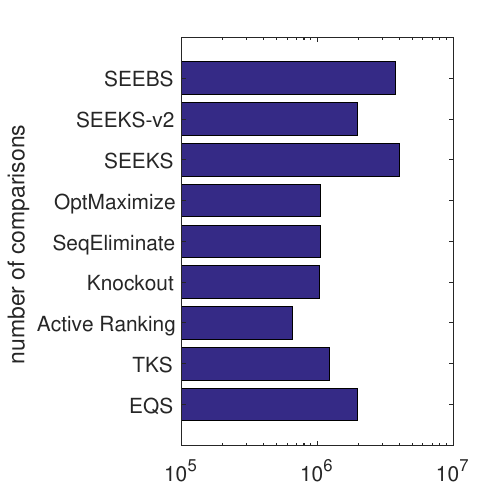}}\ \ \ 
		\subfigure[Irish election, $k=4$.]{\includegraphics[width=0.23\textwidth]{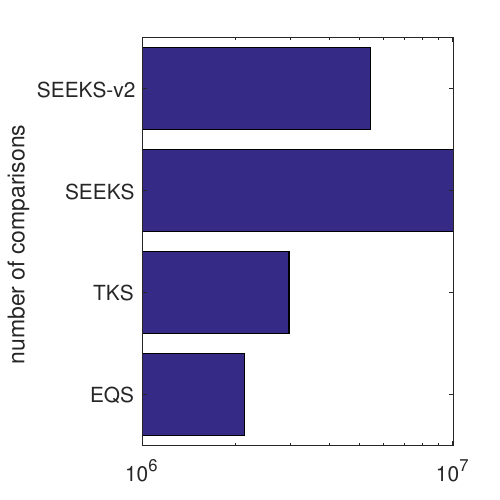}}\ \ \ 
		\subfigure[Web seach, $k=1$.]{\includegraphics[width=0.23\textwidth]{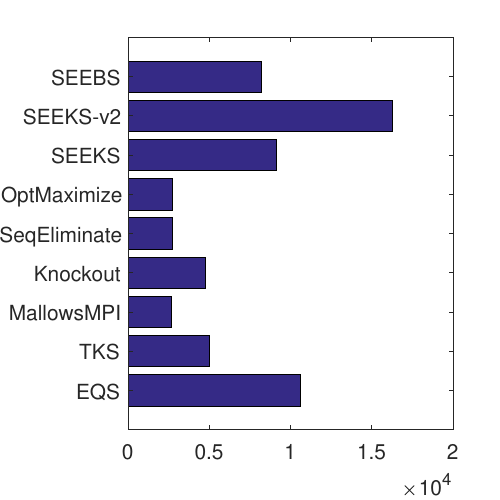}}\ \ \ 
		\subfigure[Web seach, $k=4$.]{\includegraphics[width=0.23\textwidth]{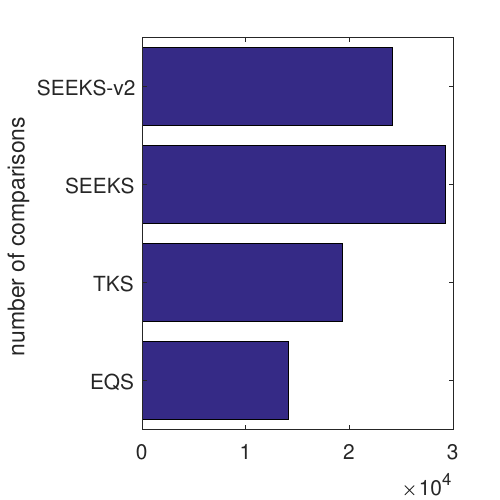}}\ \ \ 
		\caption{Numerical results on public election datasets. MallowsMPI is not in (1) because its correct probability does not reach $1-\delta$ for the Irish election dataset. Beat-the-mean and Active Ranking are not in some subfigures because they do not return in a reasonable time.}\label{Fig:PrefLib}
	\end{figure*}

\subsubsection{PAC $k$-Selection}
	For the PAC $k$-selection, we provide the simulation results for EQS, TKS, and Active Ranking. 
	
	The results are summarized in Figure~\ref{Fig:Homo} (c)-(e). In Figure~\ref{Fig:Homo}~(c)-(d), we set $\epsilon = 0.08$ and $\delta = 0.01$, vary the values of $n$, and compare EQS of TKS with $k=\{2,4\}$. In Figure~\ref{Fig:Homo}~(e), we set $\epsilon = 0.08, \delta = 0.01$, and $n=1000$, and compare EQS and TKS with different values of $k$. 
	
	As presented in Figure~\ref{Fig:Homo} (c)-(e), we can see that when $k$ is small (i.e., $k\leq2$), TKS outperforms EQS, but when $k$ is not too small, EQS uses fewer comparisons. The sample complexity upper bound of TKS is $O(n\epsilon^{-2}\log(k/\delta))$, which is lower than the $O(n\epsilon^{-2}\log(n/\delta))$ complexity of EQS. However, in practice, for most values of $k$, EQS consumes fewer comparisons. One explanation is that the constant factor of TKS is larger than that of EQS. There may be two reasons: First, in each call of EQS on $S$, the sub-call of EQS is executed on $S_{up}$ or $S_{down}$, whose expected sizes are less than $|S|/2$, while in TKS, each iteration removes no more than a half of the items. Second, in TKS, the value $\epsilon_t$ input to DI is less than $\epsilon$, which is used in EQS.

\subsubsection{Exact $k$-Selection}
	For the exact $k$-selection algorithm, we only provide numerical results for the algorithms proposed in this paper: SEEBS, SEEKS, and SEEKS-v2, a variation of SEEKS. Here, SEEKS-v2 is almost the same as SEEKS. But in Line~4, TKS is replaced with EQS, since EQS has a better empirical performance than TKS when $k$ is not too small. We note that the sample complexity upper bound of SEEKS-v2 is of the the same order as SEEKS (ignoring constant factors). We do not compare the algorithm proposed by \citet{MNLListwise2018} because it is unclear how to choose the parameters to let the confidence be $1-\delta$. We do not compare the algorithm given by \citet{ListwisePLMaxing2019} since it requires the system to be able to conduct comparisons over more than two items, which is not assumed in this paper. 

	In Figure~\ref{Fig:Homo}~(f), we compare SEEBS, SEEKS, and SEEKS-v2 with $k=1$ and $\delta = 0.01$. In Figure~\ref{Fig:Homo}~(g), we fix $k=50$ and $\delta = 0.01$, vary $n$, and compare the two versions of SEEKS. In Figure~\ref{Fig:Homo}~(h), we fix $n=1000$ and $\delta = 0.01$, vary $k$, and compare the two versions of SEEKS. 
	
	From Figure~\ref{Fig:Homo}~(f), we can see that SEEBS is slightly better than SEEKS, which is due to the choices of confidences input to the calls of DI in these two algorithms.  Also, we can see that SEEBS and SEEKS are better than SEEKS-v2, especially when $n$ is large. This is because the empirical performance of EQS is worse than TKS when $k = 1$. According to Figure~\ref{Fig:Homo}~(g) and (h), SEEKS-v2 consumes fewer comparisons when $k$ is not too small. An explanation is that in practice, EQS uses fewer comparisons than TKS when $k$ is not too small.

\subsection{Numerical Results on Public Election Data}

	In this subsection, we perform numerical experiments on public election datasets provided in PrefLib \cite{PrefLib13}. To be specific, we use the Irish election dataset ``ED-00001-00000001.pwg'' \cite{IrishElection2011} and the clean web search dataset ``ED-00015-00000047.pwg'' \cite{CleanWebSearchData2014}. Both datasets can be found in PrefLib.org.
	
	The Irish Election dataset contains $n=12$ candidates and 43,942 votes on them. 
	The web search dataset contains $n = 28$ pages and 1134 samples of pairwise preferences on them. For every pair of items $i$ and $j$ in each dataset, the dataset records the number of votes or samples $N_{i,j}$ that show preference on item $i$ to item $j$. 
	From these records, we extract $p_{i,j} := N_{i,j}/(N_{i,j} + N_{j,i})$ for any two items $i$ and $j$. We note that these two dataset do not satisfy the SST or the STI and do not imply a strict order. Thus, we use the Borda-Scores for them to get the true rankings. 
	
	In the experiments, we set $\epsilon = 0.001$, $\delta = 0.01$, and $k = \{1, 4\}$. Surprisingly, although these two datasets do not satisfy SST or STI, our algorithms EQS, TKS, SEEBS, and SEEKS can still return correct results with correct probability at least $1-\delta$ (in the experiments, all runs of them return correct results). In fact, we have done experiments on more datasets and find that if there is a small number $\gamma > 1$ (e.g., $\gamma < 5$) such that for any $i\succ j \succ k$, $p_{i,k} \geq  \gamma^{-1}\max\{p_{i,j}, p_{j,k}\}$ and $\Delta_{i,k} \leq \gamma(\Delta_{i,j} + \Delta_{j,k})$, then our algorithms can guarantee at least $1-\delta$ correct probability.
	
	From the results presented in Figure~\ref{Fig:PrefLib}, we can see that for the Irish election dataset, the performances of our algorithms EQS and TKS are close to the best of the previous works, which indicates that even if they are not designed for $k=1$ and these types of datasets, they still have promising performances on some real-world datasets. The results also show positive evidence on our theoretical results, i.e., TKS (SEEKS) performs better than EQS (SEEKS-v2) when $k$ is small ($k=1$) and performs worse when $k$ is large ($k=4$).

\section{Conclusion}
	This paper studied the sample complexity bounds for selecting the PAC or exact best-$k$ items from pairwise comparisons. For PAC $k$-selection, we first proved an $\Omega(n\epsilon^{-2}\log(k/\delta))$ lower bound, and then proposed an algorithm with expected sample complexity $O(n\epsilon^{-2}\log(k/\delta))$, which implies that both our upper bound and lower bound are tight up to a constant factor. For exact $k$-selection, we first proved a worst-instance lower bound, and then proposed an algorithm for $k=1$ that is optimal up to a loglog factor. Finally, we proposed an algorithm for general $k$-values that is optimal up to a log factor. The numerical results in this paper also confirm our theoretical results.
	
\section*{Acknowledgements}
	This work has been supported in part by NSF grants CAREER CNS-1943226, CNS-1901057, CNS-1758757, CNS-1719371, CNS-1717060, ECCS-1818791, and CCF-1758736, a Google Faculty Research Award, and an IITP grant (No. 2017-0-00692). 
	
	We express our sincere gratitude to those who fought or are fighting against COVID-19.

\bibliography{topk}

\begin{thebibliography}{}

\bibitem[Agarwal et~al., 2017]{LimitedRounds2017}
Agarwal, A., Agarwal, S., Assadi, S., and Khanna, S. (2017).
\newblock Learning with limited rounds of adaptivity: {C}oin tossing,
  multi-armed bandits, and ranking from pairwise comparisons.
\newblock In {\em Conference on Learning Theory}, pages 39--75.

\bibitem[Betzler et~al., 2014]{CleanWebSearchData2014}
Betzler, N., Bredereck, R., and Niedermeier, R. (2014).
\newblock Theoretical and empirical evaluation of data reduction for exact
  kemeny rank aggregation.
\newblock {\em Autonomous Agents and Multi-Agent Systems}, 28(5):721--748.

\bibitem[Bradley and Terry, 1952]{BradleyTerry1952}
Bradley, R.~A. and Terry, M.~E. (1952).
\newblock Rank analysis of incomplete block designs: {I}. the method of paired
  comparisons.
\newblock {\em Biometrika}, 39(3/4):324--345.

\bibitem[Braverman et~al., 2019]{SortedTopKInRounds2019}
Braverman, M., Mao, J., and Peres, Y. (2019).
\newblock Sorted top-k in rounds.
\newblock In {\em Conference on Learning Theory}, pages 342--382. PMLR.

\bibitem[Busa-Fekete et~al., 2014]{MallowsMPI2014}
Busa-Fekete, R., H{\"u}llermeier, E., and Sz{\"o}r{\'e}nyi, B. (2014).
\newblock Preference-based rank elicitation using statistical models: {T}he
  case of mallows.
\newblock In {\em International Conference on Machine Learning}, pages
  1071--1079. PMLR.

\bibitem[Chen and Li, 2015]{ChenBestArmIdentification2015}
Chen, L. and Li, J. (2015).
\newblock On the optimal sample complexity for best arm identification.
\newblock {\em arXiv preprint arXiv:1511.03774}.

\bibitem[Chen et~al., 2017]{BestArmIdentification2017}
Chen, L., Li, J., and Qiao, M. (2017).
\newblock Towards instance optimal bounds for best arm identification.
\newblock In {\em Conference on Learning Theory}, pages 535--592.

\bibitem[Chen et~al., 2013]{CrowdSourcing2013}
Chen, X., Bennett, P.~N., Collins-Thompson, K., and Horvitz, E. (2013).
\newblock Pairwise ranking aggregation in a crowdsourced setting.
\newblock In {\em ACM International Conference on Web Search and Data Mining},
  pages 193--202. ACM.

\bibitem[Chen et~al., 2018]{MNLListwise2018}
Chen, X., Li, Y., and Mao, J. (2018).
\newblock A nearly instance optimal algorithm for top-k ranking under the
  multinomial logit model.
\newblock In {\em Annual ACM-SIAM Symposium on Algorithms}, pages 2504--2522.
  SIAM.

\bibitem[Chen et~al., 2019]{BothOptimal2017}
Chen, Y., Fan, J., Ma, C., and Wang, K. (2019).
\newblock Spectral method and regularized {MLE} are both optimal for top-k
  ranking.
\newblock {\em The Annals of Statistics}, 47(4):2204.

\bibitem[Chen and Suh, 2015]{SpectralMLE2015}
Chen, Y. and Suh, C. (2015).
\newblock Spectral {MLE}: {T}op-k rank aggregation from pairwise comparisons.
\newblock In {\em International Conference on Machine Learning}, pages
  371--380.

\bibitem[Falahatgar et~al., 2017a]{MaxingAndRanking2017}
Falahatgar, M., Hao, Y., Orlitsky, A., Pichapati, V., and Ravindrakumar, V.
  (2017a).
\newblock Maxing and ranking with few assumptions.
\newblock In {\em Advances in Neural Information Processing Systems}, pages
  7060--7070.

\bibitem[Falahatgar et~al., 2018]{RankingLimits2018}
Falahatgar, M., Jain, A., Orlitsky, A., Pichapati, V., and Ravindrakumar, V.
  (2018).
\newblock The limits of maxing, ranking, and preference learning.
\newblock In {\em International Conference on Machine Learning}, pages
  1427--1436. PMLR.

\bibitem[Falahatgar et~al., 2017b]{MaximumSelectionAndRanking2017}
Falahatgar, M., Orlitsky, A., Pichapati, V., and Suresh, A.~T. (2017b).
\newblock Maximum selection and ranking under noisy comparisons.
\newblock In {\em International Conference on Machine Learning}, pages
  1088--1096. JMLR.

\bibitem[Feige et~al., 1994]{NoisyComputing1994}
Feige, U., Raghavan, P., Peleg, D., and Upfal, E. (1994).
\newblock Computing with noisy information.
\newblock {\em SIAM Journal on Computing}, 23(5):1001--1018.

\bibitem[Heckel et~al., 2019]{ActiveRanking2019}
Heckel, R., Shah, N.~B., Ramchandran, K., Wainwright, M.~J., et~al. (2019).
\newblock Active ranking from pairwise comparisons and when parametric
  assumptions do not help.
\newblock {\em The Annals of Statistics}, 47(6):3099--3126.

\bibitem[Heckel et~al., 2018]{ApproximateRanking2018}
Heckel, R., Simchowitz, M., Ramchandran, K., and Wainwright, M.~J. (2018).
\newblock Approximate ranking from pairwise comparisons.
\newblock In {\em International Conference on Artificial Intelligence and
  Statistics}, pages 1057--1066.

\bibitem[Hoare, 1961]{QuickSelect1961}
Hoare, C.~A. (1961).
\newblock Algorithm 65: find.
\newblock {\em Communications of the ACM}, 4(7):321--322.

\bibitem[Hoeffding, 1994]{Hoeffding1994}
Hoeffding, W. (1994).
\newblock Probability inequalities for sums of bounded random variables.
\newblock In {\em The Collected Works of Wassily Hoeffding}, pages 409--426.
  Springer.

\bibitem[Hunter, 2004]{MMAlgorithm2004}
Hunter, D.~R. (2004).
\newblock {MM} algorithms for generalized {B}radley-{T}erry models.
\newblock {\em The Annals of Statistics}, 32(1):384--406.

\bibitem[Jamieson et~al., 2014]{LIL2014}
Jamieson, K., Malloy, M., Nowak, R., and Bubeck, S. (2014).
\newblock lil{'UCB}: {A}n optimal exploration algorithm for multi-armed
  bandits.
\newblock In {\em Conference on Learning Theory}, pages 423--439.

\bibitem[Kalyanakrishnan et~al., 2012]{BanditLowerBound2012}
Kalyanakrishnan, S., Tewari, A., Auer, P., and Stone, P. (2012).
\newblock {PAC} subset selection in stochastic multi-armed bandits.
\newblock In {\em International Conference on Machine Learning}, volume~12,
  pages 655--662.

\bibitem[Katariya et~al., 2018]{CoarseRanking2018}
Katariya, S., Jain, L., Sengupta, N., Evans, J., and Nowak, R. (2018).
\newblock Adaptive sampling for coarse ranking.
\newblock In {\em International Conference on Artificial Intelligence and
  Statistics}, pages 1839--1848.

\bibitem[Lu and Boutilier, 2011]{IrishElection2011}
Lu, T. and Boutilier, C. (2011).
\newblock Budgeted social choice: {F}rom consensus to personalized decision
  making.
\newblock In {\em International Joint Conference on Artificial Intelligence}.

\bibitem[Luce, 2012]{Luce2012}
Luce, R.~D. (2012).
\newblock {\em Individual choice behavior: {A} theoretical analysis}.
\newblock Courier Corporation.

\bibitem[Mannor and Tsitsiklis, 2004]{BanditLowerBound2004}
Mannor, S. and Tsitsiklis, J.~N. (2004).
\newblock The sample complexity of exploration in the multi-armed bandit
  problem.
\newblock {\em Journal of Machine Learning Research}, 5(Jun):623--648.

\bibitem[Mattei and Walsh, 2013]{PrefLib13}
Mattei, N. and Walsh, T. (2013).
\newblock Pref{L}ib: {A} library of preference data
  \textsc{http://preflib.org}.
\newblock In {\em Proceedings of the 3rd International Conference on
  Algorithmic Decision Theory (ADT 2013)}, Lecture Notes in Artificial
  Intelligence. Springer.

\bibitem[McFadden, 1973]{MNLModel1973}
McFadden, D. (1973).
\newblock Conditional logit analysis of qualitative choice behavior.

\bibitem[Negahban et~al., 2017]{RankCentrarity2017}
Negahban, S., Oh, S., and Shah, D. (2017).
\newblock Rank centrality: {R}anking from pairwise comparisons.
\newblock {\em Operations Research}, 65(1):266--287.

\bibitem[Park et~al., 2015]{PreferenceCompletion2015}
Park, D., Neeman, J., Zhang, J., Sanghavi, S., and Dhillon, I. (2015).
\newblock Preference completion: {L}arge-scale collaborative ranking from
  pairwise comparisons.
\newblock In {\em International Conference on Machine Learning}, pages
  1907--1916.

\bibitem[Pfeiffer et~al., 2012]{AdaptivePolling2012}
Pfeiffer, T., Xi, A., Gao, A., Mao, Y., Chen, and Rand, D.~G. (2012).
\newblock Adaptive polling for information aggregation.
\newblock In {\em {AAAI} Conference on Artificial Intelligence}.

\bibitem[Plackett, 1975]{Plackett1975}
Plackett, R.~L. (1975).
\newblock The analysis of permutations.
\newblock {\em Journal of the Royal Statistical Society: Series C (Applied
  Statistics)}, 24(2):193--202.

\bibitem[Ren et~al., 2018]{PACRanking2018}
Ren, W., Liu, J., and Shroff, N.~B. (2018).
\newblock {PAC} ranking from pairwise and listwise queries: {L}ower bounds and
  upper bounds.
\newblock {\em arXiv preprint arXiv:1806.02970}.

\bibitem[Ren et~al., 2019]{ExactRanking2019}
Ren, W., Liu, J., and Shroff, N.~B. (2019).
\newblock On sample complexity upper and lower bounds for exact ranking from
  noisy comparisons.
\newblock In {\em Advances in Neural Information Processing Systems}, pages
  10014--10024.

\bibitem[Saha and Gopalan, 2019a]{Subsetwise2019}
Saha, A. and Gopalan, A. (2019a).
\newblock Active ranking with subset-wise preferences.
\newblock In {\em International Conference on Artificial Intelligence and
  Statistics}, pages 3312--3321.

\bibitem[Saha and Gopalan, 2019b]{ListwisePLMaxing2019}
Saha, A. and Gopalan, A. (2019b).
\newblock From {PAC} to instance-optimal sample complexity in the
  {P}lackett-{L}uce model.
\newblock {\em arXiv preprint arXiv:1903.00558}.

\bibitem[Shah et~al., 2017]{TransitivityModel2017}
Shah, N.~B., Balakrishnan, S., Guntuboyina, A., and Wainwright, M.~J. (2017).
\newblock Stochastically transitive models for pairwise comparisons:
  {S}tatistical and computational issues.
\newblock {\em IEEE Transactions on Information Theory}, 63(2).

\bibitem[Shah and Wainwright, 2017]{Simple2017}
Shah, N.~B. and Wainwright, M.~J. (2017).
\newblock Simple, robust and optimal ranking from pairwise comparisons.
\newblock {\em The Journal of Machine Learning Research}, 18(1):7246--7283.

\bibitem[Szörényi et~al., 2015]{OnlineRankingElicitation2015}
Szörényi, B., Busa-Fekete, R., Paul, A., and Hüllermeier, E. (2015).
\newblock Online rank elicitation for {P}lackett-{L}uce: {A} dueling bandits
  approach.
\newblock In {\em Advances in Neural Processing Systems}, pages 604--612.

\bibitem[Thurstone, 1927]{ThurstoneModel1927}
Thurstone, L.~L. (1927).
\newblock A law of comparative judgment.
\newblock {\em Psychological review}, 34(4):273.

\bibitem[Yue and Joachims, 2011]{BeatTheMean2011}
Yue, Y. and Joachims, T. (2011).
\newblock Beat the mean bandit.
\newblock In {\em International Conference on Machine Learning}, pages
  241--248.

\end{thebibliography}

\clearpage
\begin{appendix}

\twocolumn[{\huge Supplementary Material}\\ \\]

\begin{figure*}[!bht]\centering
	\subfigure[PAC best one selection with $\epsilon = 0.08$ and $\delta = 0.01$.]{\includegraphics[width=0.23\textwidth]{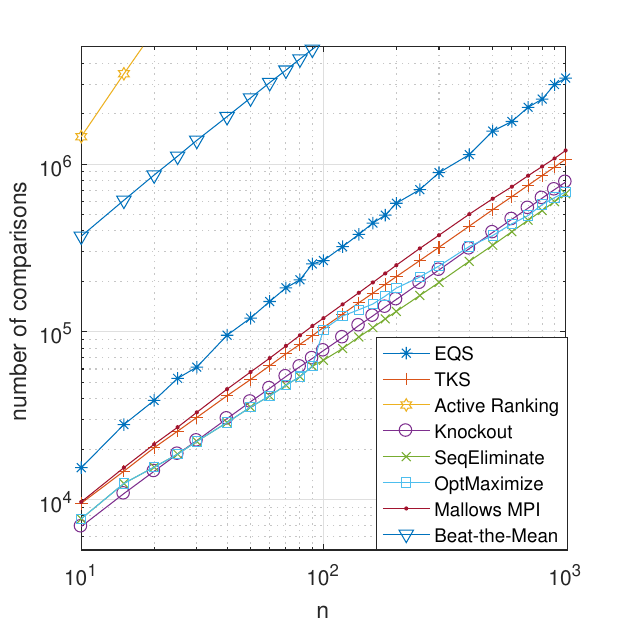}}\ \ \ 
	\subfigure[PAC best one selection with $\epsilon = 0.001$ and $\delta = 0.01$.]{\includegraphics[width=0.23\textwidth]{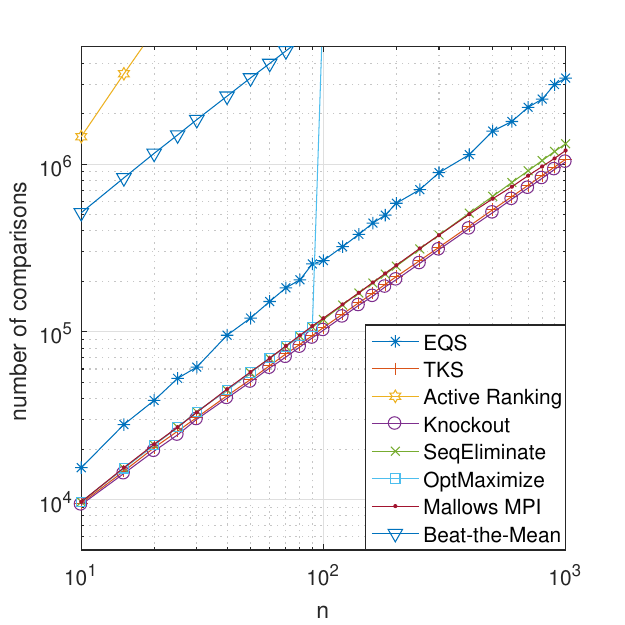}}\ \ \ 
	\subfigure[PAC $k$-selection with $k = 2$, $\epsilon = 0.08$, and $\delta = 0.01$.]{\includegraphics[width=0.23\textwidth]{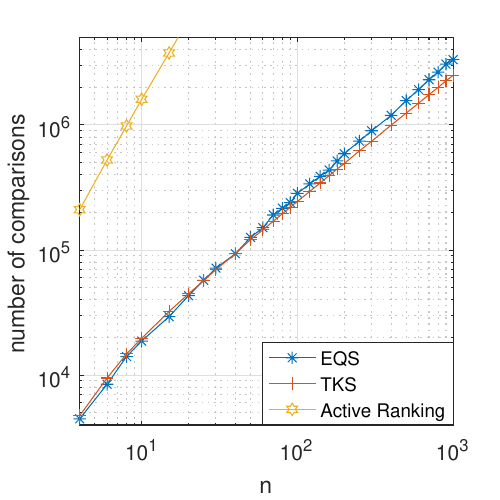}}\ \ \ 
	\subfigure[PAC $k$-Selection with $k = 4$, $\epsilon = 0.08$, and $\delta = 0.01$.]{\includegraphics[width=0.23\textwidth]{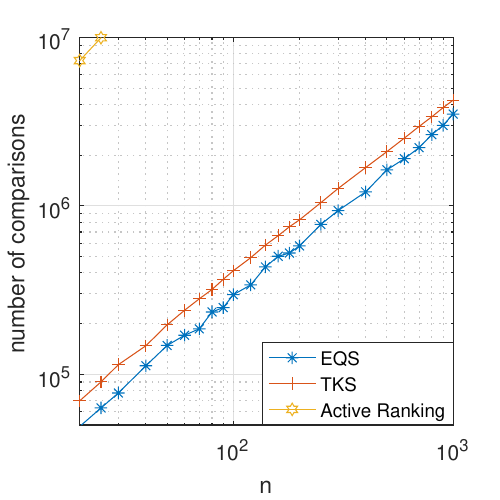}}\ \ \ 
	
	\subfigure[PAC $k$-selection with $n=1000$, $\epsilon = 0.08$, and $\delta = 0.01$.]{\includegraphics[width=0.23\textwidth]{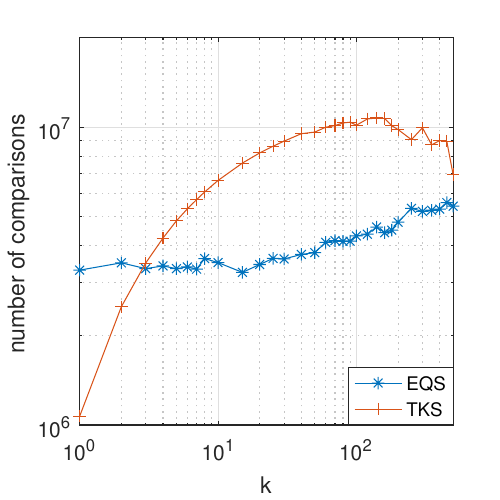}}\ \ \ 
	\subfigure[Exact $k$-selection with $k = 1$ and $\delta = 0.01$.]{\includegraphics[width=0.23\textwidth]{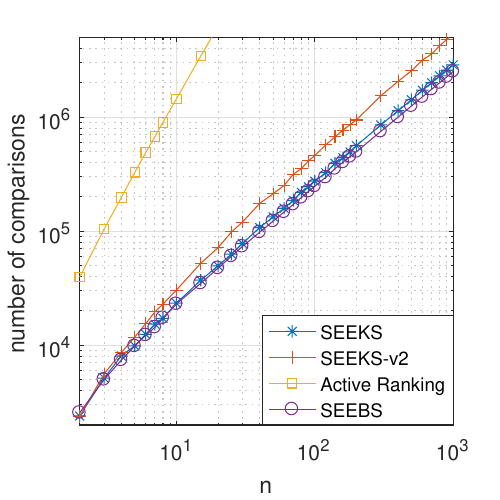}}\ \ \ 
	\subfigure[Exact $k$-selection with $k = 50$ and $\delta = 0.01$.]{\includegraphics[width=0.23\textwidth]{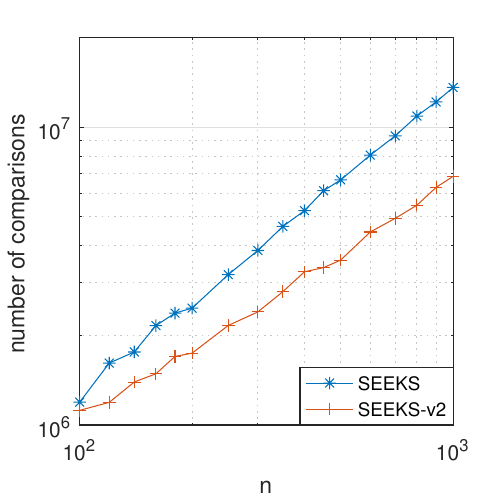}}\ \ \ 
	\subfigure[Exact $k$-selection with $n = 1000$ and $\delta = 0.01$.]{\includegraphics[width=0.23\textwidth]{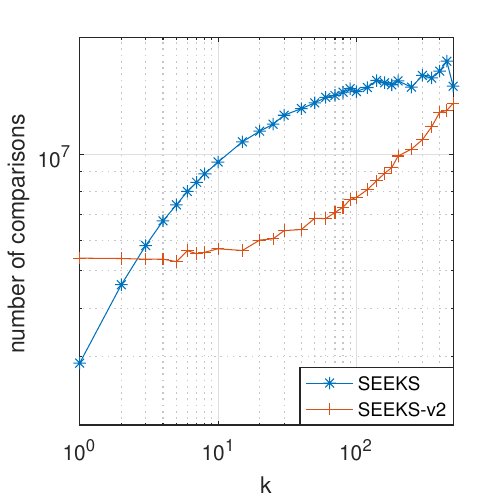}}
	\caption{Numerical results on the unequal noise-level dataset, i.e., for any items $i \succ j$, probability $p_{i,j}$ is independently drawn from the Uniform$(0.55, 0.7)$ distribution. Every point is averaged over 100 independent trials.}\label{Fig:Random}
\end{figure*}

\begin{figure*}[!bht]\centering
	\subfigure[$\Delta = 0.1$, $\delta = 0.01$.]{\includegraphics[width=0.23\textwidth]{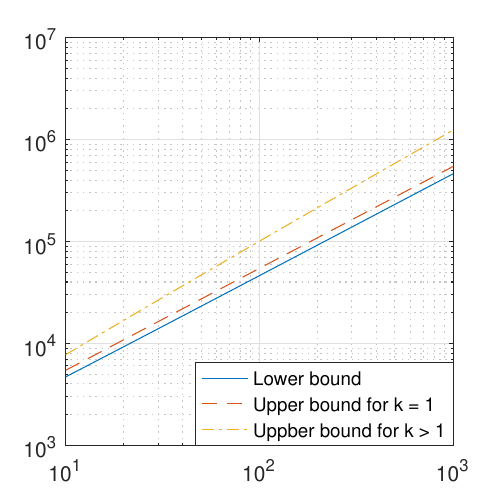}}\ \ \ 
	\subfigure[$\Delta = 10^{-10}$, $\delta = 0.1$.]{\includegraphics[width=0.23\textwidth]{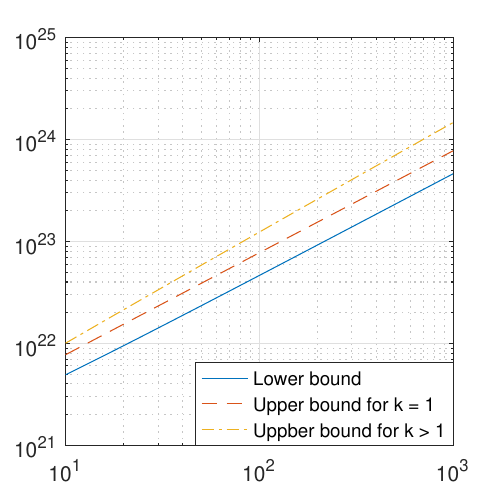}}\ \ \ 
	\subfigure[$\Delta = 0.1$, $\delta = 10^{-10}$.]{\includegraphics[width=0.23\textwidth]{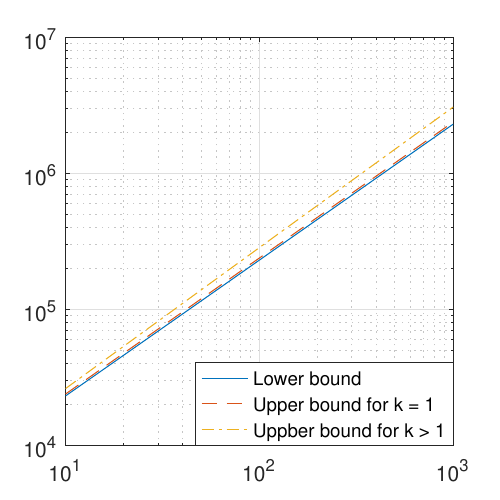}}\ \ \ 
	\subfigure[$\Delta = 10^{-10}$, $\delta = 10^{-10}$.]{\includegraphics[width=0.23\textwidth]{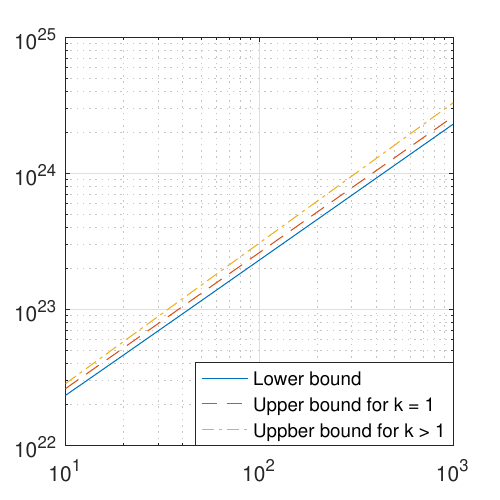}}
	\caption{The growth rates of the exact best-$k$ selection bounds (ignoring constant factors).}\label{Fig:tendency}
\end{figure*}

\section{Additional Numerical Results}
\subsection{Additional Numerical Results on Synthetic Data}
In this subsection, we provide numerical results of our algorithms and related previous works on another synthetic dataset, where the noise levels are not equal. To be specific, for any two items $i$ and $j$ with $i \succ j$, the value of $p_{i,j}$ is independently randomly drawn from the Uniform$(0.5\Delta, 1.5\Delta)$ distribution, where $\Delta = 0.1$. The results are presented in Figure~\ref{Fig:Random}. Every data point in every figure is averaged over 100 independent trials. 

Other than the dataset, the experiment setup and involved algorithms are the same as that in Section~\ref{Sec:NR-Homo}. From the results in Figure~\ref{Fig:Random}, we can see that the performances of the algorithms are similar to that presented in Section~\ref{Sec:NR-Homo}. We omit the detailed descriptions for brevity.

\subsection{Growth Rates of the Exact Best-$k$ Selection Bounds}

In this subsection, we use figures to illustrate the growth rates of the exact best-$k$ selection bounds. The lower bound refers to that of the Thurstone's model stated in Theorem~\ref{Thm:LB-TM}, i.e., $\Omega(\sum_{i\in[n]}\Delta_i^{-2}\log\delta^{-1}) + \tilde{\Omega}(\Delta_{r_k}^{-2}\log\log\Delta_{r_k}^{-1})$; the upper bound for $k=1$ refers to that of SEEBS stated in Theorem~\ref{Thm:TP-SEEBS}, i.e., $O(\sum_{i \neq r_1}\Delta_i^{-2}(\log\delta^{-1} + \log\log\Delta_i^{-1}))$; and the upper bound for $k > 1$ refers to that of SEEKS stated in Theorem~\ref{Thm:TP-SEEKS}, i.e., $O(\sum_{i\in[n]}\Delta_i^{-2}(\log(n/\delta) + \log\log\Delta_i^{-1}))$. In this subsection, we ignore the constant factors (the constant factors are also unclear) and show the growth rates of these bounds.

We fix $k = 1$, vary $n$ from $10$ to $1000$, and set $\Delta_i = \Delta$ for all items. The results are illustrated in Figure~\ref{Fig:tendency}. In Figure~\ref{Fig:tendency}~(a), we set $\Delta = 0.1$ and $\delta = 0.01$, in Figure~\ref{Fig:tendency}~(b) we set $\Delta = 10^{-10}$ and $\delta = 0.1$, in Figure~\ref{Fig:tendency}~(c) we set $\Delta = 0.1$ and $\delta = 10^{-10}$, and in Figure~\ref{Fig:tendency}~(d) we set $\Delta = 10^{-10}$ and $\delta = 10^{-10}$. 

In all subfigures of Figure~\ref{Fig:tendency}, we can see that the upper bound for $k > 1$ is always larger and grows faster than the upper bound for $k = 1$ and the lower bound. This is because the upper bound for $k > 1$ depends on $\log(n/\delta)$ while the other two bound depend on $\log\delta^{-1}$. 

From Figure~\ref{Fig:tendency}~(a) and (b), we can see that the upper bound for $k = 1$ is larger than the lower bound and this gap is larger for smaller values of $\Delta$, which is because the upper bound depends on $\Delta^{-2}n\log\log\Delta^{-1}$ while the lower bound depends on $\Delta^{-2}(n + \log\log\Delta^{-1})$. Another finding is that the growth rates of these two bounds have no obvious difference. The reason is that the terms multiplied to $n$ in these two bounds are $\Delta^{-2}$ and $\Delta^{-2}\log\log\Delta^{-1}$, respectively, which are extremely close even for large values of $\Delta$.

Based on Figure~\ref{Fig:tendency}~(c) and (d), we see that when $\delta$ is small, especially when $\delta$ is far smaller than $\Delta$, the gap between the upper bound for $k = 1$ and the lower bound is close to zero. The reason is that when $\delta$ is small, the terms $n\log\log\Delta^{-1}$ or $\log\log\Delta^{-1}$ are both dominated by $\log\delta^{-1}$. From a mathematical perspective, $\log\delta^{-1}$ is exponentially higher than $\log\log\Delta^{-1}$, which implies that when $\delta$ and $\Delta$ both approach zero with comparable rates, the influence of the $\log\log\Delta^{-1}$ term will vanish compared to the $\log\delta^{-1}$ term.

\section{Proofs}
\subsection{Proof of Theorem~\ref{Thm:LB-PKS}}

\RestateLBTKS*
\begin{proof}[\textbf{Proof of Theorem~\ref{Thm:LB-PKS}.}]
	A possible way to prove this lower bound is by reducing the pure exploration multi-armed bandit (PEMAB) problem (e.g., \cite{BanditLowerBound2004}) to the $k$-selection problem under the MNL model, which has been adopted by \citet{PACRanking2018,ExactRanking2019,Subsetwise2019}. We note that the definition of PAC best-$k$ items given by \citet{PACRanking2018} is different from that in this paper, and thus, we need to independently find a lower bound in this paper. We first show the reduction procedure given by \citet{ExactRanking2019}, then show how to reduce the PEMAB problem to the best-$k$ items selection problem, and finally prove the lower bound by invoking the results shown by \citet{BanditLowerBound2004,BanditLowerBound2012}.
	
	\textbf{Step 1} is to introduce the PEMAB problem with Bernoulli arms as well as the MNL model. In the PEMAB problem with Bernoulli arms, there are $n$ arms denoted by $a_1,a_2,...,a_n$. For each arm $a_i$, it holds a real number $\mu_{i} \in [1/4,3/4]$ denoting its mean reward. The $t$-th sample of arm $a_i$ returns an independent random reward $R_{i}^t$ according to the Bernoulli$(\mu_{i})$ distribution. We further assume that $(R^t_{i},i\in[n],t\in\mathbb{Z}^+)$ are independent. For positive integer $k$ with $k\leq n$, we use $\mu_{[k]}$ to denote the $k$-largest mean reward of these $n$ Bernoulli arms. 
	
	Given $k\in\{1,2,3,...,\lfloor n/2\rfloor\}$, $\delta \in(0,1/2)$ and $\epsilon \in (0,1/2)$, the PAC PEMAB problem is to find $k$ distinct arms with mean rewards no less than $\mu_{[k]} - \epsilon$ by adaptively sampling the arms, where the error probability is no more than $\delta$. 
	
	Under the MNL model, each item $i$ is assumed to hold a real number $\gamma_i$ representing the users' preference of this item. The larger the number, the more preferred this item. For any two items $i$ and $j$, a comparison over them returns item $i$ with probability $p_{i,j} = e^{\gamma_i}/(e^{\gamma_i} + e^{\gamma_j})$, and returns item $j$ with probability $p_{j,i} = e^{\gamma_j}/(e^{\gamma_i} + e^{\gamma_j})$. To simplify the notation, for any item $i$, we define $\theta_i = \exp(\gamma_i)$, and name $\theta_i$ as the preference score of item $i$. Thus, for any two items $i$ and $j$, we have $p_{i,j} = \theta_i / (\theta_i + \theta_j)$.
	
	\textbf{Step 2} is to introduce the reduction procedure. To do the reduction, we introduce Procedure~$\mathcal{P}_1$, which is described in Procedure~\ref{Prcd:BernoulliReduction}. 

	\makeatletter
	\renewcommand{\ALG@name}{Procedure}
	\makeatother
	\begin{algorithm}[h]
		\caption{$\mathcal{P}_1(a_i,a_j)$ \cite{ExactRanking2019}}\label{Prcd:BernoulliReduction}
		\textbf{Input:} Two Bernoulli arms $a_i$ and $a_j$ with unknown mean rewards $\mu_{i}$ and $\mu_{j}$, respectively;
		\begin{algorithmic}[1]
			\REPEAT
				\STATE Randomly choose an arm $a_X$ and sample it;
				\STATE Let $s\gets$ the sample result;
			\UNTIL{$s=1$}
			\STATE \textbf{return} $a_X$;
		\end{algorithmic}
	\end{algorithm}

	Claim~18 proved by \citet{ExactRanking2019} states that Procedure~$\mathcal{P}_1$ returns arm $a_i$ with probability $\mu_i/(\mu_i+\mu_j)$, and returns arm $a_j$ with probability $\mu_j/(\mu_i + \mu_j)$. 
	
	Let $\mathcal{A}$ be a PAC best-$k$ items selection algorithm. Now for each arm $a_i$, we create an artificial item $i$, and input items $1,2,3,...,n$ to Algorithm~$\mathcal{A}$. Whenever Algorithm~$\mathcal{A}$ wants to compare artificial items $i$ and $j$, we call Procedure~$\mathcal{P}_1$ on arms $a_i$ and $a_j$. If Procedure~$\mathcal{P}_1$ returns $a_i$, then we tell Algorithm~$\mathcal{A}$ that $i$ wins this comparison. Otherwise, we tell Algorithm~$\mathcal{A}$ that $j$ wins this comparison. Observe that the probabilities that Procedure~$\mathcal{P}_1$ returns an arm $a_i$ is $\mu_i/(\mu_i+\mu_j)$, which is of the same formula as the comparison probabilities under the MNL model. 
	
	Thus, if for $n$ items with preference scores $\theta_1 = \mu_1,\theta_2 = \mu_2,...,\theta_n = \mu_n$, Algorithm~$\mathcal{A}$ can find $k$ distinct items with preference scores no less than $\mu_{[k]}-\epsilon$ with probability at least $1-\delta$ by conducting $M$ comparisons, there exists an algorithm that solves the above PEMAB problem by calling Procedure~$\mathcal{P}_1$ for $M$ times without additional samples of these $n$ arms. 
	
	Since for any arm $a_i$, the mean reward $\mu_i$ is in $[1/4,3/4]$, any call of Procedure~$\mathcal{P}_1$ returns after at most $4$ samples in expectation. Thus, by substituting the comparisons in Algorithm~$\mathcal{A}$ with Procedure~$\mathcal{P}_1$, one can solve the PEMAB problem by $4M$ samples of arms in expectation. 
	
	\textbf{Step 3} is to prove a related lower bound for the PAC $k$-selection problem. 
	
	For $k=1$, $\epsilon < 1/8$, and $\delta < e^{-4}/4$, \citet{BanditLowerBound2004} proved that there is an instance such that to solve the PEMAB problem, at least $\Omega(n\epsilon^{-2}\log\delta^{-1})$ number of comparisons are needed in expectation. For $6\leq k \leq n/2$, $\epsilon \leq \sqrt{1/32}$, and $\delta \leq 1/4$, \citet{BanditLowerBound2012} proved that there is an instance such that to solve the PEMAB problem, at least $\Omega(n\epsilon^{-2}\log(k/\delta))$ number of comparisons are needed in expectation. 
	
	For $2\leq k \leq 5$ with additional knowledge about $(k-1)$ arms with mean rewards no less than $\mu_{[k]} - \epsilon$, to solve the PEMAB problem with $k >1$ and $n$ arms is equivalent to solve the PEMAB problem with $k=1$ and $(n-k+1)$ arms. Thus, the expected sample complexity of any algorithm is lower bounded by $\Omega((n-k+1)\epsilon^{-2}\log\delta^{-1}) = \Omega(n\epsilon^{-2}\log(k/\delta))$. 
	
	Thus, for $1\leq k \leq n/2$, $\epsilon < 1/8$, and $\delta < e^{-4}/4$, we conclude that $4M = \Omega(n\epsilon^{-2}\log(k/\delta))$, i.e., there is an instance such that to find $k$ distinct items with preference scores no less than $\mu_{[k]} - \epsilon$, any algorithm needs to conduct $\Omega(n\epsilon^{-2}\log(k/\delta))$ number of comparisons in expectation. 
	
	\textbf{Step 4} is to conclude the lower bound for PAC $k$-selection. We assume that Algorithm~$\mathcal{A}$ can find an $(\epsilon,k)$-optimal subset of $[n]$ with probability at least $1-\delta$ by conducting $o(n\epsilon^{-2}\log(k/\delta))$ number of comparisons in expectation, and we will show a contradiction to Step~3 to complete the proof of the desired lower bound. 
	
	Let $R$ be an $(\epsilon,k)$-optimal subset of $[n]$ returned by Algorithm~$\mathcal{A}$. Let item $i$ be an item in $R$ and item $j$ be an item in $[n]-R$. By the definition of $(\epsilon,k)$-optimality, we have $p_{i,j} \geq 1/2 - \epsilon$. Let $r_k$ be the item with the $k$-th largest preference score. If $R$ is the set of the best-$k$ items, then for any item $i$ in $R$, we have $i\succeq r_k$, i.e., $p_{i,r_k} \geq 1/2$. If $R$ is not the set of the best-$k$ items, then there exists an item $j$ not in $R$ such that $j \succeq r_k$, and thus, for any item $i$ in $R$, $p_{i,r_k} \geq p_{i,j} \geq 1/2 - \epsilon$. Hence, in any case, for any item $i$ in $R$, $p_{i,r_k} \geq 1/2 -\epsilon$. 
	
	For any item $i$ in $R$, since $p_{i,r_k} = {\mu_i}/(\mu_i + \mu_{r_k})$, either $\mu_i \geq \mu_{[k]} = \mu_{r_k}$; or $\mu_i < \mu_{r_k}$ and 
	\begin{align}
		-\epsilon \leq \frac{\mu_i}{\mu_i+ \mu_{r_k}} - \frac{1}{2} = \frac{\mu_i-\mu_{r_k}}{2(\mu_i+\mu_{r_k})} \leq \frac{\mu_i-\mu_{r_k}}{4}. \nonumber
	\end{align}
	
	Thus, every item $i$ in $R$ has $\mu_i \geq \mu_{[k]} - 4\epsilon$. This indicates that Algorithm~$\mathcal{A}$ can find $k$ distinct items with preferences no less than $\mu_{[k]} - 4\epsilon$ by conducting $o(n\epsilon^{-2}\log(k/\delta))$ number of comparisons in expectation.
	
	However, in Step~3, we have shown that for $\epsilon < 1/128$, to find $k$ distinct items with preference scores no less than $\mu_{[k]}-4\epsilon$, at least $\Omega(n\epsilon^{-2}\log(k/\delta))$ number of comparisons in expectation are needed, which leads to a contradiction to the assumption. Hence, Algorithm~$\mathcal{A}$ with sample complexity $o(n\epsilon^{-2}\log(k/\delta))$ assumed in this step does not exist. This completes the proof of Theorem~\ref{Thm:LB-PKS}.
\end{proof}

\subsection{Proof of Lemma~\ref{Lm:TP-DI}}
\RestateLMTPDI*
\begin{proof}[\textbf{Proof of Lemma~\ref{Lm:TP-DI}.}]
	DI terminates after at most $t_{max} = \lceil 2\epsilon^{-2}\log(4/\delta) \rceil$ comparisons, and the sample complexity follows from the choice of $t_{max}$. Now we focus on the proof of the correctness, i.e., with probability at least $1-\delta$, one of the five stated events happens.
	
	For any $t\in\mathbb{Z}^+$, we define a bad event that we do not want to happen,
	\begin{align}
		\mathcal{E}_{t} := \{|w_t / t - p_{i,v}| \geq b_t\}.\nonumber 
	\end{align}
	By the Chernoff-Hoeffding inequality \cite{Hoeffding1994}, we have that for all $t$ in $\mathbb{Z}^+$,
	\begin{align}
		\mathbb{P}\{\mathcal{E}_{t}\} \leq 2\exp\{-2tb_t^2\} \leq \frac{3\delta}{\pi^2t^2}. \nonumber 
	\end{align}
	We define another bad event 
	\begin{align}
		\mathcal{E}_{out} := \Big\{\Big|\frac{w_{t_{max}}}{{t_{max}}} - p_{i,v}\Big| \geq \frac{\epsilon}{2}\Big\},  \nonumber
	\end{align}
	whose probability, by Chernoff-Hoeffding inequality, is upper bounded by
	\begin{align}
		\mathbb{P}\{\mathcal{E}_{out}\}
		\leq & 2\exp\{-2t_{max}(\epsilon/2)^2\} \leq {\delta}/{2}. \nonumber 
	\end{align}
	Thus, by the union bound, the probability that some bad event happens is at most
	\begin{align}
		\mathbb{P}\Big\{\mathcal{E}_{out} \cup \Big(\bigcup_{t=1}^{\infty}\mathcal{E}_t \Big)\Big\} \leq \frac{\delta}{2} + \sum_{t=1}^\infty\frac{3\delta}{\pi^2 t^2} = \delta. \nonumber 
	\end{align}
	
	In the rest of the proof, we assume that no bad event happens, which has probability at least $1-\delta$. We split the rest of our proof in five cases, each for an event.
	
	Case 1: $p_{i,v} \geq 1/2 + \epsilon + s_u$. Since none of $\mathcal{E}_t$ happens, for any round $t$, we have $w_t/t > p_{i,v} - b_t \geq 1/2 - b_t - s_d$, which implies that item $i$ will not be added to $S_{down}$ by Line~9. If DI proceeds to Line~12, since $\mathcal{E}_{out}$ does not happen, we will have $w_{t_{max}} / t_{max} > p_{i,v} - \epsilon/2 \geq 1/2 + \epsilon/2 + s_u$, which implies that item $i$ will be added to $S_{up}$ by Line~13. 
	
	Case 2: $p_{i,v} \in (1/2 + s_u, 1/2 + \epsilon + s_u)$. Since none of $\mathcal{E}_t$ happens, for any round $t$, we have $w_t/t > p_{i,v} - b_t > 1/2 - b_t - s_d$, which implies that item $i$ will not be added to $S_{down}$ by Line~9. If DI proceeds to Line~12, since $\mathcal{E}_{out}$ does not happen, we will have $w_{t_{max}} / t_{max} > p_{i,v} - \epsilon/2 > 1/2 - \epsilon/2 - s_d$, which implies that item $i$ will not be added to $S_{down}$ by Line~15. 
	
	Case 3: $p_{i,v} \in [1/2 - s_d, 1/2 + s_u]$. Since none of $\mathcal{E}_t$ happens, for any round $t$, we have $|w_t/t - p_{i,v}| \leq b_t$, which implies that $w_t/t + b_t > p_{i,v} \geq 1/2 - s_d$ and $w_t/t - b_t < p_{i,v} \leq 1/2 + s_u$. Thus, item $i$ will not be added to $S_{up}$ or $S_{down}$ by Lines~7 or 9. If DI proceeds to Line~12, since $\mathcal{E}_{out}$ does not happen, we will have $w_{t_{max}}/t_{max} < p_{i,v} + \epsilon/2 \leq 1/2 + \epsilon/2 + s_u$ and $w_{t_{max}}/t_{max} > p_{i,v} - \epsilon/2 \geq 1/2 - \epsilon/2 - s_d$. Thus, item $i$ will not be added to $S_{up}$ or $S_{down}$ by Lines~13 or 15. Therefore, item $i$ will be added to $S_{mid}$. 
	
	Case 4: $p_{i,v} \in (1/2 - \epsilon - s_d, 1/2 - s_d)$. Since none of $\mathcal{E}_t$ happens, for any round $t$, we have $w_t/t < p_{i,v} + b_t < 1/2 + b_t + s_u$, which implies that item $i$ will not be added to $S_{up}$ by Line~7. If DI proceeds to Line~12, since $\mathcal{E}_{out}$ does not happen, we will have $w_{t_{max}} / t_{max} < p_{i,v} + \epsilon/2 < 1/2 + \epsilon/2 + s_u$, which implies that item $i$ will not be added to $S_{up}$ by Line~13. 
	
	Case 5: $p_{i,v} \leq 1/2 - \epsilon - s_d$. Since none of $\mathcal{E}_t$ happens, for any round $t$, we have $w_t/t < p_{i,v} + b_t \leq 1/2 + b_t + s_u$, which implies that item $i$ will not be added to $S_{up}$ by Line~7. If DI proceeds to Line~12, since $\mathcal{E}_{out}$ does not happen, we will have $w_{t_{max}} / t_{max} < p_{i,v} + \epsilon/2 \leq 1/2 - \epsilon/2 - s_d$, which implies that item $i$ will be added to $S_{down}$ by Line~15.
	
	The correctness follows from the above five cases, and the proof of Lemma~\ref{Lm:TP-DI} is complete. 
\end{proof}

\subsection{Proof of Theorem~\ref{Thm:TP-EQS}}
\RestateThmTPEQS*
\begin{proof}[\textbf{Proof of Theorem~\ref{Thm:TP-EQS}.}]
	The proof consists of two parts: the proof of the correctness and the proof of the sample complexity. To avoid ambiguity, we use EQS to denote the algorithm and subEQS to denote the EQS function called by the algorithm. 
	
	Let $\mathcal{E}$ be the event that all calls of DI return correct results, i.e., for each call of DI, one of the five events stated in Lemma~\ref{Lm:TP-DI} happens. By Lemma~\ref{Lm:TP-DI} and the union bound, $\mathcal{E}$ happens with probability at least $1-\delta/n$. 
	
	\textbf{Proof of the correctness.} We prove the correctness by induction. First let $n = 1$. In this case, $k$ must be one. Since the only item is chosen as the pivot, and the pivot is added to $S_{mid}$, EQS simply returns $\{1\}$ as the answer, which is correct with probability 1. Thus, when $n=1$, EQS returns an $(\epsilon, 1)$-optimal subset of $S$ with probability $1$. 
	
	Now we consider the case where $n > 1$. We make the following hypothesis to prove the correctness by induction. 
	
	\textbf{Hypothesis~1.} For all sets $S'$ with size less than $n$, $k' \in \{1,2,...,|S'|\}$, and $\delta' \in (0, \delta]$, EQS$(S',k',\epsilon,\delta')$ returns an $(\epsilon, k')$-optimal subset of $S'$ with probability at least $1-\delta'$. 
	
	We note that when $n=1$, EQS returns an $(\epsilon, 1)$-optimal subset of $S$ with probability $1$, and thus, Hypothesis~1 holds for $n=2$. 
	
	From now on till the end of the proof of the correctness, we assume that $\mathcal{E}$ happens and subEQS (i.e., the EQS called by the algorithm) also returns a correct result. We have shown that $\mathbb{P}\{\mathcal{E}\} \geq 1 - \delta/n$, and Hypothesis~1 claims that subEQS returns a correct result with probability at least $1-(n-1)\delta/n$. Thus, this assumption holds with probability at least $1-\delta$. 
	
	First, we show a property about the sets $S_{up}$, $S_{mid}$, and $S_{down}$. Since $\mathcal{E}$ happens, according to Lemma~\ref{Lm:TP-DI}, all items $i$ added to $S_{up}$ have $i \succ v$, all items $i$ added to $S_{mid}$ have $p_{i,v} \in (1/2-\epsilon/2, 1/2 + \epsilon/2)$, and all items $i$ added to $S_{down}$ have $v \succ i$. Here we note that $v$ is a pivot randomly picked from $S$.
	
	Now, let item $i$ in $S_{up} \cup S_{mid}$ and item $j$ in $S_{mid} \cup S_{down}$ be given. There are four cases about items $i$ and $j$.
	
	Case 1: item $i$ is in $S_{up}$ and item $j$ is in $S_{mid}$. Since $i \succ v$, we have $p_{i,j} \geq p_{v,j} \geq 1/2 - \Delta_{v,j} \geq 1/2 - \epsilon/2 \geq 1/2 - \epsilon$.
	
	Case 2: item $i$ is in $S_{up}$ and item $j$ is in $S_{down}$. In this case, we have $i \succ v \succ j$, which implies that $p_{i,j} > 1/2 > 1/2 - \epsilon$. 
	
	Case 3: item $i$ is in $S_{mid}$ and item $j$ is in $S_{mid}$. By the definition of STI, we have $\Delta_{i,j} \leq \Delta_{i,v} + \Delta_{j,v} \leq \epsilon$, which implies that $p_{i,j} \geq 1/2 - \Delta_{i,j} \geq 1/2 - \epsilon$. 
	
	Case 4: item $i$ is in $S_{mid}$ and item $j$ is in $S_{down}$. Since $v \succ j$, we have $p_{i,j} \geq p_{i,v} \geq 1/2 - \Delta_{i,v} \geq 1/2 - \epsilon/2 > 1/2 - \epsilon$. 
	
	Thus, from the above four cases, we conclude that for any item $i$ in $S_{up} \cup S_{mid}$ and $j$ in $S_{mid} \cup S_{down}$, $p_{i,j} \geq 1/2 - \epsilon$.
	
	Next, we finish the proof of the correctness by analyzing the following three cases. Let $R$ be the returned set of EQS. Let $i$ be an item in $R$ and $j$ be an item not in $R$. 
	
	Case 1: $|S_{up}| > k$. In this case, item $i$ is in $S_{up}$. If $j$ is in $S_{up}$, by Hypothesis~1, the set returned by subEQS is an $(\epsilon, k)$-optimal subset of $S_{up}$, and thus, $p_{i,j} \geq 1/2 - \epsilon$. For the case where $j$ is in $S_{mid} \cup S_{down}$, we have shown that $p_{i,j} \geq 1/2 - \epsilon$.  
	
	Case 2: $|S_{up}| \leq k$ and $|S_{up}|+|S_{mid}|\geq k$. In this case, we have that $i$ is in $S_{up}\cup S_{mid}$ and $j$ is in $S_{mid} \cup S_{down}$. We have shown that $p_{i,j} \geq 1/2 - \epsilon$.
	
	Case 3: $|S_{up}| + |S_{mid}| < k$. In this case, $j$ is in $S_{down}$. For the case where $i$ is in $S_{up} \cup S_{mid}$, we have shown that $p_{i,j} \geq 1/2 - \epsilon$. If $i$ is in $S_{down}$, then $i$ is in the returned set of subEQS, which by Hypothesis~1 implies that $p_{i,j} \geq 1/2 - \epsilon$.
	
	Therefore, if Hypothesis~1 holds for $n$, EQS returns a correct $(\epsilon, k)$-optimal subset of $S$ with probability at least $1-\delta$. Since $k \leq n$ and $\delta < 1/2$ are arbitrary, Hypothesis~1 holds for $n + 1$. Also, since Hypothesis~1 holds for $n=2$, Hypothesis~1 holds for all $n \geq 2$. This completes the proof of the correctness.
	
	\textbf{Proof of the sample complexity.} We prove the sample complexity by induction. Let $c_1 > 0$ be the hidden constant of the sample complexity of DI stated in Lemma~\ref{Lm:TP-DI}. For any positive integer $n_1$, we use $T(n_1,k_1,\epsilon,\delta_1)$ to denote the upper bound of the expected number of comparisons conducted by the call of EQS$([n_1], k_1, \epsilon, \delta_1)$, where $[n_1]$ denotes an arbitrary set consisting of $n_1$ items, $k_1$ is a positive integer with $k_1 \leq \min\{n_1, k\}$, and $\delta_1$ is in $(0,\delta]$.
	
	When there is only one item, we have $T(1, k_1,\epsilon,\delta_1) = 0$, as we do not need to conduct any comparison. When there are two items, since we only need to compare the two items in the call of DI, we have $T(2, k_1, \epsilon,\delta_1) \leq c_1\epsilon^{-2}\log\delta^{-1}$ for any $k_1 \leq \min\{2, k\}$ and $\delta_1\in(0,\delta]$. 
	
	Now we let $n_1 > 2$, $k_1\leq \min\{n_1,k\}$, and $\delta_1\in(0,\delta]$ be given, we make the following hypothesis. Note that we have shown that when $n_1 = 3$, Hypothesis~2 holds. 
	
	\textbf{Hypothesis~2.} For all $n_2 < n_1$, $k_2 \leq \min\{n_2,k_1\}$, and $\delta_2 \in (0,\delta_1]$, $T(n_2,k_2, \epsilon, \delta_2) \leq c_2n_2\epsilon^{-2}\log(n_2/\delta_2)$, where $c_2 > 0$ is a sufficiently large constant. 
	
	For the call of EQS$([n_1],k_1,\epsilon,\delta_1)$, we use $v$ to denote its pivot, and use $l$ to denote the rank of item $v$ in $[n_1]$, i.e., item $v$ ranks the $l$-th best in $[n_1]$. Since the pivot $v$ is picked at random, $l$ is uniformly distributed on $[n_1]$. 
	
	We recall that $\mathcal{E}$ is the event that all DIs called by EQS$([n_1],k_1,\epsilon,\delta_1)$ return correct results, i.e., for each call of DI, one of the five events stated in Lemma~\ref{Lm:TP-DI} happens. By Lemma~\ref{Lm:TP-DI}, $\mathcal{E}$ happens with probability at least $1 - \delta/n_1$.
		
	First, we consider the case where $\mathcal{E}$ does not happen. In this case, since $v$ is added to $S_{mid}$, we have $|S_{up}| \leq n_1-1$ and $|S_{down}| \leq n_1-1$, and subEQS (if existing) will only be executed on one of $S_{up}$ and $S_{down}$. Hence, in this case, the expected number of comparison conducted by EQS$([n_1], k_1, \epsilon, \delta_1)$ is 
	\begin{align}
		T_1 & \leq \max_{k'\in[k_1]}T(n_1-1, k',\epsilon,\delta_1) + \frac{c_1(n_1-1)}{\epsilon^2}\log\frac{n_1}{\delta_1} \nonumber \\
		& \leq (c_2 + c_1)\cdot\frac{n_1}{\epsilon^2}\log\frac{n_1}{\delta_1}. \nonumber
	\end{align}
	
	Next, we consider the case where $\mathcal{E}$ happens. In this case, since Lemma~\ref{Lm:TP-DI} states that no item less preferred than the pivot $v$ will be added to $S_{up}$ and no item more preferred than the pivot $v$ will be added to $S_{down}$, we have $|S_{up}| \leq l - 1$ and $|S_{down}| \leq n_1 - l$. If $l > k_1$, then we have $|S_{up}| + |S_{mid}| = n_1 - |S_{down}| \geq l > k_1$, which implies that subEQS (if existing) will only be executed on the set $S_{up}$, and the size of $S_{up}$ is no more than $(l-1)$. If $l \leq k_1$, then we have $|S_{up}| \leq l \leq k_1$, which implies that subEQS (if existing) will only be executed on $S_{down}$, and the size of $S_{down}$ is at most $(n_1-l)$. Hence, when $\mathcal{E}$ happens, the expected number of comparisons conducted by the call of EQS$([n_1], k_1, \epsilon, \delta_1)$ is 
	\begin{align}
		T_2 \leq & \frac{1}{n_1}\sum_{l=1}^{k_1}\Big[T\Big(n_1-l, k_1-l, \epsilon, \frac{n_1-1}{n_1}\cdot\delta_1\Big)\Big] \nonumber \\
		& + \frac{1}{n_1}\sum_{l=k_1+1}^{n_1}\Big[T\Big(l-1, k_1, \epsilon, \frac{n_1-1}{n_1}\cdot\delta_1\Big)\Big] \nonumber \\
		& + \frac{c_1(n_1-1)}{\epsilon^2}\log\frac{n_1}{\delta} \nonumber \\
		\leq & \frac{c_2}{n_1}\sum_{l=1}^{k_1}\Big[\frac{n_1-l}{\epsilon^2}\log\frac{(n_1-l)n_1}{(n_1-1)\delta_1}\Big] \nonumber \\
		& + \frac{c_2}{n_1}\sum_{l=k_1+1}^{n_1}\Big[\frac{l-1}{\epsilon^2}\log\frac{(l-1)n_1}{(n_1-1)\delta_1}\Big] + \frac{c_1n_1}{\epsilon^2}\log\frac{n_1}{\delta_1} \nonumber \\
		\leq &  \frac{c_2}{n_1}\Big\{\sum_{l=1}^{k_1}\Big[\frac{n_1-l}{\epsilon^2}\log\frac{n_1}{\delta_1}\Big] + \sum_{l=k_1+1}^{n_1}\Big[\frac{l-1}{\epsilon^2}\log\frac{n_1}{\delta_1}\Big]\Big\} \nonumber \\
		& + \frac{c_1n_1}{\epsilon^2}\log\frac{n_1}{\delta_1} \nonumber \\
		= & \frac{c_2}{n_1\epsilon^2} \log\frac{n_1}{\delta_1}\Big\{\frac{(2n_1-1-k_1)k_1}{2} \nonumber \\
		&  + \frac{(n_1+k_1-1)(n_1-k_1)}{2}\Big\}  + \frac{c_1n_1}{\epsilon^2}\log\frac{n_1}{\delta_1} \nonumber \\
		= & \frac{c_2}{n_1\epsilon^2}\log\frac{n_1}{\delta_1}\cdot\Big[k_1(n_1-k_1) + \frac{1}{2}n_1(n_1-1)\Big] \nonumber \\
		& + \frac{c_1n_1}{\epsilon^2}\log\frac{n_1}{\delta_1} \nonumber \\
		\leq & \frac{c_2}{n_1\epsilon^2}\log\frac{n_1}{\delta_1}\cdot\frac{3}{4}n_1^2 + \frac{c_1n_1}{\epsilon^2}\log\frac{n_1}{\delta_1} \nonumber \\
		= & \Big(\frac{3}{4}c_2 + c_1\Big)\frac{n_1}{\epsilon^2}\log\frac{n_1}{\delta_1}. \nonumber 
	\end{align}

	Summarizing the numbers of comparisons in these two cases, by $\mathbb{P}\{\mathcal{E}\} \geq 1-\delta_1/n_1$, $n_1 > 2$, and $\delta_1 < 1/2$, we get
	\begin{align}
	T(n_1,k_1,\epsilon,\delta_1) & \leq \Big(1-\frac{\delta_1}{n_1}\Big)\cdot T_2 + \frac{\delta_1}{n_1}\cdot T_1 \nonumber \\
	& \leq \Big(\Big(\frac{3}{4} + \frac{\delta_1}{4n_1}\Big)c_2 + c_1\Big)\frac{n_1}{\epsilon^2}\log\frac{n_1}{\delta_1} \nonumber \\
	& \leq \Big(\frac{19}{24}c_2 + c_1\Big)\frac{n_1}{\epsilon^2}\log\frac{n_1}{\delta_1}. \nonumber 
	\end{align}
	Choose $c_2 \geq 4.8c_1$, and then we have $T(n_1,k_1,\epsilon,\delta_1) \leq {c_2n_1}{\epsilon^{-2}}\log(n_1/\delta_1)$. Thus, if Hypothesis~2 holds for $n_1$, it will hold for $n_1+1$. We also recall that when $n_1\leq 3$, Hypothesis~2 holds. Therefore, by induction, Hypothesis~2 holds for all values of $n_1$. Hence, EQS$([n],k,\epsilon,\delta)$ terminates after at most $c_2n\epsilon^{-2}\log(n/\delta)$ number of comparisons in expectation. This completes the proof of the sample complexity, and the proof of Theorem~\ref{Thm:TP-EQS} is complete.
\end{proof}

\subsection{Proof of Theorem~\ref{Thm:TP-TKS}}
\RestateThmTPTKS*
\begin{proof}[\textbf{Proof of Theorem~\ref{Thm:TP-TKS}.}]
	We first prove the correctness of TKS and then prove its sample complexity. Here, we let $T$ be the number of rounds, and thus, the returned set is $R_{T+1}$. 
	
	\textbf{Proof of the correctness}. 
	\textbf{Step 1} is to prove that for any round $t$, $R_{t+1}$ contains an $(\epsilon_t,k)$-optimal subset of $R_t$. Let $b_1, b_2,...,b_k$ be the best-$k$ items of $R_t$, and denote $B = \{b_1,b_2,...,b_k\}$. Also, for all $l \in [k]$, we use $S_{t,s_l}$ to denote the split set that contains $b_l$. 
	
	We let $\mathcal{E}^t_{good}$ be the event that for all $l\in[k]$, the calls of EQS on $S_{t,s_l}$ return correct results. By Theorem~\ref{Thm:TP-EQS} and the union bound, we have $\mathbb{P}\{\mathcal{E}^t_{good}\} \geq 1 - \delta_t$. During the proof of the correctness, we assume that $\mathcal{E}^t_{good}$ happens for all $t$, and by the union bound, we have that 
	\begin{align}
	\mathbb{P}\{\bigcap_{t=1}^T\mathcal{E}^t_{good}\} 
	\geq & 1 - \sum_{t=1}^{T}\mathbb{P}\{(\mathcal{E}^t_{good})^\complement\} \nonumber \\
	\geq & 1 - \sum_{t=1}^T\delta_t \nonumber \\ 
	\geq & 1 - \sum_{t=1}^\infty\frac{6\delta}{\pi^2t^2} = 1 - \delta. \label{Eq:Pgood}
	\end{align}
	
	We complete Step~1 by constructing a subset $U \subset R_{t+1}$ that is an $(\epsilon_t, k)$-optimal subset of $R_t$. The construction consists of stages. We note that we only need to prove the existence of such a set, and thus, in the construction, we have the oracle knowledge about the values of $b_1, b_2,...,b_k$.
	
	Stage 0: Let $U$ be the empty set.
	
	Stage 1: If $b_1$ is not in $A_{t,s_1}$, then by Theorem~\ref{Thm:TP-EQS}, all items $i$ in $A_{t,s_1}$ have $p_{i, b_1} \geq 1/2 - \epsilon_t$. By the definition of SST, this implies that $p_{i,j} \geq 1/2 - \epsilon_t$ for all items $j$ in $R_t$. In this case, we let $U = A_{t,s_1}$, which is an $(\epsilon_t, k)$-optimal subset of $R_t$, and the construction of $U$ is complete. If $b_1$ is in $A_{t,s_1}$, then we add $b_1$ to $U$ and the construction proceeds to Stage~2.
	
	Stage $l$ for any $l$ in $\{2,3,...,k\}$: We hypothesize that either (i) the construction has ended at an earlier stage, or (ii) $b_1 \in A_{t,s_1}$,$b_2 \in A_{t,s_2}$,..., $b_{l-1}\in A_{t,s_{l-1}}$, and $U = \{b_1, b_2,..., b_{l-1}\}$. Now we assume that the construction has not ended, otherwise we skip this stage. If $b_l$ is not in $A_{t,s_l}$, then by the property of EQS stated in Theorem~\ref{Thm:TP-EQS} and the definition of SST, in $A_{t,s_l} - \{b_1,b_2,...,b_{l-1}\}$, there are at least $|A_{t,s_l}| - l + 1 = k-l+1$ items $i$ such that for all items $j$ in $R_{t} - \{b_1,b_2,...,b_{l-1}\}$, we have $p_{i,j} \geq p_{i,b_l} \geq 1/2 - \epsilon_t$. In this case, we add these $(k-l+1)$ items to $U$. Then $U$ is an $(\epsilon_t,k)$-optimal subset of $R_t$, and the construction of $U$ is complete. If $b_l$ is not in $A_{t,s_{l-1}}$, then we add $b_l$ to $U$, and the construction proceeds to Stage~$(l+1)$. 
	
	Stage~1 does not require any hypothesis, and after each Stage~$l$ for $l$ in $[k-1]$, the hypothesis required by Stage~$(l+1)$ is satisfied. Also, each stage adds at least one item to $U$. Hence, the construction completes after at most $k$ stages. From the above induction, we have that $U$ is an $(\epsilon_t, k)$-optimal subset of $R_t$. Thus, for any $t$, given that $\mathcal{E}_t$ happens, $R_{t+1}$ contains an $(\epsilon_t, k)$-optimal subset of $R_t$. 
	
	\textbf{Step 2} is to finish the proof of the correctness. Step~1 has shown that for each $t \in [T]$, there exists a set $U_{t+1}\subset R_{t+1}$ such that $U_{t+1}$ is an $(\epsilon_t, k)$-optimal subset of $R_t$. Recall that $T$ is the last round, and the loop ends only when $|R_t|$ reaches $k$. Thus, $|R_{T+1}| = k$, and $U_{T+1} = R_{T+1}$. 
	
	Let $t > 1$ be given, and let $u_{t+1}$ be an item in $U_{t+1}$. If $u_{t+1}$ is in $U_t$, then we let $u_t = u_{t+1}$, which implies that $p_{u_{t+1},u_t} = 1/2 \geq 1/2 - \epsilon_t$. If $u_{t+1}$ is not in $U_t$, then by that fact that $|U_t| = |U_{t+1}|$, $U_t - U_{t+1}$ contains at least one item, and we denote this item by $u_t$. By Step~1, $u_t$ has $p_{u_{t+1},u_t} \geq 1/2 - \epsilon_t$. Thus, in both cases, we have $p_{u_{t+1},u_t} \geq 1/2 - \epsilon_t$.
	
	Let $i$ be an item in $R_{T+1}$ and $j$ be an item in $[n] - R_{T+1}$. Since $j \in [n] = R_1$ and $j \notin R_{T+1}$, there is an $r$ such that $j\in R_r$ and $j\notin R_{r+1}$. We use $u_{T+1}$ to denote $i$. By the above paragraph, there exists a sequence of items $u_T \in U_T$,$u_{T-1}\in U_{T-1}$,...,$u_{r+1}\in U_{r+1}$ such that $p_{u_{t+1},u_t} \geq 1/2 - \epsilon_t$ for all $t$ in $\{T,T-1,T-2,...,r+1\}$. Also, since item $j$ is in $R_{r}$ but not in $R_{r+1}$, we have $p_{u_{r+1},j} \geq 1/2 - \epsilon_{r}$. By this sequence, we conclude that
	\begin{align}
		p_{i,j} = & p_{u_{T+1},j} \nonumber \\
		\geq & p_{u_T,j} - \epsilon_T \nonumber \\
		\geq & p_{u_{T-1},j} - \epsilon_T - \epsilon_{T-1} \nonumber \\
		\vdots \nonumber \\
		\geq & p_{u_{r+1},j} - \sum_{s=r+1}^T\epsilon_s \nonumber \\
		\geq & 1/2 - \sum_{s=r}^T\epsilon_s\nonumber \\
		\geq & 1/2 - \sum_{s=1}^\infty\epsilon_s = 1/2 - \epsilon. \nonumber 
	\end{align}
	Thus, when $\mathcal{E}^t_{good}$ happens for all $t$, the returned set of EQS is an $(\epsilon, k)$-optimal subset of $[n]$. By Eq.~(\ref{Eq:Pgood}), the joint event $\cap_{t=1}^T\mathcal{E}^t_{good}$ happens with probability at least $1-\delta$.
	This completes the proof of the correctness.
	
	\textbf{Proof of the sample complexity.} At each round $t$, there are $\lceil |R_t|/m\rceil$ calls of EQS. Each call of EQS involves at most $2k$ items with parameters $k$ (or less), $\epsilon_t$, and $\delta_t$, and returns at most $k$ items. Thus, we have $|R_{t+1}| = \lceil |R_{t}|/(2k)\rceil k$. If $|R_t| \leq \lceil n / (2^{t-1}k) \rceil k$, then we have $|R_{t+1}| \leq \lceil \lceil n / (2^{t-1}k) \rceil / 2 \rceil k \leq \lceil n / (2^{t}k) \rceil k$. Also, we have $|R_1| = n \leq \lceil n/k\rceil k$, and thus, by induction, for any $t$, 
	\begin{align}
		|R_t| \leq \lceil n / (2^{t-1}k) \rceil k \leq c_3 n\cdot 2^{-t}, \nonumber
	\end{align}
	where $c_3 > 0$ is some universal constant. By the fact that $|R_t| \geq k$, we also get that the number of EQS called by round $t$ is at most 
	\begin{align}
	\lceil {|R_t|}/({2k}) \rceil  \leq  {|R_t|}/{k} \leq {c_3 n}\cdot{2^{-t}/k}. \nonumber
	\end{align}
	
	Let $c_4 > 0$ be the hidden constant factor in the sample complexity stated in Theorem~\ref{Thm:TP-EQS}.We conclude that the expected number of comparisons conducted by TKS is 
	\begin{align}
		\mathbb{E}[N] \leq & \mathbb{E}\Big\{\sum_{t=1}^T\Big[\Big\lceil\frac{|R_t|}{2k}\Big\rceil\cdot c_4\Big(\frac{2k}{\epsilon_t^2}\log\Big(\frac{2k^2}{\delta_t}\Big)\Big)\Big]\Big\} \nonumber \\
		\leq & \sum_{t=1}^\infty\Big[\Big\lceil\frac{|R_t|}{2k}\Big\rceil\cdot c_4\Big(\frac{2k}{\epsilon_t^2}\log\Big(\frac{2k^2}{\delta_t}\Big)\Big)\Big]\nonumber \\
		\leq & \sum_{t=1}^\infty\Big[\frac{c_3 n}{2^t k}\cdot c_4\Big(\frac{2k}{\epsilon^2}\cdot\Big(\frac{5}{4}\Big)^{2t}\log\Big(\frac{2\pi^2k^2t^2}{6\delta}\Big)\Big)\Big] \nonumber \\
		= & \frac{2c_3c_4n}{\epsilon^2}\sum_{t=1}^\infty\Big[\Big(\frac{25}{32}\Big)^t\Big(2\log{t} + \log\Big(\frac{2\pi^2k^2}{6\delta}\Big)\Big) \Big] \nonumber \\
		= & O\Big(\frac{n}{\epsilon^2}\log\frac{k}{\delta}\Big). \nonumber 
	\end{align} 
	This completes the proof of the sample complexity, and the proof of Theorem~\ref{Thm:TP-TKS} is complete.
\end{proof}

\subsection{Proof of Theorem~\ref{Thm:LB-TM}}
\RestateThmLBTM*
\begin{proof}[\textbf{Proof of Theorem~\ref{Thm:LB-TM}.}]
	In this proof, we reduce the PEMAB problem \cite{LIL2014,BestArmIdentification2017} with Gaussian arms to the exact $k$-selection problem under Thurstone's model, and use the lower bound of PEMAB proved by \citet{LIL2014,BestArmIdentification2017} to prove the desired lower bound for exact $k$-selection.
		
	In the PEMAB problem with Gaussian$(0,1)$ noises, there are $n$ arms denoted by $a_1,a_2,...,a_n$. Each arm $a_i$ holds a real number $\mu_i$ denoting the mean reward of arm $a_i$. The $t$-th sample of arm $a_i$ returns a random value $R_i^t = \mu_i + Z_i^t$ as the reward, where $Z_i^t$ is a Gaussian random variable with mean 0 and variances $1$. We further assume that $(Z_i^t, a_i\in[n], t\in\mathbb{Z}^+)$ are independent.
		
	We first consider the case where $k=1$. Let $a_1,a_2,...,a_n$ be $n$ arms with noises following Gaussian distributions such that the mean reward of arm $a_i$ is $\mu_i = \theta_i$ and the variances are all $1$. Let $\mu^*$ be the largest mean reward and $\mu'$ be the second largest mean reward. To find the arm with the largest mean reward with probability at least $1-\delta$, \citet{ChenBestArmIdentification2015} proved that $\Omega(\sum_{a_i\neq a_{r_1}}|\mu^*-\mu_i|^{-2}\log\delta^{-1}) + \tilde{\Omega}(|\mu^*-\mu'|^{-2}\log\log|\mu^*-\mu'|^{-1}$) number of pulls of arms are needed in expectation. 
	
	To reduce the PEMAB problem to exact $k$-selection, we develop a Procedure~$\mathcal{P}_2$, which is descried in Procedure~\ref{Prcd:GaussianReduction}.
	
	\begin{algorithm}[h]
		\caption{$\mathcal{P}_2(a_i,a_j)$} \label{Prcd:GaussianReduction}
		\textbf{Input:} Two Gaussian arms $a_i$ and $a_j$ with unknown mean rewards $\mu_i$ and $\mu_j$, respectively;
		\begin{algorithmic}[1]
			\STATE Sample arm $a_i$ and let $R_i$ be the reward;
			\STATE Sample arm $a_j$ and let $R_j$ be the reward;
			\IF {$R_i > R_j$}
				\STATE \textbf{return} arm $a_i$;
			\ELSE 
				\STATE \textbf{return} arm $a_j$;
			\ENDIF
		\end{algorithmic}
	\end{algorithm}
	
	The probability that Procedure~$\mathcal{P}_2(a_i,a_j)$ returns arm $a_i$ is 
	\begin{align}
		p_{i,j}^{(c)} = \frac{1}{2} + \frac{1}{\sqrt{4\pi}}\int_{0}^{\mu_i - \mu_j}e^{-\frac{x^2}{4}}\mathrm{d}x. \nonumber 
	\end{align}
	Thus, given $n$ items with scores $\theta_i = \mu_i$ for all items $i$, the probability that Procedure~$\mathcal{P}_2(a_i,a_j)$ returns arms is exactly the same as the probability that a comparison between items $i$ and $j$ returns items. 
	
	Also, since for any arm $a_i$, $\mu_i = \theta_i \in [0,1]$, we have that for every two arms $a_i$ and $a_j$,
	\begin{align}
		\Big|p_{i,j}^{(c)} - \frac{1}{2}\Big| = & \frac{1}{\sqrt{4\pi}}\int_{0}^{|\mu_i-\mu_j|}e^{-\frac{x^2}{4}}\ \mathrm{d}x \nonumber \\
		\geq & \frac{|\mu_i-\mu_j|}{\sqrt{4\pi}}\cdot e^{-|\mu_i-\mu_j|^2/4} \nonumber \\
		\geq & \frac{|\mu_i-\mu_j|}{\sqrt{4\pi}}\cdot e^{-1/4}.\label{Eq:pabLB}
	\end{align}
	
	Let $\mathcal{A}$ be an exact $k$-selection algorithm. Now, for each arm $a_i$, we create an artificial item $i$, and input these $n$ artificial items into Algorithm~$\mathcal{A}$. Whenever Algorithm~$\mathcal{A}$ wants to compare two artificial items $i$ and $j$, we call Procedure~$\mathcal{P}_2(a_i,a_j)$ to mimic the comparison, i.e., if Procedure~$\mathcal{P}_2(a_i,a_j)$ returns arm $a_i$ ($a_j$), then we tell Algorithm~$\mathcal{A}$ that artificial item $i$ ($j$) wins this comparison. Since the probabilities with which Procedure~$\mathcal{P}_2(\cdot, \cdot)$ return arms are the same as the comparison probabilities under Thurstone's model, Algorithm~$\mathcal{A}$ does not notice anything strange. Thus, if Algorithm~$\mathcal{A}$ can find the best item in $[n]$ with probability $1-\delta$ by $M$ number of comparisons in expectation, one can find the best arm with probability $1-\delta$ by pulling $2M$ number of arms in expectation. 
	
	Thus, we conclude that for the exact best item selection problem, the lower bound is $\Omega(\sum_{a_i\neq a_{r_1}}|\mu^*-\mu_i|^{-2}\log\delta^{-1}) + \tilde{\Omega}(|\mu^* - \mu'|^{-2}\log\log|\mu^* - \mu'|^{-1})$. By the definition of $\Delta_i$'s, $\theta_i = \mu_i$, and Eq.~(\ref{Eq:pabLB}), we have that for any artificial item $i\neq r_1$, $\Delta_i \geq (e^{-1/4}/\sqrt{4\pi})|\mu^*-\mu_i|$ and $\Delta_{r_1} = \Delta_{r_2}$. Thus, the lower bound for best item selection is $\Omega(\sum_{i\in[n]}\Delta_{i}^{-2}\log\delta^{-1}) + \tilde{\Omega}(\Delta_{r_2}^{-2}\log\log\Delta_{r_2}^{-1})$.
	
	Next, we consider the case where $k > 1$. Let $\mathcal{A}_{k,1}$ be an algorithm which a priori knows the best $(k-1)$ items, and thus, for Algorithm~$\mathcal{A}_{k,1}$, the problem of finding the best-$k$ items is the same as finding the best item of $\{r_k,r_{k+1}, r_{k+2},...,r_{n}\}$. Thus, the expected number of comparisons conducted by any best-$k$ items selection algorithm is lower bounded by $\Omega(\sum_{i=k}^{n}\Delta_{r_i}^{-2}\log\delta^{-1}) + \tilde{\Omega}(\Delta_{r_k}^{-2}\log\log\Delta_{r_k}^{-1})$. 
	
	Similarly, let Algorithm~$\mathcal{A}_{k,2}$ be an algorithm which a priori knows the worst $(n-k-1)$ items, and thus, for Algorithm~$\mathcal{A}_{k,2}$, the problem of finding the best-$k$ items is the same as finding the worst item of $\{r_1,r_2,...,r_k,r_{k+1}\}$. Since the lower bound for finding the worst item is of the same form as finding the best item (where the definition of the gaps vary accordingly), the expected number of comparisons conducted by any best-$k$ items selection algorithm is lower bounded by $\Omega(\sum_{i=1}^{k+1}\Delta_{r_i}^{-2}\log\delta^{-1}) + \tilde{\Omega}(\Delta_{r_k}^{-2}\log\log\Delta_{r_k}^{-1})$. 
	
	Combine these two lower bounds, and the proof of Theorem~\ref{Thm:LB-TM} is complete. 
\end{proof}

\subsection{Proof of Theorem~\ref{Thm:TP-SEEBS}}
\RestateThmTPSEEBS*
\begin{proof}[\textbf{Proof of Theorem~\ref{Thm:TP-SEEBS}.}]
	\textbf{Notations.} We use round $t$ to denote the $t$-th iteration of Lines~3 to 11. For any item $i$, we use $T_i$ to denote the index of the round when $i$ is discarded (i.e., the round when $i$ is added to $S_{down}$ and not added to $R_{t+1}$). Assume that the unknown true order of these $n$ items is $r_1 \succ r_2 \succ \cdots \succ r_n$, and $r_1$ is the best item in $[n]$. Use $T$ to denote the index of the last round. The proof consists of two parts, the proof of the correctness and the proof of the sample complexity. 
	
	\textbf{Proof of the correctness,} i.e., to prove that if SEEBS returns, then the returned item is $r_1$, the best item in $[n]$. 
	
	\textbf{Hypothesis~3.} For a round $t$, we hypothesize that with probability at least $1 - \sum_{r=1}^{t-1}\delta_r$, $r_1$ is in $R_t$. 
	
	Since $R_1 = [n]$, we have that $r_1$ is in $R_1$ with probability $1$. Hence, Hypothesis~3 holds for round one.
	
	Now, we let $t \geq 1$ be given and assume that Hypothesis~3 holds for round $t$. By Theorem~\ref{Thm:TP-TKS}, with probability at least $1-2\delta_t/3$, $p_{v_t,r_1} \geq 1/2 - \alpha_t/3$. Then, since $r_1 \succeq v_t$, i.e., $p_{r_1, v_t} \geq 1/2$, by Lemma~\ref{Lm:TP-DI}, we have that given $p_{v_t,r_1} \geq 1/2 - \alpha_t/3$, with probability at least $1 - \delta_t/3$, item $r_1$ is not added to $S_{down}$. Thus, if Hypothesis~3 holds for round $t$, with probability at least $1-\delta_t$, item $r_1$ is not discarded in round $t$, which implies that with probability at least $1-\sum_{r=1}^{t-1}\delta_r - 2\delta_t/3 - \delta_t/3 = 1 - \sum_{r=1}^{t}\delta_r$, item $r_1$ is in $R_{t+1}$. 
	
	Therefore, if Hypothesis~3 holds for round $t$, it will hold for round $t+1$. Hypothesis~3 also holds for round one. By induction, Hypothesis~3 holds for all rounds $t$, i.e., with probability at least $1-\sum_{r=1}^{t-1}\delta_r$, item $r_1$ is in $R_t$. Since $1-\sum_{r=1}^T\delta_r \geq 1 - \sum_{r=1}^\infty\delta_r = 1-\delta$, with probability at least $1-\delta$, $r_1$ is in $R_{T+1}$, i.e., the returned item is $r_1$. This completes the proof of the correctness.
	
	\textbf{Proof of the sample complexity.} In the proof of the sample complexity, we assume that the returned item is $r_1$, which happens with probability at least $1-\delta$. Since the algorithm terminates when there is only one item remaining, we have $T \leq \max_{i\neq r_1}T_i$.
	
	Let $N$ denote the number of comparisons conducted by SEEBS. In round $t$, the comparisons are conducted by the calls of TKS (Line~4) and DI (Line~7). By Theorem~\ref{Thm:TP-TKS}, the expected number of comparisons conducted by TKS is at most $O(|R_t|\alpha_t^{-2}\log\delta_t^{-1})$. By Lemma~\ref{Lm:TP-DI}, the expected number of comparisons conducted by each call of DI is at most $O(\alpha_t^{-2}\log\delta_t^{-1})$. Thus, in round $t$, the expected number of comparisons is at most $O(|R_t|\alpha_t^{-2}\log\delta_t^{-1})$. Recall that for any item $i$, $T_i$ is the index of the round when item $i$ is removed from $R_t$ or the loop ends. Also we have $T \leq \max_{i\neq r_1}T_i$. Thus, we get
	\begin{align}
		\mathbb{E}[N] \leq & c_5\mathbb{E}\Big\{\sum_{t=1}^T\Big[|R_t|\alpha_t^2\log\delta_t^{-1}\Big]\Big\} \nonumber \\
		\leq & 2c_5\mathbb{E}\Big\{\sum_{t=1}^T\Big[|R_t-\{r_1\}|\alpha_t^2\log\delta_t^{-1}\Big]\Big\} \nonumber \\
		\leq & 2c_5\sum_{i\neq r_1}\mathbb{E}\Big\{\sum_{t=1}^{T_i}\Big[\alpha_{t}^{-2}\log\delta_{t}^{-1}\Big]\Big\}, \label{Eq:NVal}
	\end{align}
	where $c_5 > 0$ is a universal constant. 
	
	For any item $i$, define $\tau_i := \inf\{t \in \mathbb{Z}^+: \alpha_t < \Delta_{i}\}$, i.e., when $t \geq \tau_i$, we have $\alpha_t < \Delta_i$. Since $\alpha_t = 2^{-t}$, we have $\tau_i \leq 1 +  \log_2{\Delta_{i}^{-1}}$. Now we show some probabilities about the values of $T_i$'s.
	
	Let item $i$ in $[n] - \{r_1\}$ and $t \geq \tau_i$ be given. When $t\geq \tau_i$, we have $\alpha_t < \Delta_{i,r_1}$, i.e., $p_{i,r_1} =1/2 - \Delta_{i,r_1} < 1/2 - \alpha_t$. We have shown that $1/2 - \alpha_t/3 \leq p_{v_t, r_1} \leq 1/2$, which by the definitions of SST and STI implies that $p_{i, v_t} \leq 1/2 - (\Delta_{i,r_1} - \Delta_{r_1,v_t}) < 1/2 - 2\alpha_t/3$. By Lemme~\ref{Lm:TP-DI}, at round $t$, with probability at least $1- \delta_t/3$, item $i$ is added to $S_{down}$ (i.e., item $i$ is discarded). Thus, we have 
	\begin{align}
		\mathbb{P}\{T_i > t \mid T_i > t-1\} \leq \delta_t/3, \nonumber 
	\end{align}
	which by $\delta_t \leq 1/2$, implies that for any $r$ in $\mathbb{Z}^+$,
	\begin{align}
		\mathbb{P}\{T_i \geq \tau_i + r\}  \leq ({5}/{6})\cdot({1}/{6})^{r-1} = 5\cdot 6^{-r}. \nonumber 
	\end{align}
	Thus, by $\tau_i \leq 1 + \log_2\Delta_{i,r_1}^{-1} = O(\log\Delta_{i,r_1}^{-1})$ and $x+y \leq 2xy$ when $x,y\geq 1$, we have
	\begin{align}
		\mathbb{E}\Big\{&\sum_{t=1}^{T_i}\Big[ \alpha_{t}^{-2}\log\delta_{t}^{-1}\Big]\Big\} \nonumber \\
		\leq & \sum_{t=1}^{\tau_i}\Big[\alpha_{t}^{-2}\log\delta_{t}^{-1}\Big] + \sum_{r=1}^{\infty}\Big[5\cdot 6^{-r}\cdot \alpha_{\tau_i+r}^{-2}\log\delta_{\tau_i+r}^{-1}\Big] \nonumber \\
		= & \sum_{t=1}^{\tau_i}\Big[4^t\log\Big(\frac{\pi^2t^2}{6\delta}\Big)\Big] \nonumber \\
		& + \sum_{r=1}^{\infty}\Big[5\cdot 6^{-r}\cdot 4^{\tau_i+r}\log\Big(\frac{\pi^2(\tau_i+r)^2}{6\delta}\Big)\Big] \nonumber \\
		\leq & c_6\sum_{t=1}^{\tau_i}\Big(4^t\log\frac{\tau_i}{\delta}\Big) \nonumber \\
		& + c_6\cdot 4^{\tau_i}\sum_{r=1}^{\infty}\Big[\Big(\frac{4}{6}\Big)^{-r}\cdot\log\Big(\frac{(\tau_i+r)}{\delta}\Big)\Big] \nonumber \\
		\leq & c_7\cdot 4^{\log{\Delta_{i,r_1}^{-1}}}\log\Big(\frac{\log{\Delta_{i,r_1}^{-1}}}{\delta}\Big) \nonumber \\
		& + c_7 * 4^{\log{\Delta_{i,r_1}^{-1}}}\sum_{r=1}^{\infty}\Big[\Big(\frac{4}{6}\Big)^{-r}\cdot\log\Big(\frac{2r\log{\Delta_{i,r_1}^{-1}}}{\delta}\Big)\Big] \nonumber \\
		\leq & c_8\Delta_{i,r_1}^{-2}(\log\delta^{-1} + \log\log\Delta_{i,r_1}^{-1}), \label{Eq:SEEBS-UB}
	\end{align}
	where $c_6,c_7,c_8>0$ are three universal constants.
	
	Thus, by Eq.~(\ref{Eq:NVal}) and Eq.~(\ref{Eq:SEEBS-UB}), we conclude that
	\begin{align}
		\mathbb{E}[N] =  O(\sum_{i\in[n]}[\Delta_{i}(\log(n/\delta)+\log\log\Delta_i)]), \nonumber 
	\end{align}
	which completes the proof of the sample complexity, and the proof of Theorem~\ref{Thm:TP-SEEBS} is complete.
\end{proof}

\subsection{Proof of Theorem~\ref{Thm:TP-SEEKS}}
\RestateThmTPSEEKS*
\begin{proof}[\textbf{Proof of Theorem~\ref{Thm:TP-SEEKS}.}]
	\textbf{Notations.} We use round $t$ to denote the $t$-th iteration of Lines~3 to 14. For any item $i$, we use $T'_i$ to denote the index of the round when $i$ is assured (i.e., the round when item $i$ is added to $S_{up}$ or $S_{down}$ and not added to $R_{t+1}$) and define $T_i := \min\{T, T_i\}$ as the index of the last round when item $i$ is involved in some comparisons. We use $T$ to denote the index of the last round. Assume that the unknown true order of these $n$ items is $r_1 \succ r_2 \succ \cdots \succ r_n$. Define $U := \{r_1, r_2,..., r_k\}$ as the set of the best-$k$ items, and $U_t := U \cap R_t$. 
	
	
	\textbf{Proof of the correctness,} i.e., to prove that if SEEKS returns, then the returned set is $U$ with probability at least $1-\delta$. We prove the correctness by induction. 
	
	\textbf{Hypothesis~4}. Let $t \leq T + 1$ be given. We hypothesize that $S_t \subset U \subset R_t \cup S_t$ with probability at least $1-\sum_{r=1}^{t-1}\delta_{r}$. 
	
	When $t=1$, we have $R_1 = [n]$ and $S_1 = \emptyset$, which implies that $S_1 = \emptyset \subset U \subset [n] = R_1$ with probability 1. Thus, Hypothesis~4 holds for $t=1$. Now, we consider the case where $t \geq 2$. 
	
	First, we bound an event. Let $\mathcal{E}_t$ be the event that in round $t$,  all the calls of TKS, TKS2, and DIs return correct results. By Theorem~\ref{Thm:TP-TKS}, Lemma~\ref{Lm:TP-DI}, and the union bound, we have
	\begin{align}
		\mathbb{P}\{\mathcal{E}_t\} \geq & 1 - \frac{\delta_t}{3} - \frac{\delta_t}{3} - \frac{\delta_t}{3(|R_t|-1)}\cdot(|R_t|-1) \nonumber \\
		\geq & 1 - \delta_t. \nonumber 
	\end{align}
	In the proof of the correctness, we assume $\mathcal{E}_t$ happens. 
	
	
	Second, we show a useful property of the pivot $v_t$. In each iteration, items in $S_{up}$ are added to $S_t$ and $k_t$ is decreased by $|S_{up}|$, and thus, $k_t = k - |S_t|$. By Hypothesis~4, we have $S_t \subset U \subset R_t \cup S_t$, $U_t \subset R_t$, and $S_t \cap R_t = \emptyset$, and thus, $U_t = U - S_t$ and $|U_t| = |U - S_t| = k - |S_t| = k_t$. By Theorem~\ref{Thm:TP-TKS}, for any item $i$ in $A_t$ and $j$ in $(R_t - A_t)$, we have $p_{i,j} \geq 1/2 - \alpha_t/3$. If $U_t = A_t$, then we have $v_t \succeq r_k$, which implies that $p_{v_t,r_k} \geq 1/2 > 1/2 - \alpha_t/3$. If $U_t\neq A_t$, then $R_t - A_t$ contains some item $v$ in $U$ (which implies that $v \succeq k$), and thus, $p_{v_t,r_k} \geq p_{v_t, v} \geq 1/2 - \alpha_t/3$. Thus, in both cases, we have $p_{v_t, r_k} \geq 1/2 - \alpha_t/3$. 
	
	For Line~5, we recall that TKS2 is almost the same as TKS with the only difference being that TKS2 is used for finding the PAC worst items. By Theorem~\ref{Thm:TP-TKS}, we have that for any item $j$ in $A_t - \{v_t\}$, $p_{v_t, j} \leq 1/2 + \alpha_t/3$. Since $|A_t| = |U_t| = k_t$ and $A_t\cap U \subset R_t \cap U \subset U_t$, $m_t$ the worst item in $A_t$ has $r_k \succeq m_t$. Thus, $p_{v_t, r_k} \leq p_{v_t, m_t} \leq 1/2 + \alpha_t/3$. 
	Therefore, we conclude 
	\begin{align}
		1/2 - \epsilon/3 \leq p_{v_t,r_k} \leq 1/2 + \alpha_t/3.\label{Eq:pvt}
	\end{align}
	
	The third step is to show that in round $t$, $S_{up} \subset U_t$ and $S_{down} \cap U_t = \emptyset$. Let item $i$ in $U_t$ be given. Since $\mathcal{E}_t$ happens, the calls of DI on items $i$ and $j$ give correct results. Since item $i$ is in $U_t$, we have $p_{i,r_k} \geq 1/2$, which by Eq.~(\ref{Eq:pvt}) implies that $p_{i, v_t} \geq 1/2 - \alpha_t/3$. By Lemma~\ref{Lm:TP-DI}, item $i$ is not added to $S_{down}$. Hence, no item in $U_t$ is added to $S_{down}$, which implies $S_{down} \cap U_t = \emptyset$. 
	
	Let item $j$ in $R_t - U_t$ be given. Since $r_k \succ j$, we have $p_{r_k, j} > 1/2$, which implies that $p_{j,r_k} \leq 1/2 + \alpha_t/3$. By Lemma~\ref{Lm:TP-DI}, item $j$ is not added to $S_{up}$.  Thus, no item in $R_t - U_t$ is added to $S_{up}$, which implies $S_{up} \subset U_t$.
	
	Lastly, we show that Hypothesis~4 holds for all $t$. We have already proved that when Hypothesis~4 holds for $t$, with probability at least $1-\delta_t$ (i.e., when $\mathcal{E}_t$ happens), $S_{up} \subset U_t$ and $S_{down} \cap U_t = \emptyset$. By $S_{up} \subset U_t$ and $S_t \subset U$, we get 
	\begin{align}
		S_{t+1} = S_t \cup S_{up} \subset U. \nonumber
	\end{align} 
	By $S_{down} \cap U_t = \emptyset$ and $U \subset R_t \cup S_t$, we get
	\begin{align}
		U_t \cap (R_t - S_{t+1} - R_{t+1}) = U_t \cap S_{down} = \emptyset, \nonumber
	\end{align}
	which implies that $U_t \subset S_{t+1}\cup R_{t+1}$. Hence, 
	\begin{align}
		U = & U_t \cup (U - U_t) \nonumber \\
		= & U_t \cup ((R_t\cup S_t)\cap U - R_t \cap U) \nonumber\\
		= & U_t \cup (S_t \cap U) \nonumber\\
		\subset & R_{t+1}\cup S_{t+1} \cup S_t \nonumber \\
		= & R_{t+1} \cup S_{t+1}. \nonumber
	\end{align}
	Thus, we conclude that with probability at least $1-\sum_{r=1}^{t-1}\delta_r - \delta_t = 1 - \sum_{r=1}^t\delta_r$, $S_{t+1} \subset U \subset R_{t+1}\cup S_{t+1}$. This means that if Hypothesis~4 holds for $t$, then it holds for $t+1$. It has also been shown that when $t=1$, Hypothesis~4 holds. Thus, Hypothesis~4 holds for all $t \leq T + 1$.
	
	Therefore, with probability at least 
	\begin{align}
		1-\sum_{r=1}^{T}\delta_r \geq 1 - \sum_{r=1}^\infty\frac{6\delta}{\pi^2r^2} \geq 1-\delta, \label{Eq:Sumdelta}
	\end{align} 
	$S_{T+1} \subset U \subset R_{T+1} \cup S_{T+1}$. Also, we have $|R_{T+1} \cup S_{T+1}| \leq k$. Thus, the returned set $S_{T+1} \cup R_{T+1}$ is exactly $U$. This completes the proof of the correctness.
	
	\textbf{Proof of the sample complexity.} In the proof of the sample complexity, we assume that $\cap_{t=1}^T\mathcal{E}_t$ happens. By Eq.~(\ref{Eq:Sumdelta}), $\cap_{t=1}^T\mathcal{E}_t$ happens with probability at least $1-\delta$. Thus, with probability at least $1-\delta$, all the calls of TKS, TKS2, and DI return correct results. 
	
	Let $N$ denote the number of comparisons conducted by SEEKS. In round $t$, the comparisons are conducted by the calls of TKS (Line~4), TKS2 (Line~5), and DI (Line~8). By Theorem~\ref{Thm:TP-TKS}, the expected number of comparisons conducted by TKS is at most $O(|R_t|\alpha_t^{-2}\log(n/\delta_t))$, and that of TKS2 is at most $O(k_t\alpha_t^{-2}\log(n/\delta_t))$. By Lemma~\ref{Lm:TP-DI}, the expected number of comparisons conducted by each call of DI is at most $O(\alpha_t^{-2}\log(|R_t|/\delta_t))$. Thus, in round $t$, the expected number of comparisons is at most $O(|R_t|\alpha_t^{-2}\log(|R_t|/\delta_t)) = O(|R_t|\alpha_t^{-2}\log(n/\delta_t))$. Recall that for any item $i$, $T_i$ is the index of the round when item $i$ is assured (i.e., item $i$ is not added to $R_{t+1}$) or the the algorithm terminates. Thus, we have
	\begin{align}
		\mathbb{E}[N] \leq & c_9\mathbb{E}\Big\{\sum_{t=1}^T\Big[|R_t|\alpha_t^2\log(n/\delta_t)\Big]\Big\} \nonumber \\
		\leq & c_9\sum_{i\in[n]}\mathbb{E}\Big\{\sum_{t=1}^{T_i}\Big[\alpha_{t}^{-2}\log(n/\delta_{t})\Big]\Big\}, \label{Eq:ENRelationSEEKS}
	\end{align}
	where $c_9 > 0$ is a universal constant.
	
	Now let item $i\neq r_k$ be given. Define $\tau_i := \inf\{t \in \mathbb{Z}^+: \alpha_t < \Delta_{i,r_k}\}$, i.e., when $t \geq \tau_i$, we have $\alpha_t < \Delta_{i,r_k}$. Since $\alpha_t = 2^{-t}$, we have $\tau_i \leq 1 +  \log_2{\Delta_{i,r_k}^{-1}}$.
	
	Let $t\geq \tau_i$ be given. First, we consider the case where $i$ is in $[n]-U$. When $t\geq \tau_i$, we have $\alpha_t < \Delta_{i,r_k}$, i.e., $p_{i,r_k} = 1/2 - \Delta_{i,r_k} < 1/2 - \alpha_t$. By Eq.~(\ref{Eq:pvt}), we have $\Delta_{v_t,r_k} \leq \alpha_t/3$, which implies that $p_{i, v_t} \leq 1/2 - (\Delta_{i,r_k} - \Delta_{r_k,v_t}) < 1/2 - 2\alpha_t/3$. Since $\mathcal{E}_t$ happens, by Lemme~\ref{Lm:TP-DI}, at round $t$, item $i$ is added to $S_{down}$, i.e., item $i$ is not added to $R_{t+1}$. 
	Second, we consider the case where $i \in U - \{r_k\}$. Since $t\geq \tau_i$, we have $\alpha_t < \Delta_{i,r_k}$, i.e., $p_{i,r_k} > 1/2 + \alpha_t$. By Eq.~(\ref{Eq:pvt}), we have $\Delta_{v_t, r_k} \leq \alpha_t/3$, which implies that $p_{i, v_t} = 1/2 + \Delta_{i,v_t} \geq 1/2 + (\Delta_{i,r_k} - \Delta_{v_t,r_k}) \geq 1/2 - 2\alpha_t/3$. Since $\mathcal{E}_t$ happens, by Lemma~\ref{Lm:TP-DI}, at round $t$, item $i$ is added to $S_{up}$, i.e., item $i$ is not added to $R_{t+1}$. Thus, when $\cap_{t=1}^T\mathcal{E}_t$ happens,
	\begin{align}
		T_i \leq \tau_i \leq 1 + \log_2{\Delta_{i,r_k}^{-1}}, \nonumber
	\end{align}
	from which it follows that
	\begin{align}
		& \mathbb{E}\Big\{\sum_{t=1}^{T_i}\Big[ \alpha_{t}^{-2}\log(n/\delta_{t})\Big]\Big\} \nonumber \\
		& \leq \sum_{t=1}^{\tau_i}\Big[{4^t}\log\Big(\frac{\pi^2t^2n}{6\delta}\Big)\Big] \nonumber \\
		& \leq \sum_{t=1}^{\tau_i}\Big[4^t\log(\tau_i^2)\Big] + \sum_{t=1}^{\tau_i}\Big[4^t\log\Big(\frac{\pi^2n}{6\delta}\Big)\Big] \nonumber \\
		& \leq c_{10}\cdot 4^{\tau_i}\Big(\log\tau_i + \log\Big(\frac{\pi^2n}{6\delta}\Big)\Big) \nonumber \\
		& \leq c_{11}\cdot 4^{1+\log_{2}{\Delta_{i,r_k}}}\Big(\log(1+\log_{2}{\Delta_{i,r_k}^{-1}}) + \log(n/\delta)\Big) \nonumber\\
		& \leq c_{12}\Delta_{i,r_k}^{-2}(\log(n/\delta) + \log\log\Delta_{i,r_k}^{-1}),  \label{Eq:SEEKS-UB1}
	\end{align}
	where $c_{10},c_{11},c_{12} > 0$ are three universal constants.
	
	Also, we observe that when all items in $[n]-U$ are assured, SEEKS will terminate and conduct no more comparisons. At round $t$ with $t \geq \max_{i\in[n]-U}\tau_i = \tau_{r_{k+1}}$, since $\cap_{t=1}^T\mathcal{E}_t$ happens, all items not in $U$ are assured. Thus, we have $T_{r_1},T_{r_2},...,T_{r_k} \leq\tau_{r_{k+1}}$. Similar to Eq.~(\ref{Eq:SEEKS-UB1}), we have that for any item $i$ in $U$,
	\begin{align}
		& \mathbb{E}\Big\{\sum_{t=1}^{T_i}\Big[ \alpha_{t}^{-2}\log(n/\delta_{t})\Big]\Big\} \nonumber \\
		& \leq \sum_{t=1}^{\tau_{r_{k+1}}}\Big[\alpha_{t}^{-2}\log(n/\delta_{t})\Big] \nonumber \\
		& \leq  c_{12}\Delta_{r_{k+1},r_k}^{-2}(\log(n/\delta) + \log\log\Delta_{r_k,r_{k+1}}^{-1}). \label{Eq:SEEKS-UB2}
	\end{align}
	
	Note that for any item $i$ in $U = \{r_1,r_2,...,r_k\}$, $\Delta_i = \Delta_{i,r_{k+1}}$ and $\Delta_i = \Delta_{i,r_k} + \Delta_{r_k, r_{k+1}}$, which implies that $\min\{\Delta_{i,r_k}^{-1}, \Delta_{r_k,r_{k+1}}^{-1}\} \leq 2\Delta_i^{-1}$. Therefore, by Eq.~(\ref{Eq:ENRelationSEEKS}) and Eq.~(\ref{Eq:SEEKS-UB2}), for any item $i$ in $U$, we have
	\begin{align}
		\mathbb{E}[N]= O\big(\Delta_i^{-2}(\log(n/\delta) + \log\log\Delta_{i}^{-1})\big). \label{Eq:SEEKS-UB3}
	\end{align}
	
	Thus, by Eq.~(\ref{Eq:ENRelationSEEKS}) and Eq.~(\ref{Eq:SEEKS-UB3}), and the definition of $\Delta_i$'s stated in Eq.~(\ref{Eq:gap}), we conclude that when $\cap_{t=1}^T\mathcal{E}_t$ happens, 
	\begin{align}
		\mathbb{E}[N] \leq & c_9\sum_{i\in[n]}\mathbb{E}\Big\{\sum_{t=1}^{T_i}\Big[\alpha_{t}^{-2}\log(n/\delta_{t})\Big]\Big\} \nonumber \\
		= & O\Big(\sum_{i\in[n]}\Big[\Delta_i^{-2}(\log(n/\delta) + \log\log\Delta_i^{-1})\Big]\Big). \nonumber
	\end{align}
	This completes the proof of the sample complexity, and the proof of Theorem~\ref{Thm:TP-SEEKS} is complete.
\end{proof}

%
%
%

\end{appendix}

\end{document}